\providecommand{\U}[1]{\protect\rule{.1in}{.1in}}
\newcommand{\diff}{\mathrm{d}}
\newcommand{\E}{\mathbb{E}}
\newcommand{\EE}{\mathbb{E}}
\newcommand{\R}{\mathbb{R}}
\newcommand{\mc}{\mathcal}
\newcommand{\PP}{\mathbb{P}}
\newcommand{\QQ}{\mathbb{Q}}
\renewcommand{\ge}{\geqslant}
\renewcommand{\geq}{\geqslant}
\renewcommand{\leq}{\leqslant}
\renewcommand{\epsilon}{\varepsilon}
\theoremstyle{definition}
\newtheorem{theorem}{Theorem}
\newtheorem{lemma}[theorem]{Lemma}
\newtheorem{proposition}[theorem]{Proposition}
\theoremstyle{definition}
\newtheorem{definition}{Definition}[section]
\newtheorem{example}{Example}[section]
\theoremstyle{remark}
\newtheorem{remark}{Remark}[section]
\theoremstyle{definition}
\renewcommand{\cite}{\citet}
\renewcommand{\cdots}{\dots}
\newcommand{\st}{\mathrm{s.t.}}
\title{Automatic Outlier Rectification via Optimal Transport}
\author{
Jose Blanchet \\
Department of Management Science \& Engineering\\
Stanford University\\
\texttt{jose.blanchet@stanford.edu}
\AND
Jiajin Li \\
Department of Management Science \& Engineering\\
Stanford University\\
\texttt{jiajinli@stanford.edu}
\AND
Markus Pelger \\
Department of Management Science \& Engineering\\
Stanford University\\
\texttt{mpelger@stanford.edu}
\AND
Greg Zanotti \\
Department of Management Science \& Engineering\\
Stanford University\\
\texttt{gzanotti@stanford.edu}
}
\newcommand{\dsg}{\nabla\hat S}
\begin{document}

\maketitle

\begin{abstract}
In this paper, we propose a novel conceptual framework to detect outliers using optimal transport with a concave cost function. Conventional outlier detection approaches typically use a two-stage procedure: first, outliers are detected and removed, and then estimation is performed on the cleaned data. However, this approach does not inform outlier removal with the estimation task, leaving room for improvement. 
To address this limitation, we propose an automatic outlier rectification mechanism that integrates rectification and estimation within a joint optimization framework. 
We take the first step to utilize the optimal transport distance with a concave cost function to construct a rectification set in the space of probability distributions. Then, we select the best distribution within the rectification set to perform the estimation task. Notably, the concave cost function we introduced in this paper is the key to making our estimator effectively identify the outlier during the optimization process. 
We demonstrate the effectiveness of our approach over conventional approaches in simulations and empirical analyses for mean estimation, least absolute regression, and the fitting of option implied volatility surfaces.

\end{abstract}

\vspace{-8pt}
\section{Introduction}
\vspace{-8pt}
Outlier removal has a long tradition in statistics, both in theory and practice. This is because it is common to have (for example, due to collection errors) data contamination or corruption. Directly applying a learning algorithm to corrupted data can, naturally, lead to undesirable out-of-sample performance. Our goal in this paper is to provide a single-step optimization mechanism based on optimal transport for automatically removing outliers. 

The challenge of outlier removal has been documented for centuries: early work is introduced in e.g. \citep{gergonne1821dissertation} and \citep{peirce1852criterion}. Yet the outlier removal problem continues to interest practitioners and researchers alike due to the danger of distorted model estimation. A natural family of approaches followed in the literature takes the form of ``two-stage'' methods, which involve an outlier removal step followed by an estimation step. Methods within this family range from rules of thumb, such as removing outliers beyond a particular threshold based on a robust measure of scale like the interquartile range \citep{tukey1977exploratory}, to various model-based or parametric distributional assumption-based tests \citep{thompson1985note}.
While the two-stage approach can be useful by separating the estimation task from the outlier removal objective, it is not without potential pitfalls. For example, outlier detection which relies on a specific fitted model may tend to overfit to a particular type of outlier, which can mask the effect of other types \citep{rousseeu1987robust}. Conversely, the outlier detection step may be overly conservative if it is not informed by the downstream estimation task; this can lead to a significant reduction in the number of observations available for model estimation in the next step, resulting in a loss of statistical efficiency \citep{he1992reweighted}.

Robust statistics, pioneered by \citep{box1953non}, \citep{tukey1960survey}, \citep{huber1964robust} and others such as \citep{hampel1968contributions, hampel1971general} offers alternative approaches for obtaining statistical estimators in the presence of outliers without removing them, particularly in parametric estimation problems such as linear regression. Beyond these, one closely related approach is the minimum distance functionals-based estimator, introduced by \citep{parr1980minimum,millar1981robust,donoho1988automatic,donoho1988pathologies,park1995robust,zhu2022generalized,jaenada2022robust}. 
This method involves projecting the corrupted distribution onto a family of distributions 
using a distribution metric and selecting the optimal estimator for the resulting distribution. However, this projection mechanism is not informed by the estimation task (such as fitting a linear regression or neural network). Thus, without additional information about the contamination model or estimation task, it can be challenging to choose an appropriate family of distributions for projection, which may lead to limitations similar to those outlined above in practice.

In this paper, we propose a novel approach to integrate both outlier detection and estimation in a joint optimization framework. A key observation is that statisticians aim to ``clean'' data \textbf{before} decisions are made; thus, ideal robust statistical estimators tend to be optimistic. To address this, we introduce the \textit{rectification set}: a ball centered around the empirical  (contaminated) distribution and defined by the optimal transport distance  (see \citep{villani2009optimal,peyre2019computational}) in probability space. This rectification set aims to exclude potential outliers and capture the true underlying distribution, allowing us to minimize the expectation over the ``best-case'' scenarios, leading to a min-min problem. The study in \cite{jiang2024distributionally} also explores connections with min-min-type problems, but concentrates on the construction of  artificially constructed rectification sets to cover certain existing estimators from the robust statistics literature. However, our primary focus is on introducing a novel rectification set, which is based on the optimal transport approach with a concave cost function. 

To automatically detect outliers during the estimation process, one of our main contributions is the use of a concave cost function for the optimal transport distance. This function encourages what we refer to as ``long haul'' transportation, in which the optimal transport plan moves only a portion of the data to a distant location, while leaving other parts unchanged.
This strategic approach effectively repositions identified outliers closer to positions that better align with the clean data. Our novel formulation then involves minimizing the expected loss under the optimally rectified empirical distribution. This rectification is executed through the application of an optimal transport distance with a concave cost function, thereby correcting outliers to enhance performance within a fixed estimation task. More importantly, our method distinguishes itself from distributionally robust optimization (DRO)~\citep{ben2013robust,bayraksan2015data,wiesemann2014distributionally,delage2010distributionally,gao2022distributionally,blanchet2017distributionally,shafieezadeh2015distributionally,shafieezadeh2019regularization,sinha2018certifying,kuhn2019wasserstein} due to the distinctive correction mechanisms initiated by the robust formulation employing a min-min strategy.

As we discuss in Section \ref{sec:formulation} below, the timing of error generating differs crucially between DRO and robust statistics. 
DRO approach employs a min-max game strategy to control the worst-case loss over potential post-decision distributional shifts.  In contrast, the robust estimator acts after the pre-decision distributional contamination materializes. Thus the approach of robust statistics can be motivated as being closer to a max-min game against nature. As a consequence, in robust statistics, the adversary moves first, and therefore the statistician can be more optimistic that they can rectify the contamination applied by  the nature thus motivating the min-min strategy suggested above. However, it is also worth mentioning that \citep{kang2023distributionally, nietert2024outlier} are also trying to formulate the outlier-robust problem in a min-max form, incorporating additional information about the contamination model to further shrink the ambiguity set. 

We summarize our main contributions as follows:

\vspace{-4pt}
\textbf{\textrm{(i)} Novel statistically robust estimator.} We propose a new statistically robust estimator that incorporates a novel rectification set constructed using the optimal transport distance. By employing a concave cost function within the optimal transport distance, our estimator enables automatic outlier rectification. We prove that the optimal rectified distribution can be found via a finite convex program, and show that we can determine the optimal rectification set uncertainty budget via cross-validation.

\vspace{-4pt}
\textbf{\textrm{(ii)} Connection to adaptive quantile regression.} For mean estimation and least absolute regression, we demonstrate that our robust estimator is equivalent to an adaptive quantile (regression) estimator, with the quantile controlled by the budget parameter $\delta$. Furthermore, we prove that the optimal rectified distribution exhibits a long haul structure that facilitates outlier detection.

\vspace{-4pt}
\textbf{\textrm{(iii)} Effectiveness in estimating the option implied volatility surface.} We evaluate our estimator on various tasks, including mean estimation, least absolute regression, and option implied volatility surface estimation. Our experimental results demonstrate that our estimator produces surfaces that are 30.4\% smoother compared to baseline estimators, indicating success in outlier removal, as surfaces should be smooth based on structural financial properties. Additionally, it achieves an average reduction of 6.3\% in mean average percent error (MAPE) across all estimated surfaces, providing empirical evidence of the effectiveness of our rectifying optimal transporter.

\vspace{-10pt}
\section{DRO and Robust Statistics as Adversarial Games}
\label{sec:formulation}
\vspace{-8pt}

In this section, we summarize conceptually how robust statistics is different from DRO (for more details, we refer the reader to e.g. \cite{blanchet2024distributionally}). To lay a solid mathematical foundation, we begin by investigating a generic stochastic optimization problem. Here, we assume that $Z$ is a random vector in a space $\mc Z \subseteq\R^d$ that follows the distribution $\PP_{\star}$. The set of feasible model parameters is denoted $\Theta$ (assumed to be finite-dimensional to simplify). Given a realization $z$ and a model parameter $\theta \in \Theta$ the corresponding loss is $\ell(\theta, z)$. A standard expected loss minimization decision rule is obtained by solving
\vspace{-5pt}
\begin{align}\label{eq:ERM}
    \min_{\theta \in \Theta} ~\EE_{\PP_\star}[\ell(\theta, \xi)] = \int_{\Xi} \ell(\theta, \xi)~\diff \PP_\star (\xi).
    \vspace{-5pt}
\end{align}
Since $\PP_{\star}$ is generally unknown, to approximate the objective function in \eqref{eq:ERM}, we often gather $n$ independent and identically distributed (i.i.d.) samples $\{z_i\}_{i=1}^n$ from the unknown data-generating distribution $\mathbb{P}_{\star}$ and consider the empirical risk minimization counterpart,  
\vspace{-5pt}
\begin{align}\label{eq:erm}
    \min_{\theta \in \Theta} ~\EE_{\hat \PP_n}[\ell(\theta, Z)] = \frac{1}{n} \sum_{i=1}^{n} \ell(\theta, z_i),
    \vspace{-5pt}
\end{align}
where $\hat \PP_n$ denotes the empirical measure $\frac{1}{n}\sum_{i=1}^{n}\delta_{z_i}$ and $\delta_{z}$ is the Dirac measure centered at $z$. These problems are solved within the context of the general data-driven decision-making cycle below:

\vspace{-5pt}
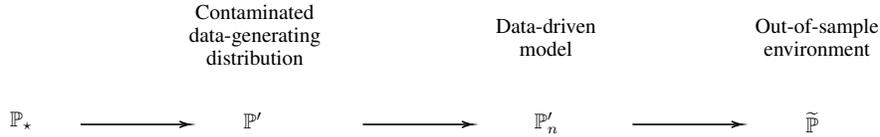
\begin{figure}[ht]
\centering
\scalebox{0.8}{
\tikzset{every picture/.style={line width=0.75pt}} 

\begin{tikzpicture}[x=0.75pt,y=0.75pt,yscale=-1,xscale=1]

\draw    (272,130.3) -- (339,130.3) ;
\draw [shift={(341,130.3)}, rotate = 180] [color={rgb, 255:red, 0; green, 0; blue, 0 }  ][line width=0.75]    (6.56,-1.97) .. controls (4.17,-0.84) and (1.99,-0.18) .. (0,0) .. controls (1.99,0.18) and (4.17,0.84) .. (6.56,1.97)   ;
\draw    (94,130.3) -- (161,130.3) ;
\draw [shift={(163,130.3)}, rotate = 180] [color={rgb, 255:red, 0; green, 0; blue, 0 }  ][line width=0.75]    (6.56,-1.97) .. controls (4.17,-0.84) and (1.99,-0.18) .. (0,0) .. controls (1.99,0.18) and (4.17,0.84) .. (6.56,1.97)   ;
\draw    (442,130.3) -- (509,130.3) ;
\draw [shift={(511,130.3)}, rotate = 180] [color={rgb, 255:red, 0; green, 0; blue, 0 }  ][line width=0.75]    (6.56,-1.97) .. controls (4.17,-0.84) and (1.99,-0.18) .. (0,0) .. controls (1.99,0.18) and (4.17,0.84) .. (6.56,1.97)   ;

\draw (153,53) node [anchor=north west][inner sep=0.75pt]   [align=left] {\begin{minipage}[lt]{74.18pt}\setlength\topsep{0pt}
\begin{center}
Contaminated \\data-generating\\distribution
\end{center}

\end{minipage}};
\draw (349,62) node [anchor=north west][inner sep=0.75pt]   [align=left] {\begin{minipage}[lt]{55.45pt}\setlength\topsep{0pt}
\begin{center}
Data-driven\\model
\end{center}

\end{minipage}};
\draw (512,62) node [anchor=north west][inner sep=0.75pt]   [align=left] {\begin{minipage}[lt]{67.35pt}\setlength\topsep{0pt}
\begin{center}
Out-of-sample\\environment
\end{center}

\end{minipage}};
\draw (48,121) node [anchor=north west][inner sep=0.75pt]    {$\mathbb{P}_{\star}$};
\draw (192,121) node [anchor=north west][inner sep=0.75pt]    {$\ {\mathbb{P}'}$};
\draw (372,121) node [anchor=north west][inner sep=0.75pt]    {$\ \ {\mathbb{P}}_{n}' \ $};
\draw (543,121) node [anchor=north west][inner sep=0.75pt]    {$\ \ \widetilde{\mathbb{P}}$};
\end{tikzpicture}
}
\vspace{-5pt}
\caption{Data-Driven Decision Making Cycle} 
\label{fig:decisioncycle}
\end{figure} 
\vspace{-5pt}

In the cycle depicted in Figure~\ref{fig:decisioncycle}, we usually collect $n$ i.i.d samples from the unknown data-generating distribution $\mathbb{P}'$, which may be identical to or distinct from the clean distribution $\mathbb{P}_{\star}$. Subsequently, we make a decision (e.g., parameter estimation) based on a model, ${\PP}_n'$, built from these samples. Such a model could be parametric or non-parametric.  These decisions are then put into action within the out-of-sample environment $\tilde{\mathbb{P}}$, which may or may not conform to the distribution $\mathbb{P}_{\star}$.
In this general cycle, the sample average method \eqref{eq:erm} may lead to poor out-of-sample guarantees. This motivates the use of alternative approaches. In the following paragraphs, we describe two of these approaches, DRO and robust statistics, by treating them as adversarial games. The crucial distinction lies in the \textbf{timing} of the contamination or attacks.

\textrm{(i)} (\textbf{DRO}: {\color{red}{$\widetilde{\PP} \neq {\PP}_\star ={\PP'}$}}) 
The attack occurs in the \textbf{post-decision} stage. In this scenario, the out-of-sample environment $\tilde{\mathbb{P}}$  diverges from the data-generating distribution $\mathbb{P}_{\star}$, and no contamination occurs before the decision, implying that $\mathbb{P}_{\star}=\mathbb{P}'$. For example, in adversarial deployment scenarios, malicious actors can deliberately manipulate the data distribution to compromise the performance of trained models. With full access to our trained machine learning model, the adversary endeavors to create adversarial examples specifically designed to provoke errors in the model's predictions.

To ensure good performance in terms of the optimal expected population loss over the out-of-sample distribution, the DRO framework introduces an uncertainty set $\mathcal{B}({\mathbb{P}}'_n)$ to encompass discrepancies between the in-sample-distribution ${\mathbb{P}}_n'$ and the out-of-sample distribution $\tilde{\PP}$. Subsequently, the DRO formulation minimizes the worst-case loss within this uncertainty set, thereby aiming to solve
\vspace{-3pt}
\begin{align}
\label{eq:our}
    \min_{\theta \in \Theta} \sup_{\QQ \in \mc B( \PP_n')} \EE_{\QQ}[\ell(\theta, Z)]. 
    \vspace{-5pt}
\end{align}

\textrm{(ii)} (\textbf{Robust Statistics}: {\color{blue}{$\mathbb{\widetilde{\mathbb{P}} \ =P}_{\star } \neq {\mathbb{P}}' $}})
The contamination occurs in the \textbf{pre-decision} stage.  
Many real-world datasets exhibit outliers or measurement errors at various stages of data generation and collection. In such scenarios, the observed samples are generated by a contaminated distribution $\mathbb{P}'$, which differs from the underlying uncontaminated distribution $\mathbb{P}_{\star}$. But, the out-of-sample distribution equals the original clean distribution. 
In contrast with DRO, the adversary corrupts the clean data according to a contamination model prior to training. Our objective is to clean the data during the training phase to achieve a robust classifier. It is noteworthy that the attacker does not have access to the specific model to be selected by the learner.

Given that the statistician knows that the data has been contaminated, a natural policy class to consider involves rectifying/correcting the contamination, and, for this, we introduce a rectification set $\mc R(\PP'_n)$ which models a set of possible pre-contamination distributions based on the knowledge of the empirical measure $\PP'_n$. To ensure good performance in terms of the optimal expected population loss over the clean distribution, the rectification/decontamination approach naturally induces the following min-min strategy: 
\vspace{-5pt}
\begin{align}
    \min_{\theta \in \Theta}\min_{\QQ \in\mc R ( {\PP}'_n)}\EE_{\QQ}\left[\ell(\theta, Z)\right].
    \label{eq:our2}
\end{align}
\vspace{-5pt}
Our goal in this paper is to develop \eqref{eq:our}, which is completely distinct from DRO.

\section{Automatic Outlier Rectification Mechanism}
\vspace{-8pt}
We now introduce our primary contribution by delineating \eqref{eq:our}. Our estimator is crafted to incorporate outlier rectification and parameter estimation within a unified optimization framework, facilitating automatic outlier rectification. A natural question arises from \eqref{eq:our}: how do we construct the rectification set in the space of probability distributions to correct the observed data set?
In this paper, we employ an optimal transport distance to create a ball centered at the contaminated empirical distribution $\mathbb{P}_n'$.
\begin{definition}[Rectification Set]
\label{defi:rect}
The optimal transport-based rectification set is defined as 
\begin{equation}
\mc R(\PP_n')= \{\QQ \in \mc P(\mc Z) :\mathds D_c(\QQ, \PP'_n) \leq \delta\},
\end{equation}
where $\delta>0$ is a radius, and $\mathds{D}_c$ is a specific optimal transport distance defined in Definition \ref{defi:ot}.
\end{definition}
\begin{definition}(Optimal Transport Distance~\citep{peyre2019computational,villani2009optimal}) 
\label{defi:ot}
Suppose that  $c(\cdot,\cdot):\mc Z \times \mc Z \rightarrow [0, \infty]$ is a lower semi-continuous cost function such that $c(z,z') =0$ for all $z,z'\in\mc Z$  satisfying $z=z'$. The optimal transport distance  between two probability measures $\PP, \QQ \in \mc P(\mc Z)$ is
\vspace{-8pt}
\[
\mathds D_c(\PP, \QQ) = \inf_{\pi \in \mc P(\mc Z \times \mc Z)}\Big\{\EE_{\pi}[c(Z, Z')] :\pi_Z  = \PP,\ \pi_{Z'} = \QQ\Big\}.
\]
Here, $\mathcal{P}(\mc Z \times \mc Z)$ is the set of joint probability distribution $\pi$ of $(Z,Z')$ supported on $\mathcal{Z} \times \mathcal{Z}$ while $\pi_Z $ and $\pi_{Z'}$ respectively refer to the marginals of $Z$ and $Z'$ under the joint distribution $\pi$.  
\end{definition}

If we consider $c(z,z') = \|z - z'\|^r$ as a metric defined on $\mathbb{R}^d$, where $r\in[1,\infty)$, then the distance metric $\mathds D_c^{1/r}(\PP, \QQ)$ corresponds to the $r$-th order Wasserstein distance~\citep{villani2009optimalWasserstein}.
In this paper, we pioneer the utilization of \textit{concave} cost functions, exemplified by $c(z,z') = \| z-z'\|^r$ where $r\in(0,1)$ in statistical robust estimation. We note that $\mathds D_c(\PP, \QQ)$ (as opposed to $\mathds D_c^{1/r}(\PP, \QQ)$ ) is also a metric for $r\in[0,1)$.  The rationale behind selecting a concave cost function is intuitive: it promotes what we colloquially refer to as \textit{long haul} transportation plans, enabling outliers to be automatically moved significant distances back towards the central tendency of data distribution. This, in turn, facilitates automatic outlier rectification. Concave costs promote long hauls due to their characteristic of exhibiting decreasing marginal increases in transportation cost. In other words, if an adversary decides to transport a data point by $\|\Delta\|$ units, it becomes cheaper to continue transporting the same point by an additional $\epsilon$ distance compared to moving another point from its initial location. We make further remarks about the connection between our estimator and prior work in Appendix \ref{app:estimator-prior-work}.

\textbf{Illustrative Example.} We provide empirical evidence showcasing the efficacy of the concave cost function on simulated data in Figure \ref{fig:regression-example-concave} and \ref{fig:regression-example-convex}. This example shows that our concave cost function is critical for moving the outliers (orange) properly to the bulk of the clean distribution (blue). For the concave cost, only the outliers are rectified (green), resulting in the proper line of best fit.

\vspace{-4pt}
\begin{figure}[!htbp]
    \centering
    \subfigure[$\delta=0$]{\includegraphics[width=0.32\textwidth]{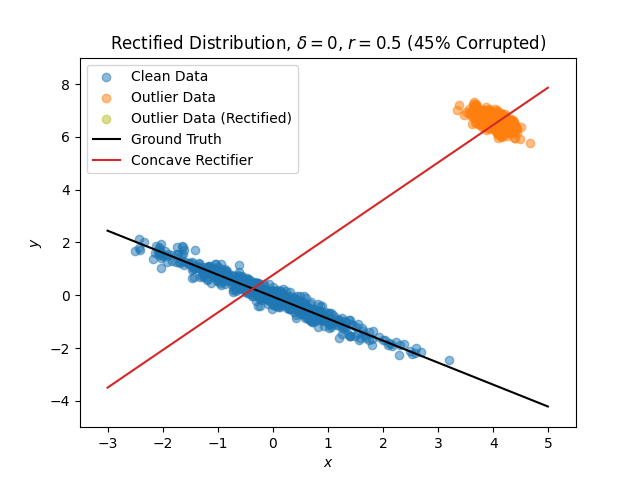}
    } 
    \subfigure[$\delta=0.9$]{\includegraphics[width=0.32\textwidth]{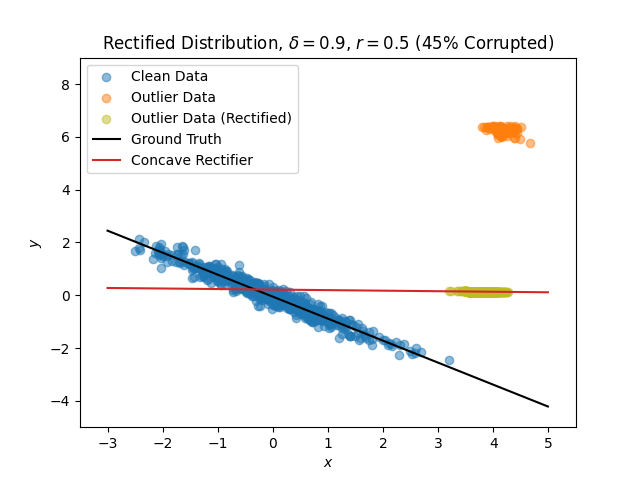}} 
    \subfigure[$\delta=1.5$]{\includegraphics[width=0.32\textwidth]{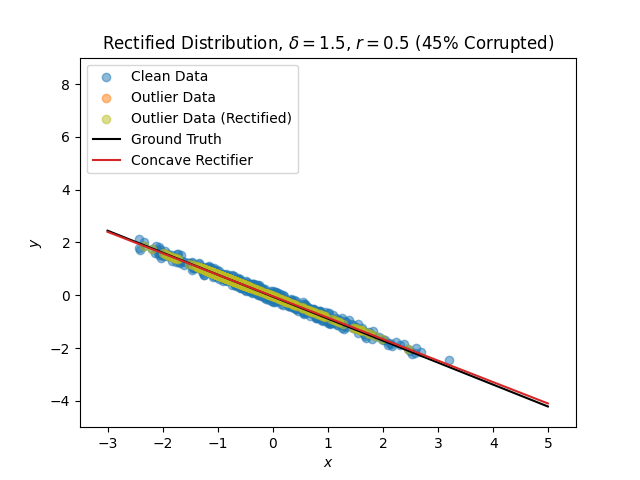}}
    \vspace{-8pt}
    \caption{The rectified data generated by our  estimator with \textbf{concave} cost function ($r=0.5)$.}
    \label{fig:regression-example-concave}
    \vspace{-2pt}
\end{figure}

\begin{figure}[!htbp]
    \centering
    \vspace{-5pt}
    \subfigure[$\delta = 0$]{\includegraphics[width=0.32\textwidth]{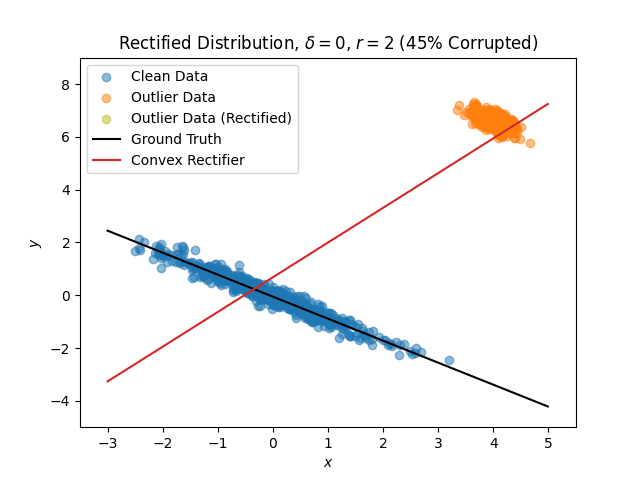}} 
    \subfigure[$\delta = 0.9$]{\includegraphics[width=0.32\textwidth]{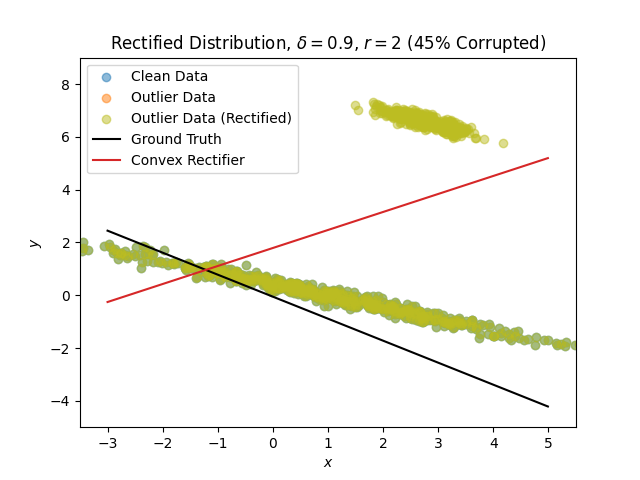}} 
    \subfigure[$\delta = 1.5$]{\includegraphics[width=0.32\textwidth]{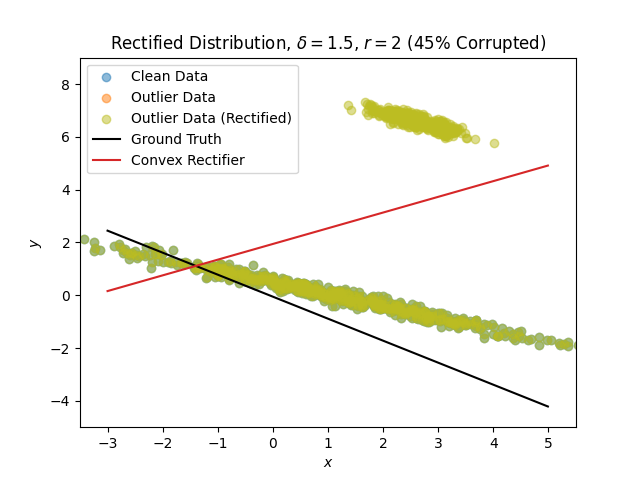}} 
    \vspace{-8pt}
    \caption{The rectified data generated by our estimator with \textbf{convex} cost function ($r=2)$.}
\label{fig:regression-example-convex}
\end{figure}
However, for the convex cost, regardless of the budget, all points both clean and corrupted are rectified towards each other instead, which results in an incorrect line of best fit. This illustrates succinctly the importance of our novel concave cost. More details on this example are given in Appendix \ref{app:illustrative-example}.

\vspace{-8pt}
\section{Reformulation Results}
\vspace{-8pt}

With this intuition in hand, we focus on the derivation of equivalent reformulations for the infinite-dimensional optimization problem over probability measures in \eqref{eq:our}. These reformulations provide us with a fresh perspective on adaptive quantile regression or estimation, where our introduction of an efficient transporter rectifies the empirical distribution, eliminating the influence of outliers.

To begin with, we can transform problem \eqref{eq:our} into an equivalent finite-dimensional problem by leveraging the following strong duality theorem.

\begin{proposition}[Strong Duality]
\label{prop:duality}
Suppose that $\ell(\theta,\cdot)$ is lower semicontinuous and integrable under $\PP_n'$ for any $\theta \in \Theta$. Then, the strong duality holds, i.e.,
\vspace{-3pt}
\begin{equation*}
\inf \limits_{\QQ \in \mc R(\PP'_n)} \EE_\QQ[\ell(\theta;Z)] = \max_{\lambda \ge 0} \E_{\PP'_n}\left[\min_{z\in\mc Z}\ell(\theta;z)
+\lambda c(z,Z')\right]-\lambda\delta.    
\end{equation*}
\end{proposition}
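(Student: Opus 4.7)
The plan is to exploit the finitely supported structure of $\PP_n'$ so that the infinite-dimensional optimization over $\QQ$ decomposes atom-by-atom, then apply Lagrangian duality to the single budget constraint $\mathds{D}_c(\QQ,\PP_n')\le\delta$. First, unfolding $\mathds{D}_c$ via Definition \ref{defi:ot}, I would rewrite the primal as the linear program over couplings
\[
\inf_{\pi \in \mc P(\mc Z\times\mc Z)}\Big\{\EE_\pi[\ell(\theta,Z)] : \pi_{Z'} = \PP_n',\ \EE_\pi[c(Z,Z')] \le \delta\Big\}.
\]
Because $\PP_n' = \tfrac{1}{n}\sum_{i=1}^n \delta_{z_i'}$ is atomic, every feasible $\pi$ disintegrates as $\pi = \tfrac{1}{n}\sum_i \delta_{z_i'} \otimes \nu_i$ with $\nu_i \in \mc P(\mc Z)$, and the problem reduces to choosing the $n$ conditional laws $\nu_i$ subject to the single scalar constraint $\tfrac{1}{n}\sum_i \EE_{\nu_i}[c(Z,z_i')] \le \delta$.

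Next, I would introduce a multiplier $\lambda \ge 0$ for this constraint. Weak duality gives
\[
\inf_{\QQ\in\mc R(\PP_n')}\EE_\QQ[\ell(\theta,Z)] \;\ge\; \sup_{\lambda\ge 0}\ \frac{1}{n}\sum_{i=1}^n \inf_{\nu_i \in \mc P(\mc Z)}\EE_{\nu_i}\bigl[\ell(\theta,Z) + \lambda\, c(Z,z_i')\bigr] - \lambda\delta.
\]
For each $i$, since $\EE_{\nu_i}[f]\ge \inf_z f(z)$ with equality attained in the limit by Diracs at near-minimizers, the inner optimization collapses to a pointwise minimum, $\inf_{\nu_i}=\inf_{z\in\mc Z}\{\ell(\theta,z)+\lambda c(z,z_i')\}$; this becomes the stated $\min_z$ under the lower-semicontinuity hypothesis together with the mild growth property of $c$ (which is automatic for $c(z,z')=\|z-z'\|^r$ with $r>0$).

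For the equality direction (strong duality), I would invoke Sion's minimax theorem on the Lagrangian, which is linear in $(\nu_1,\ldots,\nu_n)\in\mc P(\mc Z)^n$ and affine (hence concave) in $\lambda$; the $\nu_i$ variables lie in a weakly compact set after a standard tightness truncation argument, and $\lambda$ can be confined to a bounded interval $[0,\Lambda]$ (because the dual objective has slope $-\delta$ at infinity in $\lambda$). Equivalently, the derivation is structurally identical to the OT-DRO duality of \citet{blanchet2017distributionally} or Gao--Kleywegt, with $\sup_\pi$ replaced by $\inf_\pi$ amounting only to a sign flip; the Slater condition $\EE_\pi[c]=0<\delta$ is satisfied by $\nu_i=\delta_{z_i'}$, ensuring no duality gap. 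Attainment of $\max_{\lambda\ge 0}$ then follows from concavity, continuity of the dual objective in $\lambda$, and the coercivity provided by the $-\lambda\delta$ term.

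The main obstacle is rigorously closing the duality gap when $\mc Z$ is unbounded, since the conditional measures $\nu_i$ inhabit a non-compact space and boundary attainment must be ruled out. My plan there is to show that candidate optimal $\nu_i$ are effectively supported on a bounded region determined by $\ell(\theta,\cdot)$ and $\lambda$ (using the growth of $c$ to force mass away from infinity to incur prohibitive Lagrangian cost), reducing to the compact case where Sion applies; alternatively, one can directly verify the hypotheses of the known semi-infinite LP duality for OT problems with a Slater point.
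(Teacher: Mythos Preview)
Your proposal is correct, but the paper's own argument is considerably shorter: it simply observes that
\[
\inf_{\QQ \in \mc R(\PP'_n)} \EE_\QQ[\ell(\theta;Z)] \;=\; -\sup_{\QQ \in \mc R(\PP'_n)} \EE_\QQ[-\ell(\theta;Z)],
\]
and then invokes the existing strong duality theorems for Wasserstein DRO (e.g.\ \cite{blanchet2019quantifying,gao2022distributionally,mohajerin2018data,zhang2022simple}) applied to the loss $-\ell(\theta,\cdot)$, which is upper semicontinuous precisely because $\ell(\theta,\cdot)$ is assumed lower semicontinuous. You in fact mention this sign-flip route yourself as an equivalent alternative, so your main line via disintegration of couplings, Lagrangian relaxation of the budget constraint, and Sion's theorem amounts to re-deriving the cited DRO duality from scratch in the special case of an atomic reference measure. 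That buys you a self-contained argument and makes the role of the Slater point $\nu_i=\delta_{z_i'}$ explicit, at the cost of having to manage the non-compactness of $\mc P(\mc Z)$ by hand; the paper's approach outsources all of that to the literature in one line.
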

\vspace{-5pt}

The proof is essentially based on the strong duality results developed for Wasserstein distributionally robust optimization problems~\citep{zhang2022simple,litikhonov,blanchet2019quantifying,gao2022distributionally,mohajerin2018data}, which allows us to rewrite the original problem \eqref{eq:our} as $\inf_{\QQ \in \mc R(\PP'_n)} \EE_\QQ[\ell(\theta;Z)] = -\sup_{\QQ \in \mc R(\PP'_n)}\EE_\QQ[-\ell(\theta;Z)]$.

We proceed to examine several representative examples to better understand the proposed statistically robust estimator~\eqref{eq:our}. We begin with mean estimation to showcase our estimator's performance on one of the most classic problems of point estimation which can be easily understood. We then give an example for least absolute deviations (LAD) regression, one of the most important problems of robust statistics. LAD regression builds a conceptual foundation which leads into a discussion of our framework in more general cases and problem domains.

\vspace{-5pt}
\subsection{Mean Estimation}
\vspace{-5pt}

The mean estimation task is widely recognized as a fundamental problem in robust statistics, making it an essential example to consider. In this context, we define the loss function as $\ell(\theta;z) = \|\theta-z\|$. It is worth noting that when $\delta=0$, Problem \eqref{eq:our} is equivalent to the median, which has already been proven effective in the existing literature.
However, beyond the equivalence to the median, there are additional benefits to be explored regarding the proposed rectification set. By deriving the equivalent reformulation and analyzing the optimal rectified distribution, we can gain further insights into how the proposed statistically robust  estimator operates. This analysis provides valuable intuition into the workings of the estimator and its advantages.

\begin{theorem}[Mean Estimation]
\label{thm:mean}
Suppose that $\mc Z =\R^d$, $\ell(\theta;z) = \|\theta-z\|$ and the cost function is defined as $c(z,z')=\|z-z\|^r$ where $r\in(0,1)$. Without loss of generality, suppose that the  following condition holds 
\begin{equation}
\|\theta -z_1'\|\leq \|\theta -z_2'\|\leq \cdots\leq \|\theta -z_n'\|.
\label{eq:condition}
\end{equation}
Then, we have the inner minimization of \eqref{eq:our2} as
\begin{align}
& 
\max\left(\frac{1}{n}\sum_{i=1}^{k(\theta)-1} \|\theta-z_i'\|+\frac{1}{n}\left(1-\frac{n\delta-\sum_{i={k(\theta)+1}}^n \|\theta-z_i'\|^r}{\|\theta-z_{k(\theta)}'\|^{r}}\right)\|\theta-z_{k(\theta)}'\|,0\right).
\label{eq:mean}
\end{align}
where $k(\theta):=\max\limits_{k\in[n]} \left\{k:\tfrac{1}{n}\sum_{i=k}^n\|\theta-z_i'\|^r \ge \delta \right\}$. 
\end{theorem}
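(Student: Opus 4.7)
My plan is to apply the strong duality in Proposition \ref{prop:duality} to reduce the infinite-dimensional inner minimization to a one-dimensional concave maximization over $\lambda\ge 0$, and then explicitly evaluate that maximum. By Proposition \ref{prop:duality}, writing $a_i:=\|\theta-z_i'\|$, I want to show
\begin{equation*}
\inf_{\QQ\in\mc R(\PP'_n)} \EE_\QQ[\|\theta-Z\|] = \max_{\lambda\ge 0} \frac{1}{n}\sum_{i=1}^{n} \min_{z\in\R^d}\bigl[\|\theta-z\| + \lambda\|z-z_i'\|^r\bigr] - \lambda\delta.
\end{equation*}
The first task is therefore to solve the inner minimization over $z$ for each fixed $i$ and $\lambda$. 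I will argue that the minimizer lies on the segment joining $\theta$ and $z_i'$: for any $z$ off this segment, projecting onto it weakly decreases both terms. Parameterizing $z=(1-t)\theta+tz_i'$ for $t\in[0,1]$, the objective becomes $g(t)=(1-t)a_i+\lambda t^r a_i^r$, which is \emph{concave} in $t$ because $r\in(0,1)$. Concave functions on a compact interval attain their minimum at an endpoint, so
\begin{equation*}
\min_{z\in\R^d}\bigl[\|\theta-z\| + \lambda\|z-z_i'\|^r\bigr] = \min\bigl(a_i,\,\lambda a_i^r\bigr).
\end{equation*}

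With this closed form in hand, and because $a_1\le a_2\le\cdots\le a_n$ implies $a_1^{1-r}\le\cdots\le a_n^{1-r}$, the dual objective $F(\lambda) = \frac{1}{n}\sum_i \min(a_i,\lambda a_i^r) - \lambda\delta$ is a continuous piecewise-linear concave function of $\lambda$ with breakpoints at $\lambda_k := a_k^{1-r}$. On the interval $\lambda\in[\lambda_{k-1},\lambda_k]$ (with $\lambda_0:=0$), the first $k-1$ minima are equal to $a_i$ and the remaining are $\lambda a_i^r$, so the slope of $F$ there is $\frac{1}{n}\sum_{i=k}^n a_i^r - \delta$. The slope is nonincreasing in $k$, so by the definition $k(\theta)=\max\{k:\frac{1}{n}\sum_{i=k}^n a_i^r\ge\delta\}$ the slope is $\ge 0$ for $k\le k(\theta)$ and $<0$ for $k>k(\theta)$. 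Hence the maximum is attained at $\lambda^\star = \lambda_{k(\theta)} = a_{k(\theta)}^{1-r}$.

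To finish, I will substitute $\lambda^\star$ into $F$, using the convention that terms indexed $i\le k(\theta)-1$ contribute $a_i$ and terms $i\ge k(\theta)$ contribute $\lambda^\star a_i^r$, and simplify. Separating the $i=k(\theta)$ term in the second sum and using $\lambda^\star a_{k(\theta)}^r = a_{k(\theta)}$, the remainder groups as $\frac{1}{n}a_{k(\theta)}\bigl(1-\frac{n\delta-\sum_{i=k(\theta)+1}^n a_i^r}{a_{k(\theta)}^r}\bigr)$, yielding exactly the expression \eqref{eq:mean} inside the outer $\max(\cdot,0)$. The outer $\max$ with $0$ accounts for the edge case in which $\delta\ge\frac{1}{n}\sum_{i=1}^n a_i^r$; then no index satisfies the defining inequality for $k(\theta)$, the dual slope is negative already on $[0,\lambda_1]$, the maximizer is $\lambda^\star=0$, and $F(0)=0$. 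The principal technical obstacle is the concavity argument for the inner minimum, since it is what converts a continuous, potentially complicated optimization into a simple endpoint comparison and thereby unlocks the clean piecewise-linear structure; the rest is careful indexing and algebraic simplification.
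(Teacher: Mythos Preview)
Your proposal is correct and follows essentially the same route as the paper: apply strong duality (Proposition~\ref{prop:duality}), reduce the inner minimization to an endpoint comparison via concavity of $t\mapsto (1-t)a_i+\lambda t^r a_i^r$ (the paper's Lemma~\ref{lemma:1}), and then maximize the resulting piecewise-linear concave function of $\lambda$ by locating the last interval with nonnegative slope (the paper's Lemma~\ref{lemma:lambda}). One trivial slip: with your stated parameterization $z=(1-t)\theta+tz_i'$ the objective is $ta_i+\lambda(1-t)^ra_i^r$, not $(1-t)a_i+\lambda t^ra_i^r$, but this is immaterial since the endpoint values coincide.
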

\vspace{-10pt}
We give the proof in Appendix \ref{app:proof}.

\begin{remark}
The resulting reformulation problem can be viewed as finding a quantile of $\|\theta-z'\|$ controlled by the budget $\delta$. 
If we have a sufficient budget $\delta$ such that $\mathbb{E}_{\mathbb{P}'_n}[\|\theta-Z'\|^r]\leq \delta$, it implies that all data points have been rectified to the value of $\theta$. Consequently, the minimum value of $\min_{\mathbb{Q} \in \mathcal{R}(\mathbb{P}_n')} \mathbb{E}_{\mathbb{Q}}[\ell(\theta,Z)]$ will be equal to zero. In nontrivial cases, when given a budget $\delta>0$ and the current estimator $\theta$, our objective is to identify and rectify the outliers in the observed data. To achieve this, we start by sorting the data points based on their loss value $\|\theta-z_i'\|$. 
We relocate the data points starting with the one having the largest loss value, $z_n'$. The goal is to move each data point towards the current mean estimation $\theta$ until the entire budget is fully utilized.
\end{remark}

Building upon the proof of Theorem \ref{thm:mean}, we can establish the following characterization of the rectified distribution. The concave cost function $\|z-z'\|^r$ (with $r\in(0,1)$) plays a pivotal role in this context by endowing the perturbation with a distinctive long haul structure. Intuitively, for each data point, we can only observe two possible scenarios: either the perturbation is zero, indicating no movement or the data point is adjusted to eliminate the loss. In this process, the rectified data points are automatically identified as outliers and subsequently rectified.

\begin{proposition}[Characterization of Rectified Distribution] 
\label{prop:rectified-distribution}
Assuming the same conditions as stated in Theorem \ref{thm:mean}, we can conclude that the optimal distirbution $\QQ^\star$
\[\QQ^\star (dz) = \frac{1}{n}\sum_{i=1}^{k(\theta)-1}\delta_{z_i'}(dz)+\frac{\eta}{n}\delta_{z'_{k(\theta)}}+\frac{n-k(\theta)+1-\eta}{n}\delta_{\theta}(dz)\]
where $\eta = 1-\frac{n\delta-\sum_{i={k(\theta)+1}}^n \|\theta-z_i'\|^r}{\|\theta-z_{k(\theta)}'\|^{r}}$.
\end{proposition}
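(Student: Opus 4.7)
The plan is to verify directly that the proposed atomic measure $\QQ^\star$ is both feasible for the rectification ball $\mc R(\PP'_n)$ and attains the infimum value identified in Theorem~\ref{thm:mean}. Since Theorem~\ref{thm:mean} already pins down the optimal value, exhibiting any feasible $\QQ$ achieving it suffices. The explicit form of $\QQ^\star$ is precisely what one reads off from the proof of Theorem~\ref{thm:mean}: the concave cost $\|\cdot\|^r$ with $r \in (0,1)$ drives a ``long-haul'' transport plan in which each source atom either stays put or is sent all the way to $\theta$, with at most one atom split at the budget boundary.

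First I would construct an explicit coupling $\pi^\star$ of $\PP'_n$ and $\QQ^\star$: for $i < k(\theta)$, transport mass $1/n$ at $z'_i$ to itself (zero cost); for $i = k(\theta)$, leave $\eta/n$ at $z'_{k(\theta)}$ and send $(1-\eta)/n$ to $\theta$; for $i > k(\theta)$, transport all of the mass $1/n$ from $z'_i$ to $\theta$. The marginals of $\pi^\star$ are $\PP'_n$ and $\QQ^\star$ by construction, and a direct calculation gives
\[
\E_{\pi^\star}[c(Z,Z')] = \frac{1-\eta}{n}\|\theta - z'_{k(\theta)}\|^r + \frac{1}{n}\sum_{i=k(\theta)+1}^{n}\|\theta - z'_i\|^r,
\]
which equals $\delta$ precisely by the definition of $\eta$. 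Hence $\mathds D_c(\QQ^\star, \PP'_n) \le \delta$, so $\QQ^\star \in \mc R(\PP'_n)$. Integrating $\|\theta - z\|$ against the three atomic pieces of $\QQ^\star$, the atoms on $\{z'_i : i < k(\theta)\}$ contribute $\tfrac{1}{n}\sum_{i=1}^{k(\theta)-1}\|\theta - z'_i\|$, the atom at $z'_{k(\theta)}$ contributes $\tfrac{\eta}{n}\|\theta - z'_{k(\theta)}\|$, and the atom at $\theta$ contributes zero. Substituting the formula for $\eta$ recovers exactly the objective value in \eqref{eq:mean}, so $\QQ^\star$ is optimal.

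The one subtlety that must be checked is that $\eta \in [0,1]$ so that $\QQ^\star$ is a genuine probability measure; this follows from the maximality in the definition of $k(\theta)$, which gives both $\tfrac{1}{n}\sum_{i=k(\theta)+1}^n \|\theta-z'_i\|^r < \delta$ (forcing $1-\eta>0$) and $\tfrac{1}{n}\sum_{i=k(\theta)}^n \|\theta-z'_i\|^r \ge \delta$ (forcing $1-\eta \le 1$). I do not expect a serious obstacle beyond this bookkeeping: the substantive content, namely that a long-haul structure is optimal rather than merely feasible, is already established inside the proof of Theorem~\ref{thm:mean} via the concavity of $t \mapsto t^r$, and Proposition~\ref{prop:rectified-distribution} simply records the structure of the primal optimizer that argument produces.
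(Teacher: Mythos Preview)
Your argument is correct and is essentially the natural way to make the paper's claim precise. The paper does not give a standalone proof of this proposition: it simply remarks that the structure ``builds upon the proof of Theorem~\ref{thm:mean}'' and appeals to \cite{yue2022linear} for existence of an optimizer. Your primal verification---exhibiting an explicit coupling with cost exactly $\delta$, checking $\eta\in[0,1]$ from the maximality defining $k(\theta)$, and matching the objective value to \eqref{eq:mean}---is exactly what one would write down to turn that remark into a proof, and it has the advantage of not needing the external existence reference.
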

\begin{remark}
The existence of optimal solutions follows directly from \citep[Theorem 2]{yue2022linear}.
\end{remark}

\vspace{-5pt}
\subsection{Least Absolute Deviation (LAD) Regression and More General Forms}
\vspace{-5pt}
\label{sec:examples}

Following the same technique, we can also derive the least absolute deviation case. 

\begin{theorem}[LAD Regression]
\label{prop:quantile}
Suppose that $\mc Z := \mc X \times \mc Y = \R^{d+1}$, $\ell(\theta,z) = \|y-\theta^Tx\|$ and the cost function is defined as $c(z,z')=\|z-z\|^r$ where $r\in(0,1)$ and $\|\cdot\|$ is the $\ell_2$ norm. Without loss of generality, suppose that $\|y_1'-\theta^Tx_1'\|\leq \|y_2'-\theta^Tx_2'\|\leq \cdots\leq \|y_n'-\theta^Tx_n'\|$, we have the inner minimization of \eqref{eq:our2} as
\begin{small}
\begin{align*}
& \max \left(\frac{1}{n}\sum_{i=1}^{k(\theta)-1} \|y_i'-\theta^Tx_i'\|^r+\frac{1}{n}\left(1-\frac{n\delta'-\sum_{i={k(\theta)+1}}^n \|y_i'-\theta^Tx_i'\|^r}{\|y_{k(\theta)}'-\theta^Tx_{k(\theta)}'\|^r }\right)\|y_{k(\theta)}'-\theta^Tx_{k(\theta)}'\|,0\right),
\end{align*}
\end{small}
where $k(\theta):=\max\limits_{k\in[n]} \left\{k:\tfrac{1}{n}\sum_{i=k}^n\|y_k'-\theta^Tx_k'\|^r \ge \delta' \right\}$ and $\delta' = \delta\|(\theta,-1)\|^r$.
\end{theorem}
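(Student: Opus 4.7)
The plan is to reduce the LAD case to the mean estimation case (Theorem \ref{thm:mean}) via a one-dimensional projection of the transportation cost. The key observation is that the loss $\ell(\theta,z) = \|y-\theta^T x\|$ depends on $z=(x,y)$ only through the linear functional $L_\theta(z) := y - \theta^T x = (-\theta,1)^T z$, while the transport cost $\|z-z'\|^r$ involves all coordinates of $z$. So for any target scalar value $t\in\R$, the cheapest way to move $z_i'$ to \emph{some} $z$ with $L_\theta(z)=t$ is by orthogonal projection onto the hyperplane $\{z : L_\theta(z)=t\}$, which (since $\|\cdot\|$ is Euclidean and $s\mapsto s^r$ is monotone on $[0,\infty)$) gives transport cost $|t-(y_i'-\theta^T x_i')|^r/\|(\theta,-1)\|^r$. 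I would state this projection lemma as the first step.

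Next, I would apply the strong duality of Proposition \ref{prop:duality} to write
\begin{equation*}
\min_{\mathbb{Q}\in\mc R(\PP_n')} \EE_{\mathbb{Q}}[\ell(\theta,Z)] = \max_{\lambda\ge 0}\ \frac{1}{n}\sum_{i=1}^n \min_{z\in\mc Z}\bigl\{|L_\theta(z)| + \lambda\|z-z_i'\|^r\bigr\} - \lambda\delta.
\end{equation*}
Using the projection step, the inner minimization over $z$ splits as $\min_t \min_{z:L_\theta(z)=t}$, collapsing to the one-dimensional problem
\begin{equation*}
\min_{t\in\R}\ |t| + \frac{\lambda}{\|(\theta,-1)\|^r}\, |t - u_i|^r, \qquad u_i := y_i' - \theta^T x_i'.
\end{equation*}
I would then introduce the rescaled dual variable $\lambda' = \lambda/\|(\theta,-1)\|^r$, which converts the penalty $-\lambda\delta$ into $-\lambda'\delta'$ with $\delta' = \delta\,\|(\theta,-1)\|^r$.

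After this change of variables the maximization over $\lambda'\ge 0$ becomes exactly the dual appearing in the proof of Theorem \ref{thm:mean}, but applied in dimension one, to the data points $u_1,\dots,u_n$, with loss $|t|$ (i.e.\ ``target'' $\theta=0$), cost $|\cdot|^r$, and budget $\delta'$. Invoking Theorem \ref{thm:mean} yields the closed-form value \eqref{eq:mean} with $\|\theta-z_i'\|$ replaced by $|u_i|=\|y_i'-\theta^T x_i'\|$ and $\delta$ replaced by $\delta'$, and with the threshold index $k(\theta)=\max\{k:\tfrac1n\sum_{i=k}^n|u_i|^r\ge\delta'\}$. This is precisely the formula in the theorem.

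The main obstacle is the projection step: one must verify carefully that the infimum over $z\in\mc Z$ is attained by projecting $z_i'$ onto the level set $\{L_\theta(z)=t\}$ and that the resulting minimum is indeed $|t-u_i|^r/\|(\theta,-1)\|^r$. Once this reduction is secured, the reordering condition on $\|y_i'-\theta^T x_i'\|$ in the theorem matches the reordering required by Theorem \ref{thm:mean} applied to the $u_i$'s, and the remainder of the argument is a mechanical transcription of the mean-estimation proof.
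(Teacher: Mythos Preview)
Your proposal is correct and follows essentially the same route as the paper's proof. The paper also applies strong duality and then collapses the $(d{+}1)$-dimensional inner minimization to a one-dimensional problem via H\"older's inequality (which, since the norm is $\ell_2$, is exactly your orthogonal-projection argument); it then applies Lemmas~\ref{lemma:1} and~\ref{lemma:lambda} directly, whereas you package this by rescaling $\lambda' = \lambda/\|(\theta,-1)\|^r$ and invoking Theorem~\ref{thm:mean} --- a cosmetic but slightly cleaner shortcut.
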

\vspace{-5pt}

We give the proof of Theorem \ref{prop:quantile} in Appendix \ref{app:proof}. The structure of the optimal rectified distribution also resembles that of Proposition \ref{prop:rectified-distribution}: the rectified data points are shifted towards the hyperplane $y=\theta^Tx$ that best fits the clean data points. For a more detailed explanation of the long haul structure and extensions to more general cases, interested readers are encouraged to refer to Appendix \ref{app:estimator-prior-work}.

\vspace{-5pt}
\subsection{Computational Procedure}
\vspace{-5pt}

Our procedure is motivated by the empirical efficacy of subgradient descent for training deep neural networks, as described in e.g. \citep{li2020understanding}. In each iteration, given the current estimate $\theta$, we compute the optimal rectified distribution. For simple applications such as mean estimation and least absolute deviation regression, we can solve this optimization problem efficiently using the quick-select algorithm or by utilizing an existing solver for linear programs to obtain the optimal solution for the dual variable $\lambda$, which we denote $\lambda^*$. Then, we employ the subgradient method on the rectified data, iterating until convergence. This optimization process automatically rectifies outliers using a fixed budget $\delta$ across all iterations. As the value of $\theta$ changes, the same budget $\delta$ for all iterations results in varying the quantile used for selecting outliers. Thus, our estimator can be regarded as an iteratively adaptive quantile estimation approach.

We now concretely detail the procedure for LAD regression. The procedure for mean estimation is analogous and results from a simple change in the loss function. We must start by initially addressing the computation of the optimal dual variable $\lambda^\star$ for the inner minimization over the probability space. Without loss of generality, we recall that this problem is
\vspace{-5pt}
\begin{equation}
\label{eq:opt_lambda}
    \max_{\lambda \ge 0} \frac{1}{n}\sum_{i=1}^n \min\left\{\|\tilde\theta^Tz_i'\|,\frac{\lambda \|\tilde\theta^Tz_i'\|^r}{\|\tilde\theta\|_*^r}\right\}-\lambda\delta.
\end{equation}

\vspace{-3pt}
We show in the Appendix in Section \ref{app:proof} by Lemma \ref{lemma:lambda} that this problem can be solved by applying the quick-select algorithm to find the optimal $\lambda^\star$ and the knot point $k(\theta_t)$ which demarcates estimated outliers. Applying this lemma yields our estimation approach, which is described in Algorithm \ref{alg:1}.

\vspace{-5pt}
\begin{algorithm}[hbt!]
    \caption{Statistically Robust Estimator}\label{alg:1}
    \KwData{Observed data $\{z_i'\}_{i=1}^n$, initial point $\theta_0$, stepsizes $\alpha_t>0$;}
    \For{$t=0,\cdots,T$}{
        1. \textbf{Sort} the observed data $\{z_i'\}_{i=1}^n$ via the value $\|y_i'-\theta_t^Tx_i'\|$. \\
     2. \textbf{Quick-Select} algorithm to get the knot point $k(\theta_t)$ and the optimal $\lambda^\star$;\\
     3. \textbf{Subgradient step} on the detected clean data:
     \vspace{-10pt}
     \[\theta_{t+1} =\theta_{t}- \frac{\alpha_t}{n}\sum_{i=1}^{k(\theta_t)}\textrm{sgn}(\theta_t^Tx_i'-y_i')\cdot x_i' \]
     \vspace{-10pt}
    }
\end{algorithm} 
\vspace{-5pt}
Additional details on an alternative linear program approach for estimating $k(\theta_t)$ and $\lambda^*$ are given in Appendix \ref{app:algorithm-alternative}. We also propose a procedure for estimating more general regression models and limitations in Appendix \ref{app:algorithm-general}. Further details on the optimization problem are given in Appendix \ref{app:algorithm-details}.

\vspace{-5pt}
\section{Experimental Results}
\vspace{-5pt}
In this section, we demonstrate the effectiveness of the proposed statistically robust estimator through various tasks: mean estimation, LAD regression, and two applications to volatility surface modeling.

\vspace{-5pt}
\subsection{Mean Estimation and LAD Regression Experiments}
\vspace{-5pt}

We perform simulation experiments with mean estimation and least absolute deviation linear regression for our estimator to illustrate its efficacy. For mean estimation, we compare our estimator with the mean, median, and trimmed mean estimators. We observe that our estimator outperforms all other methods, despite providing the oracle corruption level $\epsilon$ to the trimmed mean estimator. For LAD regression, we compare our estimator with the OLS, LAD, and Huber regression estimators. We find that, again, our estimator outperforms all other methods. A complete overview of these two experiments can be found in Appendix \ref{app:experiments-overview}.

\vspace{-5pt}
\subsection{Options Volatility Experiments}
\vspace{-5pt}
\label{sec:options}

In this section, we conduct empirical studies on real-world applications in fitting and predicting option implied volatility surfaces. 
Options are financial instruments allowing buyers the right to buy (call options) or sell (put options) an asset at a predetermined price (strike price) by a specified expiration date. In this study, we focus on European-style options, which can only be exercised at expiration. Option prices are influenced by the volatility of the underlying asset's price. Implied volatility (IV), derived from an option's market price, indicates the market's volatility expectations. The implied volatility surface (IVS) represents the variation of IV across different strike prices and times to maturity. Accurate IVS modeling is crucial for risk assessment, hedging, pricing, and trading decisions. However, outliers in IV can distort the IVS, necessitating robust estimation methods. Our approach addresses this by estimating the IVS in the presence of outliers. We conduct two experiments to demonstrate the effectiveness of our statistically robust estimator: (a) using a kernelized IVS estimator and (b) using a state-of-the-art deep learning IVS estimator.

\vspace{-5pt}
\subsubsection{Kernelized Volatility Surface Estimation}
\vspace{-5pt}
\label{sec:options-kernel}

\paragraph{Data.} 
Our data set comprises nearly 2,000 option chains containing daily US stock option prices from 2019--2021. The options data is sourced from WRDS, a financial research service available to universities. The option chains were identified as containing significant outliers by our industry partner, a firm providing global financial data and analysis. We were blinded to this choice. We randomly draw a training set and test set from each chain and estimate surfaces. We assess the surfaces' out-of-sample performance using mean absolute percentage error (MAPE) and the discretized surface gradient (defined as $\dsg$). MAPE evaluates the error of the surface versus observed implied volatilities. $\dsg$ evaluates the smoothness of the surface. Further details can be found in Appendix \ref{app:options-details}.

\vspace{-5pt}
\paragraph{Benchmarks.} 
We compare our estimator developed in Theorem \ref{prop:quantile} to two benchmarks. The first is kernel smoothing (denoted ``KS''), a well-established method for estimating the IVS \citep{ait1998nonparametric}. In KS, each point on the IVS is constructed by a weighted local average of implied volatilities across nearby expiration dates, strike prices, and call/put type. The weights are given by a kernel matrix which depends on these three features of an option (see next paragraph). The second benchmark is a two-step ``remove and fit'' kernel smoothing method (denoted ``2SKS'') which first attempts to remove outliers via Tukey fences  \citep{hoaglin1986performance} before applying the KS method.

\vspace{-5pt}
\paragraph{Kernelized Regression Problem.} We now describe the kernel and our estimator. Following the convention in \citep{ait1998nonparametric}, the $i$th option in a chain is featurized as the vector $\R^3 \ni x_i'= (\log\tau_i, u(\Delta_i), \mathbb{I}_{\mathrm{call}(i)})$, where $\tau_i$ is the number of days to the option's expiration date, $\Delta_i \in [-1,1]$ is a relative measure of price termed the Black-Scholes delta, $u(x) = x\mathbb{I}_{x \geq 0} + (1+x)\mathbb{I}_{x < 0}$, and $\mathbb{I}_{\mathrm{call}(i)}$ is 1 if the option is a call option and 0 if the option is a put option. The goal is to fit the pair ($y_i'$: the implied volatility of option $i$, $x_i'$: the features) via a kernelized regression model.
We choose a Gaussian-like kernel $K_h(x,x')$ to measure the distance between options $x$ and $x'$ as $
    K_h(x,x') = \exp(-\|{(x-x')}/{2h}\|^2)
$
where division of $x-x'$ by $h$ is element-wise for a vector of bandwidths $h \in \R^3_+$. When the budget $\delta =0$, we want to solve
$
    \min_{\theta\in \R^n}\sum_{i=1}^n v_i |y_i'-\theta^TK_h^i|,
$
where $v_i$ is the $i$-th entry of a vector of vegas for $\{x'_i\}_{i=1}^n$ and $K_h^i$ is the $i$-th row of the kernel matrix. The implied volatilities are weighted by vega $v_i$ to improve surface stability as in \citep{mayhew1995volatility}. 
We conducted a comparison between our estimator and the benchmarks. We note that the KS method can be regarded as a standard kernelized least square approach \citep{hansen2022econometrics}.

\vspace{-10pt}
\paragraph{Results.} Our approach improves upon the benchmark approach in both MAPE and $\dsg$ on the out-of-sample test sets, showcasing the importance of jointly estimating the rectified distribution and model parameters with our estimator. The results of our experiment are displayed in Table \ref{tab:options-stats}. We compare the KS and 2SKS benchmarks against our estimator with both fixed $\delta=0.01$ (chosen to correspond to a 1\% change in volatility) and $\delta$ chosen by cross-validation. More details can be found in Appendix \ref{app:options-details}. Our estimator outperforms others with a mean MAPE of 0.225, compared to 0.294 for KS and 0.236 for 2SKS, showing a 23\% and 5\% improvement respectively. For surface smoothness, our estimator achieves a mean DSG of 6.5, significantly better than 20.2 for KS and 7.5 for 2SKS, indicating 67\% and 13\% improvements. Notably, our industry partner applied our estimator to an unseen test set collected after this paper was written and was able to release to production 25\% more surfaces than they had before when using their existing proprietary method.

\vspace{-3pt}
\begin{table}[h]
    \label{tab:options-stats}
    \centering
    \caption{Results of our experiment with kernelized IVS estimation.}
    \vspace{-5pt}
    \scalebox{0.85}{
    \begin{tabular}{ccccccc} 
        \toprule
        Model MAPE & $0.5\%$ Quantile & $5\%$ Quantile & Median & Mean & $95\%$ Quantile & $99.5\%$ Quantile \\
        \midrule
        KS & 0.026 & 0.068 & 0.232 & 0.294 & 0.677 & 1.438 \\
        2SKS & 0.026 & 0.056 & 0.172 & 0.236 & 0.602 & 1.389 \\
        Ours ($\delta=10^{-2}$) & 0.028 & 0.057 & 0.170 & 0.225 & 0.535 & 1.207 \\
        Ours (CV) & 0.028 & 0.057 & 0.169 & 0.224 & 0.534 & 1.240 \\
        \midrule
        Model $\nabla\hat{S}$ & $0.5 \%$ Quantile & $5\%$ Quantile & Median & Mean & $95\%$ Quantile & $99.5\%$ Quantile \\
        \midrule
        KS & 0.313 & 1.606 & 14.434 & 20.188 & 57.797 & 108.901 \\
        2SKS & 0.050 & 0.124 & 2.229 & 7.491 & 33.276 & 78.144 \\
        Ours ($\delta=10^{-2}$) & 0.048 & 0.121 & 1.898 & 6.502 & 28.064 & 66.410 \\
        Ours (CV) & 0.043 & 0.109 & 1.513 & 5.079 & 21.096 & 55.005 \\
        \bottomrule
    \end{tabular}
    }
\end{table}

\begin{remark}
To contextualize these results, we note that the difference in MAPE of a surface and an option chain containing outliers and the MAPE of a surface and an option chain containing no outliers will not be large if the set of outliers is small. Consider the MAPE of the same surface for two different option chains of size $n$, $O_1$ and $O_2$, where a small fraction $k/n$ of the options of $O_2$ are outliers. Supposing the modal APE is $0.3$ and the outlier APE is a considerable $1.0$, for an options chain with $n=50$ and just $k=5$, the MAPE difference will be only $\frac{k + 0.3(n-k)}{n} - 0.3 = 0.07$.    
\end{remark}

We perform an additional experiment with the same dataset in Appendix \ref{app:options-additional-experiments} which demonstrates the usefulness of our method for estimating an IVS for use on the trading day after the initial contaminated surface is observed. We find similar outperformance of our method versus the benchmark methods. 

\vspace{-5pt}
\subsubsection{Deep Learning Volatility Surface Estimation}
\vspace{-5pt}
\label{sec:deep-learning}

In this section, we apply our statistically robust estimator to state-of-the-art deep learning prediction approaches for modeling volatility developed in \cite{chataigner2020deep}. In this work, deep networks are used to estimate local volatility surfaces. The \textit{local volatility} or \textit{implied volatility function} model introduced by \cite{rubinstein1994implied}, \cite{dupire1994pricing}, and \cite{derman1996local} is a surface which allows for as close of a fit as possible to market-observed implied volatilies without arbitrage, which is of interest to many market participants (smoothing methods, in contrast, do not make this guarantee). Further background and details for this section are available in Appendix \ref{app:options-deep-learning}.

We select the data set from \cite{chataigner2020deep} consisting of (options chain, surface) pairs from the German DAX index. We first contaminate the chains by replacing an $\epsilon$ fraction of each price $p$ with $10p$. We then test our estimated surface against the true surface. To estimate price and volatility surfaces under data corruption, we applied our statistically robust estimator to the benchmark approach. We use the Dupire neural network of \cite{chataigner2020deep} as a benchmark, which estimates the surface under local volatility model assumptions and no-arbitrage constraints. This method enforces these conditions on the surface using hard, soft, and hybrid hard/soft constraints.

We evaluate our robust estimator using the same metrics as \cite{chataigner2020deep}, RMSE and MAPE, repeated over three trials with different random seeds. 
Our approach outperforms the baseline approach across all averages, and does so more clearly as the corruption level increases, despite the strong regularizing effect of the no-arbitrage constraints and enforcement of Dupire's formula. Our improvement is most impactful for the most accurate model utilizing soft arbitrage constraints. For this model, the test set RMSE and MAPE are reduced by 33\% and 34\%. This experiment displays the efficacy of our estimator in a state-of-the-art deep learning approach to volatility modeling.

\begin{table}[htbp!]
    \label{tab:options-stats-dl}
    \centering
    \caption{Results of our experiment with deep learning surface estimation.
    }
    \vspace{-5pt}
    \scalebox{0.8}{
    \begin{tabularx}{14cm}{@{}c *5{>{\centering\arraybackslash}X}@{}}
    \toprule
    \toprule
    \multicolumn{4}{c}{
    \textbf{Panel A: Results by Constraint Type}} \\
    \midrule
    \midrule
    Model & Hard Constraints & Hybrid Constraints & Soft Constraints  \\
    \midrule
    Dupire NN RMSE	& 0.125 & 0.140 & 0.044 \\
    Our RMSE 	    & 0.110 & 0.131 & 0.029 \\
    \midrule
    Dupire NN MAPE	& 0.343 & 0.546 & 0.111 \\
    Our MAPE	    & 0.310 & 0.508 & 0.074 \\
  \end{tabularx}
  }
  \scalebox{0.8}{
  \centering
  \begin{tabularx}{14cm}{@{}c *4{>{\centering\arraybackslash}X}@{}}
    \toprule
    \toprule
    \multicolumn{4}{c}{
    \textbf{Panel B: Results by Corruption Level}} \\
    \midrule
    \midrule
    Corruption Level & $\epsilon=20\%$ & $\epsilon=30\%$ & $\epsilon=40\%$ \\
    \midrule
    Dupire NN RMSE	& 0.077 & 0.091 & 0.168 \\
    Our RMSE 	    & 0.075 & 0.084 & 0.141 \\
    \midrule
    Dupire NN MAPE	& 0.061 & 0.070 & 0.115 \\
    Our MAPE	    & 0.060 & 0.066 & 0.097 \\
    \bottomrule
  \end{tabularx}
  }
\end{table}

\vspace{-5pt}
\section{Conclusion}
\vspace{-5pt}
In conclusion, we propose an automatic outlier rectification mechanism that integrates outlier correction and estimation within a unified optimization framework. Our novel approach leverages the optimal transport distance with a concave cost function to construct a rectification set within the realm of probability distributions. Within this set, we identify the optimal distribution for conducting the estimation task. Notably, the concave cost function's "long hauls" attribute facilitates moving only a fraction of the data to distant positions while preserving the remaining dataset, enabling efficient outlier correction during the optimization process.
Through comprehensive simulation and empirical analyses involving mean estimation, least absolute regression, and fitting option implied volatility surfaces, we substantiate the effectiveness and superiority of our method over conventional approaches. This demonstrates the potential of our framework to significantly enhance outlier detection integrated within the estimation process across diverse analytical scenarios.

\bibliography{arxiv}
\bibliographystyle{plainnat}
\newpage

\appendix
\section*{Appendix}
\appendix

\section{Organization of the Appendix}
\label{app:appendix-organization}

We organize the appendix as follows:
\begin{itemize}
    \item The organization of the Appendix is given in this section (Section \ref{app:appendix-organization}).
    \item Additional details on the connection between our estimator and prior work are given in Section \ref{app:estimator-prior-work}.
    \item Additional explanation of the long haul structure induced by our estimator and extensions to more general forms of problems are given in Appendix \ref{app:estimator-more-details}.
    \item The proof of Theorem \ref{thm:mean} and Theorem \ref{prop:quantile} are given in Section \ref{app:proof}.
    \item The long haul structure for linear regression with a concave cost function is provided in Section \ref{app:proof-regression-concave-details}.
    \item The detailed computational procedure for mean estimation, linear absolute regression, and more general cases can be found in Section \ref{app:algorithm}.
    \item The main experiments for mean estimation and LAD regression are described in Appendix \ref{app:experiments-mean-estimation} and \ref{app:experiments-lad-regression} respectively. Additional details are given in the following appendices.
    \item Additional details on the illustrative example in the text is given in Section \ref{app:illustrative-example}.
    \item Additional details on mean estimation simulations are given in Section \ref{app:mean-estimation-details}.
    \item Additional plots for the mean estimation simulations for the concave cost function are given in Section \ref{app:simulation-mean-estimation-concave}.
    \item Additional plots for the mean estimation simulations for the convex cost function are given in Section \ref{app:simulation-mean-estimation-convex}.
    \item Additional details on LAD regression simulations are given in Section \ref{app:lad-regression-details}.
    \item Additional plots for the LAD regression simulations for the concave cost function are given in Section \ref{app:simulation-regression-concave}.
    \item Additional plots for the LAD regression simulations for the convex cost function are given in Section \ref{app:simulation-regression-convex}.
    \item Additional option volatility surface experiment details are given in Section \ref{app:options-additional-experiments}.
    \item Additional information on the volatility surface data set and losses are given in Section \ref{app:options-details}.
    \item Background and additional details on the deep learning volatility modeling experiment are given in Section \ref{app:options-deep-learning}.
\end{itemize}

\newpage
\section{Estimator Remarks}

\subsection{Connections to Prior Work}
\label{app:estimator-prior-work}

Additional details on connections to prior work are given in the following remark:

\begin{remark}
    \textrm{(i)} When we disregard the outer minimization concerning estimation parameters, the inner minimization problem over probability measures is related to the minimum distance functionals-based estimator, initially proposed by \citep{donoho1988automatic}. The estimator is obtained by first projecting the corrupted distribution $\PP'_n$ onto a family of distributions $\mathcal{G}$ under a certain distribution discrepancy measure $\mathds D$. Then, the optimal parameters are selected for the resulting distribution. However, even with additional information about our contamination models, this ``two-stage'' procedure  has two major drawbacks: the difficulty in choosing an appropriate family of distributions $\mathcal{G}$ and the inherent computational challenge to project probability distributions.  Moreover, the first stage (projection) is usually sensitive to the choice of family $\mc G$. In contrast, we propose a novel approach that integrates outlier rectification (i.e., explicit projection) and parameter estimation into a joint optimization framework in \eqref{eq:our}. \textrm{(ii)} This min-min strategy has also been explored in \cite{jiang2024distributionally} using an artificially constructed rectification set. Their primary focus is to utilize the min-min formulation to recover well-studied robust statistics estimators. In contrast, our focus is on a new conceptual framework for formulating novel estimators which have not been studied before.
\end{remark}

\subsection{Long Haul Structure and Extensions}
\label{app:estimator-more-details}

An explanation of the long haul structure and extensions to more general forms are given in this subsection.

The proposed rectification set has the potential for broader applicability across various applications and problem domains.  This versatility makes the rectification set a valuable tool for addressing and improving solutions in diverse problem settings in the future. 

{\rm (i) (\textbf{Concave cost function})} 
From Proposition \ref{prop:duality}, the optimal rectification admits
$z_{\textrm{best}} \in \arg\min_{z\in\mc Z}\ell(\theta,z')
+\lambda c(z,z').$
Suppose we select a concave cost function $c$ that grows strictly slower than the loss function $\ell$. In this scenario, when the budget is small, the rectified data point $z_{\textrm{best}}$ consistently exhibits the long haul structure, ensuring automatic outlier identification properties. This implies that the rectification process effectively identifies outliers, as the influence of the cost function dominates over the loss function for small budgets. To further illustrate the concept, we can consider a linear regression problem with the squared loss function $\ell(\theta,z) = (y - \theta^Tx)^2$ or a nonlinear loss using the $r$-th norm as the cost function. We refer the readers to Appendix \ref{app:proof-regression-concave-details}. 

{\rm (ii) (\textbf{Other distribution metric/discrepancy})} Based on the two examples discussed earlier, we can infer that the optimal transport distance-based rectification set is particularly suitable for regression or problems with continuous response variables. This is because the budget $\delta$ is used to compensate for the loss caused by identified outliers. However, in classification tasks, where the loss function may be less sensitive, the effectiveness of the optimal transport distance-based approach may be limited.
To address this limitation and complement the rectification set for classification tasks, an alternative approach is to use $\phi$-divergences. This approach has already been proposed in a study by \citep{chen2022rockafellian,antil2024rockafellian} for handling outliers in image classification tasks. 
The $\phi$-divergence-based rectification set operates by adjusting the weight assigned to outliers, effectively minimizing their impact on the overall data set. By reducing the weight placed on the detected outliers, the rectification set aims to remove their influence from the data set. This selective adjustment allows for the identification and removal of outliers, leading to a more refined and reliable data set.

\newpage
\section{Proof Details}

\subsection{Proof of Theorem \ref{thm:mean} and Theorem \ref{prop:quantile}.}
\label{app:proof}
To start with, we give two crucial lemmas \ref{lemma:1} and \ref{lemma:lambda}.
\begin{lemma}
\label{lemma:1}
Suppose that $a,b,\lambda >0$ and  $r\in(0,1)$, we have 
\[
\min_{x\in[0,a/b]} a-bx + \lambda x^r  = \min\left\{a, \frac{\lambda a^r}{b^r}\right\}. 
\]
\end{lemma}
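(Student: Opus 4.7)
The plan is to exploit concavity. Note that for $r\in(0,1)$ and $\lambda>0$, the map $x\mapsto \lambda x^r$ is strictly concave on $[0,\infty)$, and $x\mapsto a-bx$ is affine. Hence $f(x):=a-bx+\lambda x^r$ is concave on $[0,a/b]$. A concave function on a compact interval attains its minimum at one of the endpoints, so the minimum must equal $\min\{f(0),f(a/b)\}$.

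The second step is to evaluate the endpoints directly: $f(0)=a$ and $f(a/b)=a-b\cdot(a/b)+\lambda(a/b)^r=\lambda a^r/b^r$. Combining these two observations gives $\min_{x\in[0,a/b]} f(x)=\min\{a,\lambda a^r/b^r\}$, which is exactly the claim.

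There is essentially no obstacle: the only thing to verify carefully is the concavity of $x\mapsto \lambda x^r$, which follows from the second derivative $\lambda r(r-1)x^{r-2}\le 0$ on $(0,\infty)$ (with continuity at $0$). If one prefers to avoid quoting concavity, an alternative route is to note that $f'(x)=-b+\lambda r x^{r-1}$ is strictly decreasing on $(0,a/b]$ with $f'(0^+)=+\infty$, so any interior critical point is a maximum of $f$ and the minimum must therefore occur at an endpoint, yielding the same conclusion.
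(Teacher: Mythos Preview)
Your proof is correct and follows essentially the same approach as the paper: both establish concavity of $f(x)=a-bx+\lambda x^r$ via the second derivative $\lambda r(r-1)x^{r-2}\le 0$ and conclude that the minimum over the compact interval is attained at an endpoint. Your version is in fact slightly more explicit than the paper's, since you carry out the endpoint evaluations $f(0)=a$ and $f(a/b)=\lambda a^r/b^r$ and offer an alternative first-derivative argument.
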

\begin{proof}
The argument is easy. The function $g(x) =  a-bx + \lambda x^r$ is concave as 
the second derivative is always negative:
\[
\nabla^2 g(x) = \lambda r(r-1)x^{r-2}
\]
for all $x\ge0$. 
\end{proof}

\begin{lemma}
\label{lemma:lambda}
Suppose that there is an increasing sequence $0\leq x_1 <  x_2 < \cdots <x_n$. The optimal solution of 
\[\max_{\lambda \ge 0}  \sum_{i=1}^n \alpha_i \min\{x_i, \lambda x_i^r\}-\lambda \delta\]
is $\lambda^\star = x_k^{1-r}$ where 
\[k:= \mathop{\max}_{k\in[n]} \left\{k:\sum_{i=k}^n \alpha_i x_k^r\ge \delta\right\}.\]
Here $\alpha_i\in[0,1],\sum_{i=1}^n \alpha_i= 1$, $\delta >0$ and $r\in(0,1)$. Moreover, the optimal function value is 
\[\max(\sum_{i=1}^{k-1} \alpha_ix_i +\left(1-\frac{(\delta -\sum_{i=k+1}^n \alpha_i x_i^r)}{\alpha_k x_k^r}\right)\alpha_k x_k,0).\]
\end{lemma}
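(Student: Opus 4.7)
The plan is to exploit the piecewise-linear concave structure of the dual objective in $\lambda$. First, I would observe that for each $i$ with $x_i > 0$, the scalar function $\lambda \mapsto \min\{x_i, \lambda x_i^r\}$ is non-decreasing, continuous, and piecewise linear, switching from the line $\lambda x_i^r$ (for $\lambda$ small) to the constant $x_i$ (for $\lambda$ large) at the unique knot $\lambda = x_i/x_i^r = x_i^{1-r}$. Terms with $x_i = 0$ contribute identically zero. Since $r \in (0,1)$ yields $1-r > 0$, the strict ordering $x_1 < \cdots < x_n$ carries over to the knots: $x_1^{1-r} < \cdots < x_n^{1-r}$. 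Summing with non-negative weights $\alpha_i$ and subtracting $\lambda\delta$ yields a function $F(\lambda)$ that is continuous, piecewise linear, and concave (a non-negative combination of minima of two linear functions, minus a linear term), with breakpoints confined to $\{x_i^{1-r}\}_{i=1}^n$.

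The key step is to identify the slope of $F$ on each piece. On the interval $[x_{k-1}^{1-r}, x_k^{1-r}]$ (with the conventions $x_0^{1-r} := 0$ and $x_{n+1}^{1-r} := \infty$), the summands with $i < k$ have already entered their constant phase while those with $i \geq k$ remain linear, giving
\begin{equation*}
F(\lambda) \;=\; \sum_{i=1}^{k-1}\alpha_i x_i \;+\; \lambda\Bigl(\sum_{i=k}^{n}\alpha_i x_i^r - \delta\Bigr),
\end{equation*}
so the slope on this interval is $s_k := \sum_{i=k}^{n}\alpha_i x_i^r - \delta$. Because every $\alpha_i x_i^r \geq 0$, the slopes $s_k$ are non-increasing in $k$ and eventually $s_{n+1} = -\delta < 0$, consistent with the concavity of $F$.

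By concavity, the maximum of $F$ is attained at the largest knot where the slope is still non-negative; defining $k^\star := \max\{k : s_k \geq 0\}$ matches the $k$ in the statement exactly, giving $\lambda^\star = x_{k^\star}^{1-r}$. If no such $k$ exists (i.e.\ already $s_1 < 0$), then $F$ is strictly decreasing on $[0,\infty)$ and the optimum is $F(0) = 0$, which is precisely why the outer $\max(\cdot,0)$ appears in the claimed value. The final step is routine algebra: substituting $\lambda^\star = x_{k^\star}^{1-r}$ into the formula for $F$ on $[x_{k^\star-1}^{1-r}, x_{k^\star}^{1-r}]$, peeling off the term $\alpha_{k^\star} x_{k^\star}^r$ from the tail sum, and factoring $\alpha_{k^\star} x_{k^\star}$ produces the stated closed form.

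The main obstacle, and really the only thing requiring care, is bookkeeping for degenerate cases: the possibility that $x_1 = 0$ (which collapses the first interval), the boundary case $k^\star = n$ (where $\sum_{i=k^\star+1}^n$ is empty), and the ``no feasible $k$'' case (captured by the outer $\max$). None of these affect the concavity-plus-slope-sign argument, so the proof reduces to the piecewise-linear analysis sketched above.
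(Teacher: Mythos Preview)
Your proposal is correct and follows essentially the same approach as the paper: both arguments exploit that $F(\lambda)$ is concave and piecewise linear with breakpoints at the knots $\lambda = x_i^{1-r}$, identify the optimal knot via the sign change of the slope (the paper phrases this as a first-order/KKT condition with a multiplier $\mu\in[0,1]$, you phrase it as $s_k \ge 0 > s_{k+1}$), and then substitute to obtain the closed-form value. Your ordering of the knots via monotonicity of $t\mapsto t^{1-r}$ is a slightly cleaner replacement for the paper's explicit verification that $x_{t+1} < \lambda x_{t+1}^r \Rightarrow x_t < \lambda x_t^r$, but the substance is identical.
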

\begin{proof}
Without loss of generality, we assume that $0 < x_1 < x_2 < \dots < x_n$ as the zero part of the sequence does not affect the result.

First, given a fixed $\lambda \in \mathbb{R}+$ and the inequality $x_{t+1} <\lambda x_{t+1}^r$, our goal is to show that $x_t < \lambda x_t^r$. Let $x_t = \eta x_{t+1}$, where $\eta \in (0,1]$. Then, we have
\begin{align*}
x_{t} = \eta x_{t+1} < \eta \lambda x_{t+1}^r = \frac{\lambda}{\eta^r} \eta x_{t}^r = \lambda \eta^{1-r} x_{t}^r\leq  \lambda x_{t}^r. 
\end{align*}
Considering a fixed $\lambda \in \mathbb{R}_+$, we can express the objective function as:
\[
 \sum_{i=1}^n \alpha_i \min\{x_i, \lambda x_i^r\}-\lambda \delta =  \sum_{i=1}^{k-1} \alpha_ix_i+ \alpha_k \min\{x_k, \lambda x_k^r\}+\lambda\sum_{i=k+1}^n \alpha_i x_i^r-\lambda \delta
\]
where $1 \leq k \leq n$ and $\alpha_i$ are weights associated with each $x_i$. Two cases arise:
\begin{enumerate}
\item For all $1 \leq t \leq k-1$, $x_t < \lambda x_t^r$, or for all $k+1 \leq t \leq n$, $x_t > \lambda x_t^r$.
\item At the $k$-th point, we have $x_k = \lambda x_k^r$.
\end{enumerate}

Since we are dealing with a concave piecewise linear function, the optimal $\lambda$ will be a knot point (case 2). Otherwise, modifying $\lambda$ would increase the objective value. Therefore, we can establish the first-order optimality condition as follows:
\[
 \sum_{i=k+1}^n \alpha_i x_i^r + \mu \alpha_k x_k^r = \delta. 
\]
where $k$ and $\mu \in [0,1]$ are determined through a search using the quick-select algorithm. Consequently, the optimal solution is obtained as $\lambda^* = x_k^{1-r}$ and 
\[
k = \max_{k\in [n]}\left\{\sum_{i=k}^n \alpha_i x_i^r\ge\delta \right\}. 
\]
Thus, we further get $\mu = \frac{(\delta -\sum_{i=k+1}^n \alpha_i x_i^r)}{\alpha_k x_k^r}$ and the optimal function value admits 
\begin{align*}
& \, \sum_{i=1}^{k-1} \alpha_ix_i+ \alpha_k \min\{x_k, \lambda x_k^r\}+\lambda\sum_{i=k+1}^n \alpha_i x_i^r-\lambda \delta \\
= & \, \sum_{i=1}^{k-1} \alpha_ix_i+ \alpha_k \min\{x_k, \lambda x_k^r\}-\lambda \mu \alpha_k x_k^r \\
= & \, \sum_{i=1}^{k-1} \alpha_ix_i+(1-\mu)\alpha_k x_k \\
= & \, \sum_{i=1}^{k-1} \alpha_ix_i +\left(1-\frac{(\delta -\sum_{i=k+1}^n \alpha_i x_i^r)}{\alpha_k x_k^r}\right)\alpha_k x_k 
\end{align*}

Based on our discussion, we exclude the corner case where $k=1$. Therefore, $\delta$ is sufficiently large such that $\sum_{i=1}^n \alpha_i x_i^r \leq \delta$. It is evident that in this trivial scenario, the optimal function value is zero.

We conclude our proof. 
\end{proof}

We now give the proof of Theorem \ref{thm:mean}:

\begin{proof}
Before we prove the theorem, it is worth highlighting that for any fixed $\theta$, we can always sort $\{z_i\}_{i=1}^n$ based on the error $\|\theta - z_i'\|$ to satisfy condition~\eqref{eq:condition}.

By the strong duality result in Proposition \ref{prop:duality}, we have 
\begin{align*}
& {\color{white}{=}} \min \limits_{\QQ \in \mc B(\PP'_n)} \EE_\QQ[\|\theta-Z'\|] \\
& =  \max_{\lambda \ge 0} \E_{\PP'_n}\left[\min_{\Delta \in \R}\|\theta-Z'-\Delta\|
+\lambda\|\Delta\|^r\right]-\lambda\delta\\
& = \max_{\lambda \ge 0} \E_{\PP'_n}\left[\min_{0\leq \|\Delta\|\leq \|\theta-Z'\|}\|\theta-Z'\|-\|\Delta\|
+\lambda\|\Delta\|^r\right]-\lambda\delta\\
&=\max_{\lambda \ge 0} \E_{\PP'_n}[\min\left\{ \|\theta-Z'\|,\lambda \|\theta-Z'\|^r\right\}]-\lambda\delta\\
& = \max_{\lambda \ge 0} \frac{1}{n}\sum_{i=1}^n \min\{\|\theta-z_i'\|,\lambda\|\theta-z_i'\|^r\}-\lambda\delta\\
& =\max\left(\frac{1}{n}\sum_{i=1}^{k(\theta)-1} \|\theta-z_i'\|+\frac{1}{n}\left(1-\frac{n\delta-\sum_{i={k(\theta)+1}}^n \|\theta-z_i'\|^r}{\|\theta-z_{k(\theta)}'\|^{r}}\right)\|\theta-z_{k(\theta)}'\|,0\right)
\end{align*}
where the third equality follows from Lemma \ref{lemma:1} in Appendix \ref{app:proof} and the fifth one is due to Lemma \ref{lemma:lambda} in Appendix \ref{app:proof}. 
\end{proof}

The proof in the LAD regression case follows:

\begin{proof}
The proof follows a similar idea to that of Theorem \ref{thm:mean}.  For any fixed $\theta$, we can always sort $\{z_i\}_{i=1}^n$ based on the error $\|y_i'-\theta^Tx_i'\|$ to satisfy the condition.
For simplicity, we denote $\tilde{\theta}=(\theta,-1)$ and $z=(x,y)$. 
By the strong duality result in Proposition \ref{prop:duality}, we have 
\begin{align*}
& {\color{white} =} \min \limits_{\QQ \in \mc R(\PP'_n)} \EE_\QQ[\|Y-\theta^TX\|]\\
& =\max_{\lambda \ge 0} \E_{\PP'_n}\left[\min_{\Delta \in \R^{d+1}}\|\tilde\theta^TZ'+\tilde\theta^T\Delta\|
+\lambda\|\Delta\|^r\right]-\lambda\delta\\
&  \mathop{=}\limits^{(a)} \max_{\lambda \ge 0} \E_{\PP'_n}\left[\min_{\|\Delta\|\leq \frac{\|\tilde\theta^TZ'\|}{\|\tilde\theta\|_*}}\|\tilde\theta^TZ'\|-\|\tilde\theta\|_*\|\Delta\|
+\lambda\|\Delta\|^r\right]-\lambda\delta\\
& \mathop{=}\limits^{(b)}\max_{\lambda \ge 0} \E_{\PP'_n}\left[\min\left\{ \|\tilde\theta^TZ'\|, \frac{\lambda \|\tilde\theta^TZ'\|^r}{\|\tilde\theta\|_*^r}\right\}\right]-\lambda\delta\\
& = \max_{\lambda \ge 0} \frac{1}{n}\sum_{i=1}^n \min\left\{\|\tilde\theta^Tz_i'\|,\frac{\lambda \|\tilde\theta^Tz_i'\|^r}{\|\tilde\theta\|_*^r}\right\}-\lambda\delta\\
& \mathop{=}\limits^{(c)}\max \left(\frac{1}{n}\sum_{i=1}^{k(\theta)-1} \|\tilde\theta^Tz_i'\|+\frac{1}{n}\left(1-\frac{n\delta'-\sum_{i={k(\theta)+1}}^n \|\tilde\theta^Tz_{i}'\|^r}{\|\tilde\theta^Tz_{k(\theta)}'\|^r}\right)\|\tilde\theta^Tz_{k(\theta)}'\|,0\right),
\end{align*}
where the third equality follows from the Holder inequality and  $\|\cdot\|_*$ is the dual norm of $\|\cdot\|$; the fourth one follows from Lemma \ref{lemma:1} and the sixth one is due to Lemma $\ref{lemma:lambda}$. 

This completes our proof. 

\end{proof}

\subsection{Linear Regression with Concave Cost Function} 
\label{app:proof-regression-concave-details}

In this subsection, we aim to illustrate an example as mentioned in Section \ref{sec:examples}. Suppose we choose a concave cost function $c$ that grows strictly slower than the loss function $f$. In such a scenario, when the budget is limited, the rectified data point $z_{\textrm{best}}$ consistently demonstrates the long haul structure, thereby ensuring automatic outlier identification properties.

\begin{example}
  Suppose that $\mc Z =\R^{d+1}$, $\ell(\theta,z) = \|y-\theta^Tx\|^2$ and the cost function is defined as $c(z,z')=\|z-z\|^{r}$ where $r = \tfrac{1}{2}$. We want to study the best rectification distribution of 
  \[\min \limits_{\QQ \in \mc R(\PP'_n)} \EE_\QQ[\|y-\theta^Tx\|^2].\]
\end{example}
As we shall see when the dual variable $\lambda$ is relatively small, the best rectification distribution will keep the long haul structure although the inner minimization problem  is no longer concave. 

For simplicity, we denote $\tilde{\theta}=(\theta,-1)$ and $z=(x,y)$. 
By the strong duality result in Proposition \ref{prop:duality}, we only have to focus on the inner minimization problem: 
\begin{align*}
    &  \min_{\Delta \in \R^{d+1}}  \left\{(\tilde\theta^T(Z'+\Delta))^2+\lambda \|\Delta\|^r\right\} \\
     = & \min_{\|\Delta\| \leq \frac{|\tilde\theta^TZ'|}{\|\tilde\theta\|_*}} \left\{(|\tilde\theta^TZ'|-\|\tilde\theta\|_*\|\Delta\|)^2 +\lambda \|\Delta\|^r\right\}.
\end{align*} 
Next, we aim at clarifying the structured information of the following one-dimensional optimization problem:  
\[
\min_{\|\Delta\| \in \left[0, \frac{|\tilde\theta^TZ'|}{\|\tilde\theta\|_*}\right]} K(\|\Delta\|):= (|\tilde\theta^TZ'|-\|\tilde\theta\|_*\|\Delta\|)^2 +\lambda \|\Delta\|^r.
\]
In general, unlike the case of absolute loss, the function $K(\cdot)$ will not be concave. However, the optimal solution of the resulting optimization problem will be active at the boundary when $\delta$ is small and the optimal value of $\lambda$ is sufficiently large. Initially, let's overlook the constraint and express the first-order optimality condition.
\begin{align*}
    & 2\|\tilde\theta\|_*(\|\tilde\theta\|_*\|\Delta\|-|\tilde\theta^TZ'|) + \frac{1}{2}\lambda\|\Delta\|^{-\tfrac{1}{2}} = 0  \\
    \Rightarrow & \|\tilde\theta\|_*^2 \|\Delta\| + \frac{\lambda }{4}\|\Delta\|^{-\tfrac{1}{2}} = |\tilde\theta^TZ'|\|\tilde\theta\|_* \\
    \Rightarrow &\|\tilde\theta\|_*^2\|\Delta\|^{\tfrac{3}{2}} +\frac{\lambda}{4} = |\tilde\theta^TZ'|\|\tilde\theta\|_*\|\Delta\|^{\tfrac{1}{2}}. 
\end{align*}
By changing the variable $\beta = \|\Delta\|^{\frac{1}{2}}$, we have 
\[
\|\tilde\theta\|_*^2\beta^3 + \frac{\lambda }{4} - |\tilde\theta^TZ'|\|\tilde\theta\|_*\beta = 0. 
\]

Now, we observe that $g(\beta) = \|\tilde\theta\|_*^2\beta^3 + \frac{\lambda}{4} - |\tilde\theta^TZ'|\|\tilde\theta\|_*\beta$, and our objective is to find the positive root of this cubic equation in one dimension.
Also, the first derivative is 
 $\nabla g(\beta) = 3\|\tilde\theta\|_*^2\beta^2-|\tilde\theta^TZ'|\|\tilde\theta\|_*$. The stationary point is $\beta^\star = \sqrt{\frac{|\tilde\theta^TZ'|}{3\|\tilde\theta\|_*}}$ and then the corresponding $\|\Delta^\star\| = \frac{|\tilde\theta^TZ'|}{3\|\tilde\theta\|_*} \in [0, \frac{|\tilde\theta^TZ'|}{\|\tilde\theta\|_*}]$. 
\[
g(\beta^\star) = \left(\sqrt{\frac{|\tilde\theta^TZ'|}{3\|\tilde\theta\|_*}}\right)^3 \|\tilde \theta\|_*^2 - |\tilde\theta^TZ'|\|\tilde\theta\|_*\sqrt{\frac{|\tilde\theta^TZ'|}{3\|\tilde\theta\|_*}} + \frac{\lambda }{4}. 
\]
\begin{enumerate}
\item When $\lambda \ge 4\left(|\tilde\theta^TZ'|\|\tilde\theta\|_*\sqrt{\frac{|\tilde\theta^TZ'|}{3\|\tilde\theta\|_*}} -  \left(\sqrt{\frac{|\tilde\theta^TZ'|}{3\|\tilde\theta\|_*}}\right)^3\right)$ (i.e., $\delta$ is sufficiently small), we have $g(\beta^\star) \ge 0$. As such, the critical point of the unconstrained optimization problem is not in the interval $[0, \frac{|\tilde\theta^TZ'|}{\|\tilde\theta\|_*}]$. In other words, we can conclude the optimal solution will be $0$ and the solution of vanilla least square is already optimal. 
\item When $g(\beta^\star)<0$, we know there are two solutions $\beta_+, \beta_-$ for $ g(\beta) = 0$ where $\beta_-<\beta_+$. We know $K(\|\Delta\|)$ will be increasing between $[0,\beta_-^2]$ and decreasing between $[\beta_-^2,\beta_+^2]$. Thus, the optimal solution will be either 0 or $\beta_+^2$ and ensures the long haul transportation structure. Different from the absolute loss, the cost function $\|z-z'\|^{1/2}$ is not powerful enough to move any points that achieve a perfect fit to the current hyperplane.
\end{enumerate}
\begin{figure}[!htbp]
    \centering
    \subfigure[Case 1]{\includegraphics[width=0.4\textwidth]{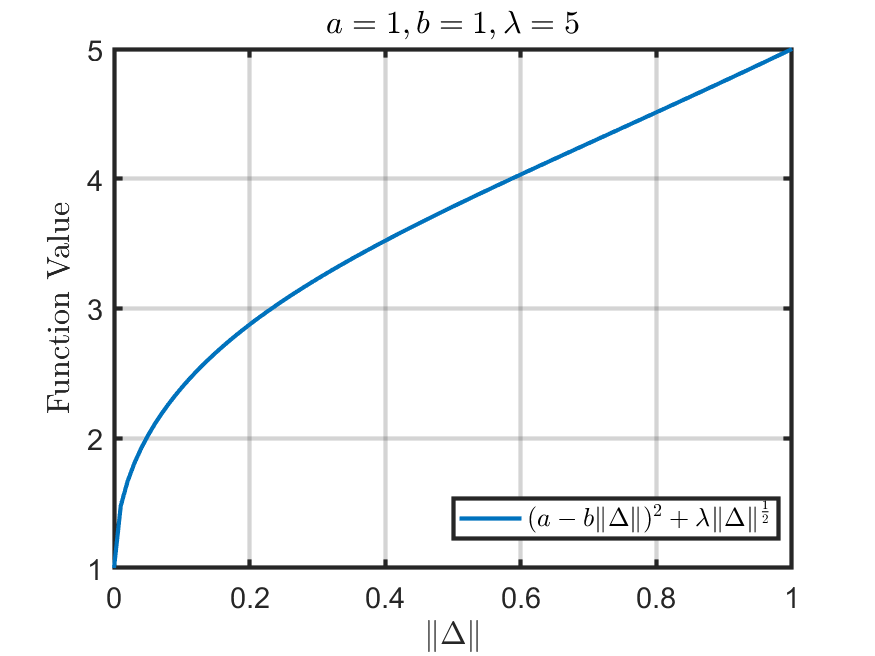}} 
    \subfigure[Case 2]{\includegraphics[width=0.4\textwidth]{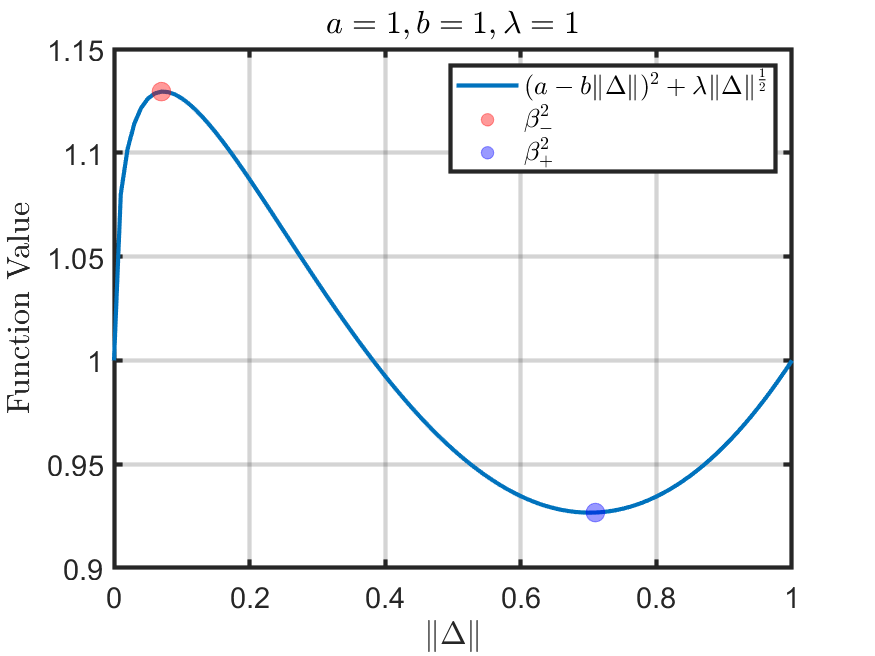}} 
    \caption{Visualization}
\end{figure}

\newpage
\section{Computational Procedure}
\label{app:algorithm}

\subsection{Alternative Estimation Procedure for Inner Problem}
\label{app:algorithm-alternative}

We make the following remark on an alternative procedure for estimating $k(\theta_t)$ and $\lambda$ in the inner problem for mean estimation and LAD regression in Algorithm \ref{alg:1}:

\begin{remark}[Alternative procedure for $k(\theta_t)$ and $\lambda^*$]
It is illuminating to note that steps 1 and 2 in Algorithm \ref{alg:1} can be alternatively replaced with steps that solve a specific linear program for the knot point $k(\theta_t)$ and optimal $\lambda^*$. Instead of applying the sorting and quick-select algorithm implied by Lemma \ref{lemma:lambda}, we can instead reformulate Equation \eqref{eq:opt_lambda} as a linear programming problem, i.e., 
\begin{equation}
    \label{eq:lp_lambda}
    \begin{array}{ccll}
         & \max\limits_{\lambda \ge0, t\in\R^n}&\frac{1}{n}\sum_{i=1}^n t_i-\lambda\delta\\
         &\st & t_i 
         \leq \|\tilde\theta^Tz_i'\|, \forall i\in [n]\\\ 
         && \|\tilde\theta\|_*^rt_i \leq \lambda \|\tilde\theta^Tz_i'\|^r, \forall i\in [n].
    \end{array}
\end{equation} 
The problem in \eqref{eq:lp_lambda} can then be solved instead of performing steps 1 and 2 of Algorithm \ref{alg:1} to find the knot point $k(\theta_t)$ and the optimal $\lambda^*$.
\end{remark}

\subsection{General Computational Procedure}
\label{app:algorithm-general}

We also propose a procedure for fitting more general regression models with our statistically robust estimator, which is based on the approach for LAD regression above. 

Although the procedure can be applied to any model which can be estimated via subgradient methods, we focus on deep learning models. We start by considering a neural network $f_\theta$ parameterized by weights $\theta$, and a loss function $\ell(\theta,z) = \ell(y,f_\theta(x))$, where $\ell$ is a gradient Lipschitz function with a constant $L$. By Proposition \ref{prop:duality}, we would like to solve the inner problem 
\begin{equation}
\max_{\lambda \ge 0} \E_{\PP'_n}\left[\min_{z\in\mc Z}\ell(\theta;z)
+\lambda c(z,Z')\right]-\lambda\delta.
\end{equation}

Although we do not prove a reformulation result for this problem, we solve the problem empirically by employing a computational procedure which is analogous to that of LAD regression. The procedure is heuristic and does not enjoy the same guarantees as Algorithm \ref{alg:1} for LAD regression, but in experiments (see Section \ref{sec:deep-learning}), we find that the performance is similarly robust to outliers and significantly outperforms benchmark models trained under empirical risk minimization. Our computational procedure follows in Algorithm \ref{alg:2}.

\begin{algorithm}[hbt!]
    \caption{Statistically Robust Optimization Procedure}\label{alg:2}
    \KwData{Observed data $\{{z_i}'\}_{i=1}^n$, initial point $\theta^{(0)}$, stepsizes $\alpha^{(t)}>0$, sampling distribution $\mathbb{P}_n'$; batch size $m \leq n$.}
    \For{$t=0,\cdots,T$}{
        1. \textbf{Sample} $m$ points $\{z_i'\}_{i=1}^m \sim \mathbb{P}_n'$. \\
        2. \textbf{Sort} the observed data $\{z_i'\}_{i=1}^m$ via the value $\ell(\theta^{(t)}, z_i')$. \\
        3. \textbf{Quick-Select} algorithm to get the knot point $k(\theta^{(t)})$ and the optimal $\lambda^\star$.\\
        4. \textbf{Subgradient step} on the detected clean data:
        \vspace{-10pt}
        \[
        \theta^{(t+1)} = \theta^{(t)} - \alpha^{(t)}\sum_{i=1}^{k(\theta^{(t)})} v^*(\theta^{(t)}, z_i')
        \vspace{-10pt}
        \]
        \,\,\, where $v^*(\theta^{(t)}, z_i') \in \partial_\theta \ell(\theta^{(t)}, z_i')$.
    }
\end{algorithm}

Algorithm \ref{alg:2} is similar to Algorithm \ref{alg:1}, but has two principal differences: (1) instead of optimizing over the entire data set, we optimize over mini-batches of size $m \leq n$ which are drawn from a data set of size $n$ via a sampling distribution $\PP_n'$; and (2) instead of taking the subgradient with respect to $\theta$ for the LAD regression problem in calculating the direction of descent, we instead take a local subgradient with respect to $\theta$ around each point $z'_{i}$, each of which is contained within  the subdifferetial $\partial_\theta \ell(\theta^{(t)}, z'_i)$. The subgradient is in practice computed by the automatic differentiation capability of a software package such as Pytorch, Tensorflow, or JAX. The budget parameter $\delta$, which controls the quantile which identifies outliers, is tuned via cross-validation.

\subsection{Optimization Details}
\label{app:algorithm-details}

Solving the optimization problems in the prior sections can be complicated, as in general, the minimization of a class of convex functions often leads to non-convex problems. 

This issue becomes even more critical when dealing with our estimation problem, as these problems may lack weak convexity or subdifferential regularity properties.
One significant challenge in such problems is the non-coincidence of different subdifferential concepts, such as Clarke, Fréchet, and limiting subdifferentials. Moreover, some calculus rules (summation and chain rules) do not always hold for these concepts. 
Thus, computing a first-order oracle, which provides necessary information for optimization algorithms, can be difficult in practice.

To concretely illustrate this challenge, we note that, intuitively, the graph of a subdifferentially regular function cannot exhibit "downward-facing cusps" \citep{li2020understanding, davis2020stochastic}. 
Here, we give an example of when the objective function \eqref{eq:mean} is not regular, which may occur in general.

\begin{example}[Irregular Case (Three Point Masses)]
\label{ref:example}
Let $\PP'_n=\frac{1}{3}\delta_{-1}+\frac{1}{3}\delta_{0}+\frac{1}{3}\delta_{1}$ and $\delta = 0.7$. Based on the reformulation result given in Theorem \ref{thm:mean}, we can plot the curve of the loss function from our proposed estimator, see Figure \ref{fig:irregular_example}.
\end{example}
\begin{figure}
\centering 
\includegraphics[width=0.5\textwidth]{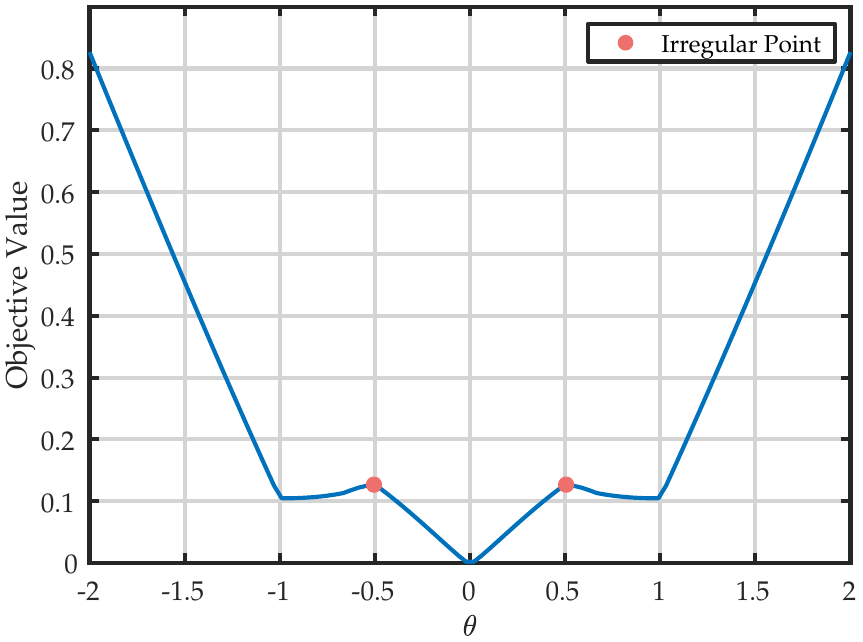}
\vspace{-10pt}
\caption{Irregular objective function.}
\label{fig:irregular_example}
\end{figure} 

From Figure \ref{fig:irregular_example}, it is evident that there are at least two regions with "downward-facing cusps" (marked as irregularities in the figures), which could potentially pose computational challenges.
Similar challenges arise in the training of deep neural networks, as demonstrated in Figure 4 of \citep{li2020understanding}, particularly for two-layer neural networks. However, the "subgradient" method remains effective for addressing these challenges empirically. In our empirical investigation, we have observed a similar phenomenon for our estimator.

\newpage
\section{Mean Estimation and LAD Regression Experiments}
\label{app:experiments-overview}

In this section, we outline the main results of the mean estimation and LAD regression experiments. Additional details are deferred to the following appendix section (Appendix \ref{app:experiment-details}) for the convenience of readers. 

\subsection{Mean Estimation}
\label{app:experiments-mean-estimation}

We generate the corrupted data by combining two Gaussian distributions: $(1-\textrm{corruption level}) \times \mathcal{N}(0,2)$ and $(\textrm{corruption level}) \times \mathcal{N}(25,2)$. All the results presented in Table \ref{tab:mean} are the mean estimators and the confidence intervals (i.e. twice the standard deviation) are averaged over 100 random trials for different corruption levels. We compare our estimator with several widely used baseline methods. Interestingly, we observe that even when setting hyperparameters as constants across all corruption levels, our estimator consistently outperforms the others. Notably, the trimmed mean, which incorporates the ground truth corruption level, performs even worse than our estimator. Our estimator's outperformance is notable, as in this table we report results specifically using $\delta=0.5$, which is a non-optimal choice for $\delta$, as we will show in the sensitivity analysis below.

\begin{table}[ht!]
\centering
\caption{
We compared our estimator with several standard mean estimation methods by evaluating the average loss on clean data points across various corruption levels. In our evaluation, we set the percent level of trimmed mean equal to the unknown ground truth corruption level. The hyperparameters for our estimator, namely $\delta = 0.5$ and $r = 0.5$, remained constant across all corrupted levels. The last row in the comparison table represents the percentage of all points rectified by our method. Error bars represent two standard deviation confidence intervals we have computed manually assuming normal errors.} 
\vspace{-2pt}
\resizebox{\columnwidth}{!}{
\begin{tabular}{@{}cccccc@{}}
\toprule
Corruption Levels & {20\%} & {30\%} & {40\%} & {45\%} & {49\%} \\
\midrule
Mean & 5.009$\,\pm\,$0.056 & 7.509$\,\pm\,$0.080 & 10.007$\,\pm\,$0.098 & 11.248$\,\pm\,$0.098 & 12.257$\,\pm\,$0.114 \\
Median & 1.680$\,\pm\,$0.114 & 1.831$\,\pm\,$0.118 & 2.286$\,\pm\,$0.192 & 2.843$\,\pm\,$0.208 & 4.203$\,\pm\,$0.344 \\
Trimmed Mean & 1.739$\,\pm\,$0.108 & 1.947$\,\pm\,$0.104 & 2.456$\,\pm\,$0.212 & 3.047$\,\pm\,$0.174 & 4.399$\,\pm\,$0.360 \\
Ours & \textbf{1.620$\,\pm\,$0.106} & \textbf{1.700$\,\pm\,$0.106} & \textbf{1.957$\,\pm\,$0.160} & \textbf{2.251$\,\pm\,$0.150} & \textbf{2.899$\,\pm\,$0.240} \\
Ours, \%Rectified & 10.03\% & 10.12\% & 10.24\% & 10.35\% & 10.55\% \\
\bottomrule
\end{tabular}}
\label{tab:mean}
\end{table}

We visualize the corrupted distribution and its rectified distribution when the corruption level is 45\% in Figure \ref{fig:mean}. In this case, 55\% of the data is drawn from the true distribution $\mathcal{N}(0,2)$ and 45\% is drawn from $\mathcal{N}(25,2)$. Figure \ref{fig:mean}(a) displays the original sample of data from the contaminated distribution. The outlier points from the contaminating distribution are clearly visible in orange. Figure \ref{fig:mean}(b) shows the rectified distribution our estimator produces for $\delta=2.5$, in which the outlier points have been moved from their original values to their rectified values, which is much closer to the true mean of the clean distribution. Our estimator thus successfully identifies the majority of outliers and relocates them towards the center (our mean estimator), providing further support for our theoretical findings in the previous section. The sensitivity analysis of mean estimation with respect to $\delta$ is displayed in Figure \ref{fig:mean}(c). In this figure, the loss on the clean data is plotted for various values of $\delta$. We see that an approximate minimum occurs at $\delta=2$, with good performance within the range $\delta \in [0.5,2.5]$. This illustrates the relative insensitivity of our estimator to different values of $\delta$ for a wide and reasonable range. This range can be easily reached by hyperparameter tuning via cross-validation, as the function describing the performance of our estimator is approximately quasiconvex. Moreover, when $\delta=0$ or when $\delta$ is set so large that it can rectify all points, our estimator gracefully degrades to the loss of the median estimator, which is another favorable property. A further sensitivity analysis with respect to $r$ is given in Appendix \ref{app:mean-estimation-details}. In this sensitivity analysis we find a similarly wide region of good performance for $r$ which improves significantly and almost uniformly on the more typical setting of $r=1$.

\begin{figure}[!htbp]
    \centering
    \subfigure[]{\includegraphics[width=0.32\textwidth]{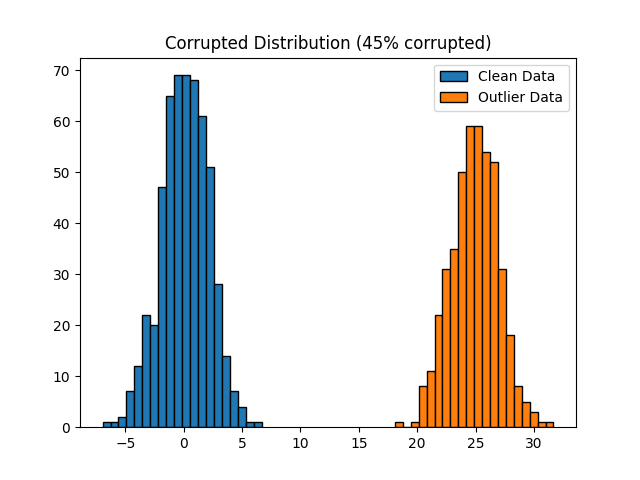}} 
    \subfigure[]{\includegraphics[width=0.32\textwidth]{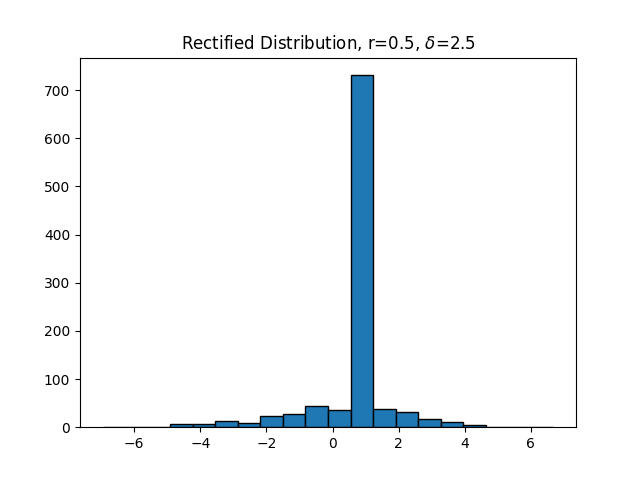}} 
    \subfigure[]{\includegraphics[width=0.32\textwidth]{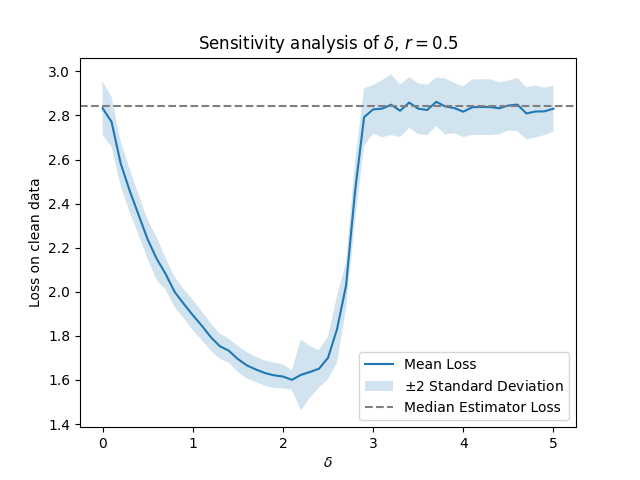}} 
    \caption{(a): Visualization of contamination model: a mixture of Gaussian $0.55\times\mathcal{N}(0,2)+0.45\times\mathcal{N}(25,2)$; (b): The rectified data generated by the proposed statistically robust estimator; (c) The sensitivity analysis of $\delta$. The visualization of (a) and (b) shows that the outlier points are rectified into the bulk of the clean distribution at $\delta=2.5$. The sensitivity analysis of $\delta$ shows that this rectification occurs for a wide range of $\delta$ which leads to losses for mean estimation which are lower than that of the median estimator of location. As $\delta \to 0$ and as $\delta$ increases, the performance of our estimator gracefully degrades to that the median estimator.}
\label{fig:mean}
\end{figure}

The evolution of the rectified distribution for mean estimation under concave and convex cost functions as $\delta$ increases is depicted in Appendices \ref{app:simulation-mean-estimation-concave} and \ref{app:simulation-mean-estimation-convex}. These depictions show that our statistically robust estimator which applies the concave transport cost function acts in a stable and expected manner as $\delta$ increases. In particular, this depiction shows that our estimator moves points in the order of their distance from the estimate of the mean; that is, the budget is ``used'' on the worst outliers first. This results in an orderly procedure of rectification which is stable and predictable across nearby values of $\delta$, which is a favorable property if $\delta$ is being set sequentially or in cross-validation. In contrast, the estimator which applies the convex transport cost function moves all points of the clean and outlier distributions toward each other, which results in poor estimates of the true mean and an improper rectified distribution.

\subsection{LAD Regression}
\label{app:experiments-lad-regression}

As we discussed in the previous result, the theoretical analysis for the LAD estimator resembles that of the mean estimation task. Empirically, we also observe similar performance for the LAD estimator and show that our estimator correctly rectifies most of outliers to the fitted hyperplane under a range of choices for $\delta$. In order to further support the effectiveness of our estimator, we provide additional visualizations: Figure \ref{fig:regression} contains a visualization of the lines of best fit produced by various estimations to the LAD regression problem, and displays the effect of different choices of $\delta$ on the line of best fit produced by our estimator. In Figure \ref{fig:regression}(a) our estimator with $\delta=1$ successfully produces a line of best fit which is closest to the uncontaminated distribution, while the other estimators (OLS, LAD, and Huber regressors) produce lines which are heavily affected by the contaminating distribution, showcasing the robustness of our estimator. Figure \ref{fig:regression}(b) and \ref{fig:regression}(c) show the rectified distribution produced by our estimator under $\delta=1$ (b) and $\delta=1.5$ (c). The suboptimal value of $\delta=1$ still rectifies many points from the contaminated distribution and produces a good line of best fit. Setting $\delta=1.5$ rectifies all of the points from the contaminated distribution and essentially recovers the true line of best fit. Importantly, this simulation shows that even improperly setting $\delta$ to the suboptimal value of $\delta=1$ produces a much better line than any of the other estimators in Figure \ref{fig:regression}(a). Appendix \ref{app:lad-regression-details} contains comparisons over different random trials and a sensitivity analysis with respect to $\delta$ and $r$, along with experimental results.
\begin{figure}[!htbp]
    \centering
    \subfigure[]{\includegraphics[width=0.32\textwidth]{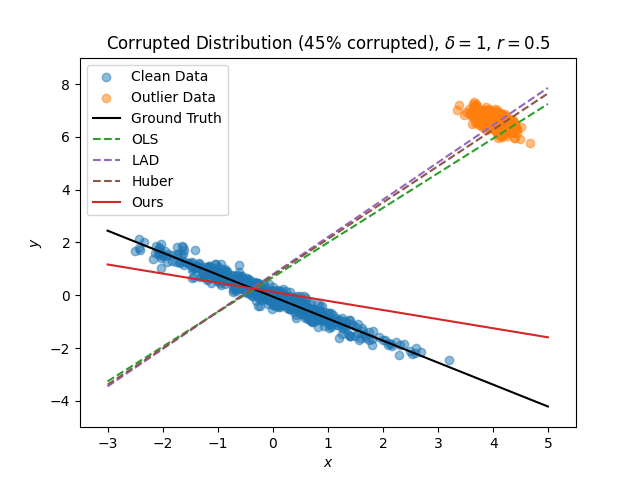}} 
    \subfigure[]{\includegraphics[width=0.32\textwidth]{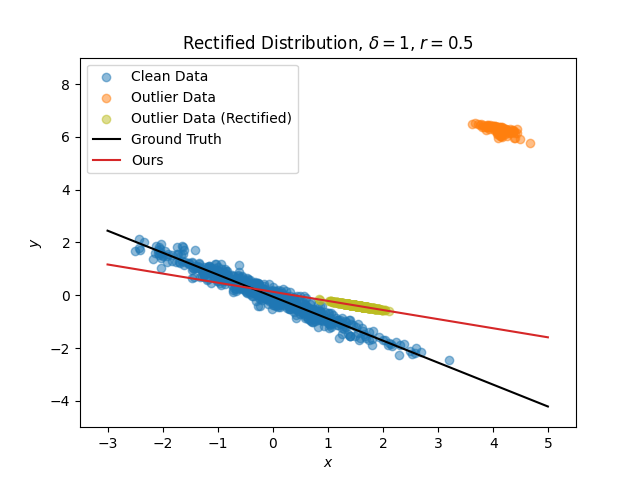}} 
    \subfigure[]{\includegraphics[width=0.32\textwidth]{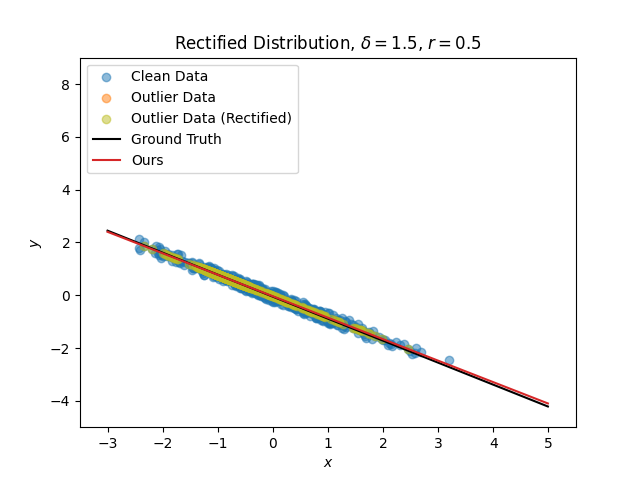}} 
    \caption{(a): Visualization of contamination model: 55\% are drawn with $x \sim \mathcal{N}(0,2)$ and $y = 0.1 - 0.8x + 0.2w$. 45\% are drawn with $x \sim \mathcal{N}(4,2)$ and $y = 10.1 - 0.8x + 0.2w$. In each case $w \sim N(0,1)$. Fitted models for different baselines are: OLS (ordinary least square), LAD (least absolute deviation regression), Huber (Huber regression with threshold parameter 1.5); (b): The rectified data generated by the proposed statistically robust estimator with a small budget $\delta=1$; (c): The rectified data with a larger $\delta = 1.5$ is able to rectify all outliers.}
\label{fig:regression}
\end{figure}

\clearpage
\newpage
\section{Additional Mean Estimation and LAD Regression Experimental Details}
\label{app:experiment-details}

\subsection{Illustrative Example Details}
\label{app:illustrative-example}

In this example, we perform linear regression using our estimator with either a concave or a convex transport cost function. The estimation is performed on a sample of points in $\R^2$ drawn from a contaminated distribution (corresponding to $\PP'$) where 45\% of the points are distributed according to an outlier distribution and 55\% of the points are distributed according to the true uncontaminated distribution (corresponding to $\PP_{\star}$). Our goal is to recover the line of best fit which corresponds to the uncontaminated distribution, which is $y = 0.1 - 0.8x $. In other words, 55\% of the points are drawn with $x \sim \mathcal{N}(0,2)$ and $y = 0.1 - 0.8x + 0.2w$, and 45\% of the points are drawn with $x \sim \mathcal{N}(4,2)$ and $y = 10.1 - 0.8x + 0.2w$. We illustrate the considerable differences which can occur when using each of these cost functions under various choices of $\delta$, the rectification budget parameter. We depict this example in Figure \ref{fig:regression-example-concave} for the concave cost and Figure \ref{fig:regression-example-convex} for the convex cost.

The behavior of the esstimator using the concave cost function is depicted in Figures \ref{fig:regression-example-concave}(a)-(c). Each of these subfigures shows the points drawn from each distribution, the rectified distribution, the true line of best fit for the uncontaminated distribution, and the line of best fit produced by our estimator, at various choices of $\delta$. In Figure \ref{fig:regression-example-concave}(a), $\delta=0$, in Figure \ref{fig:regression-example-concave}(b), $\delta=0.9$, and in Figure \ref{fig:regression-example-concave}(c), $\delta=1.5$. As can be seen from the figures, as the budget $\delta$ increases, we note that the proposed estimator using the concave cost function becomes increasingly adept at mitigating the influence of outlier data. It achieves this by rectifying the outlying points, moving them into the bulk of the uncontaminated distribution. As a result, the line of best fit (depicted in red) aligns much more closely with the true line of best fit (depicted in black). It is able to perform this successful estimation because each outlier point can be moved far with a cheap cost (a ``long haul'') due to the use of the concave transport cost function.

However, the convex cost function may result in  suboptimal estimators, which can only move every data point (clean and contaminated) a little bit. Figures \ref{fig:regression-example-convex}(a)-(c) display the behavior of the estimator using the convex cost function, which illustrates this effect. Each subfigure displays the rectified distribution as yellow-green points for a different setting of $\delta$ (i.e. (a) $\delta=0$, (b) $\delta=0.9$, and (c) $\delta=1.5$). As can be seen from the evolution of the rectified distribution, as $\delta$ increases, instead of the outlier points being moved towards the bulk of the distribution, all of the points move towards each other--even the points in the clean distribution. This is not ideal behavior, as the points from the clean distribution should stay in place. This defective behavior causes the line of best fit produced by the estimator using the convex cost function (which is depicted in red) fit to the rectified points to be severely affected by the outliers; instead of being close to the ground truth line of best fit (which is depicted in black), its slope is moved significantly upwards towards the outliers. This occurs because the convex cost function gives lower cost to smaller rectifications of the given data set, which is an inappropriate assumption for data sets containing outliers.

Appendices \ref{app:simulation-regression-concave} and \ref{app:simulation-regression-convex} depict the evolution of the rectified distribution as $\delta$ increases for the concave and convex cost functions, respectively. For our estimator which uses the concave cost, the evolution of the rectified distribution as $\delta$ increases shows that our estimator increasingly rectifies the points in order of their distance from the line of best fit in a predictable and stable way, which is a favorable property for setting $\delta$ via hyperparameter tuning. In contrast, the convex estimator consistently moves all points (clean and corrupted) toward each other as $\delta$ changes, which achieves a poor estimate of the true line of best fit for all $\delta$ considered.

\subsection{Mean Estimation Details}
\label{app:mean-estimation-details}

The experimental details for mean estimation follow. For each $(\delta, r)$ point in our experiments, we perform 100 random trials at 100 fixed seeds. In each trial, we run our optimization procedure for with a learning rate of $10^{-2}$. We stop when the number of iterations reaches 2000 or the change in the loss function between successive iterations is below a tolerance of $10^{-6}$. We initialize $\theta$ to the median of the data set.

Below in Figure \ref{fig:mean-est-sensitivity-r}, we visualize the sensitivity analysis of $r$ at $\delta=1$ when the corruption level is 45\%.  As shown, our concave transport cost function improves significantly on the linear cost function ($r=1$) and has a wide region of favorable stable performance.

\begin{figure}[!htbp]
    \centering
    \includegraphics[width=0.7\textwidth]{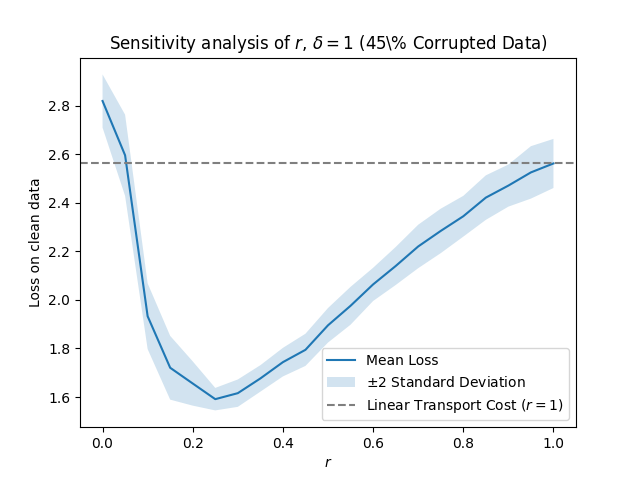}
    \caption{The sensitivity analysis of the loss on clean data with respect to $r$ of our estimator at $\delta=1$ on data with a 45\% corruption level on the mean estimation task. }
\label{fig:mean-est-sensitivity-r}
\end{figure}

\clearpage
\newpage
\subsection{Concave Cost Mean Estimation Simulation}
\label{app:simulation-mean-estimation-concave}

In Figure \ref{fig:simulation-mean-estimation-concave} below, we plot the evolution of the rectified distribution produced by our estimator under various values of $\delta$ for the \textbf{concave} cost function with $r=0.5$.

\vspace{-10pt}
\begin{figure}[!htbp]
    \centering
    \subfigure[]{\includegraphics[width=0.32\textwidth]{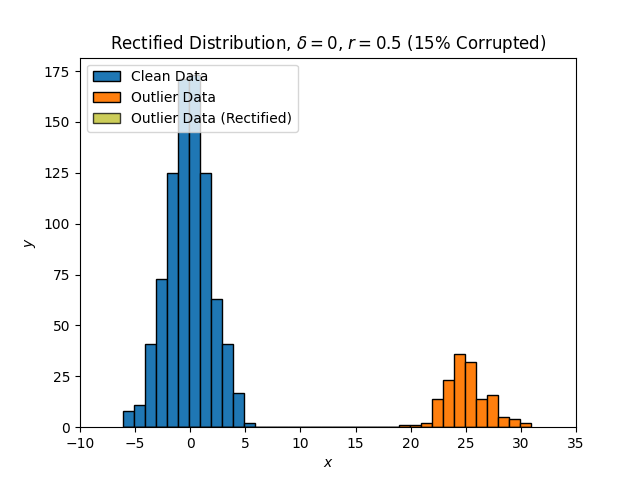}} 
    \subfigure[]{\includegraphics[width=0.32\textwidth]{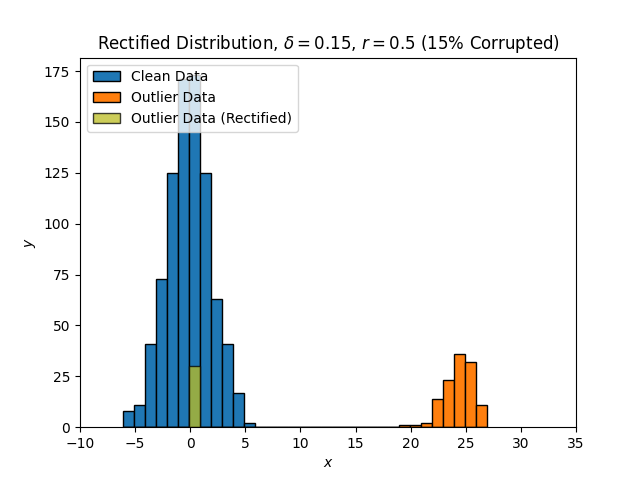}} 
    \subfigure[]{\includegraphics[width=0.32\textwidth]{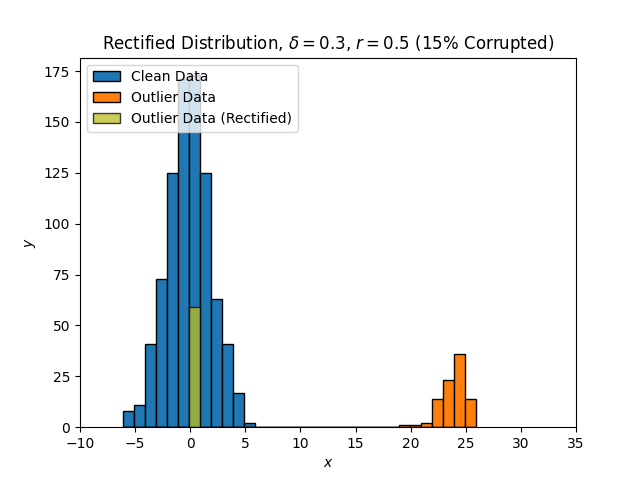}} \\ 
    \subfigure[]{\includegraphics[width=0.32\textwidth]{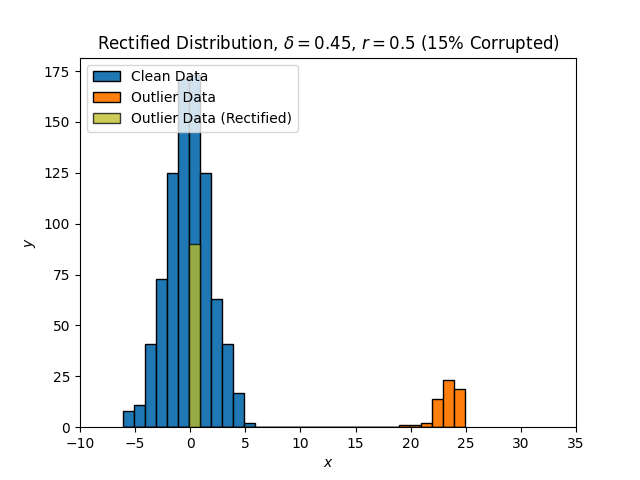}} 
    \subfigure[]{\includegraphics[width=0.32\textwidth]{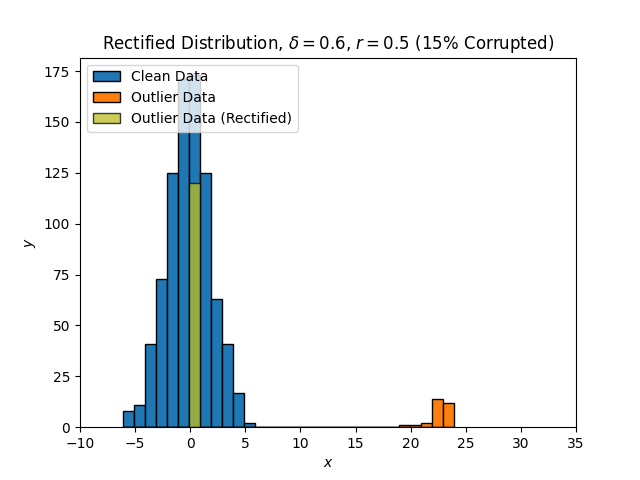}} 
    \subfigure[]{\includegraphics[width=0.32\textwidth]{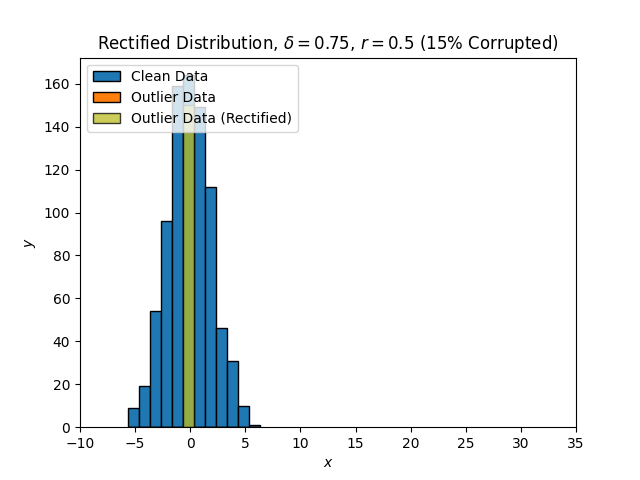}} \\
    \subfigure[]{\includegraphics[width=0.32\textwidth]{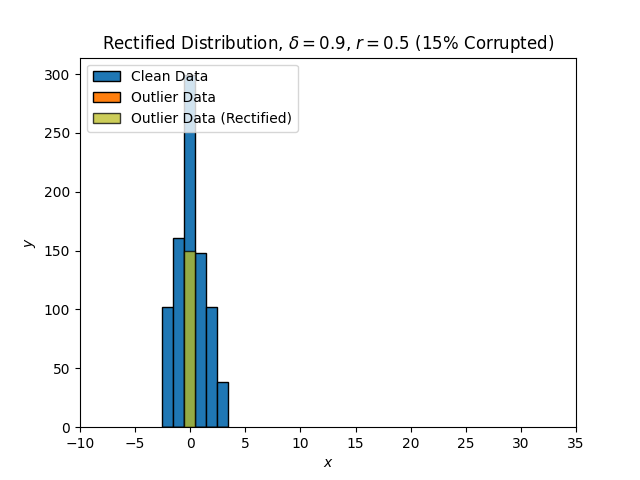}} 
    \subfigure[]{\includegraphics[width=0.32\textwidth]{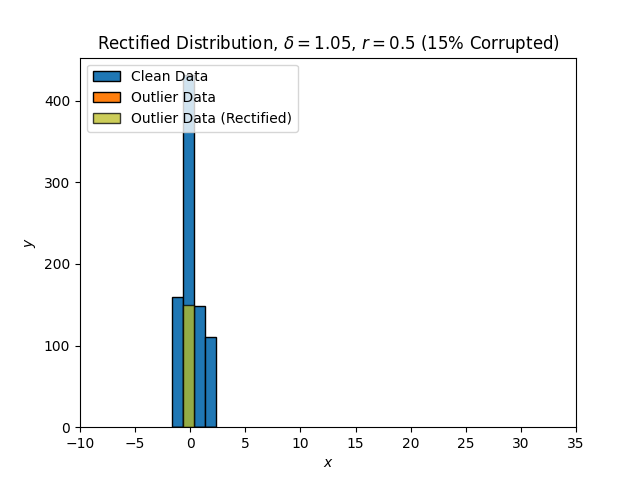}} 
    \subfigure[]{\includegraphics[width=0.32\textwidth]{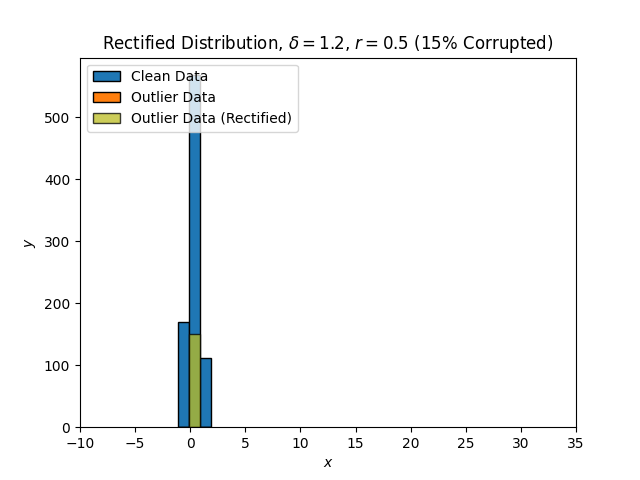}} \\
    \subfigure[]{\includegraphics[width=0.32\textwidth]{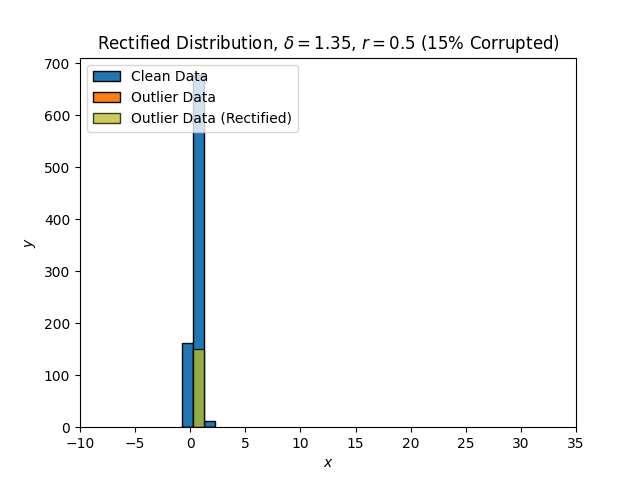}} 
    \subfigure[]{\includegraphics[width=0.32\textwidth]{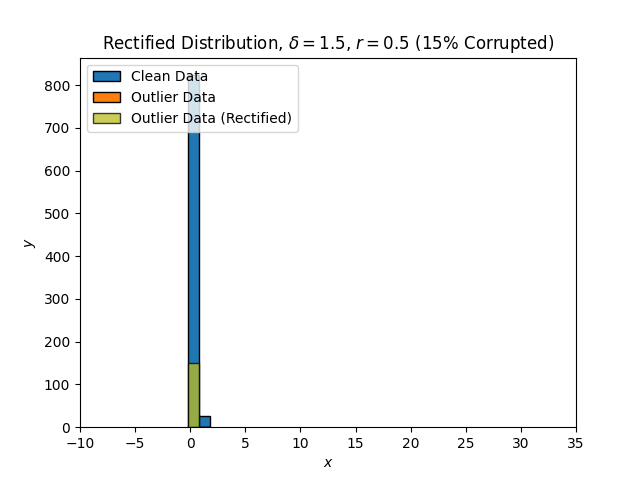}} 
    \subfigure[]{\includegraphics[width=0.32\textwidth]{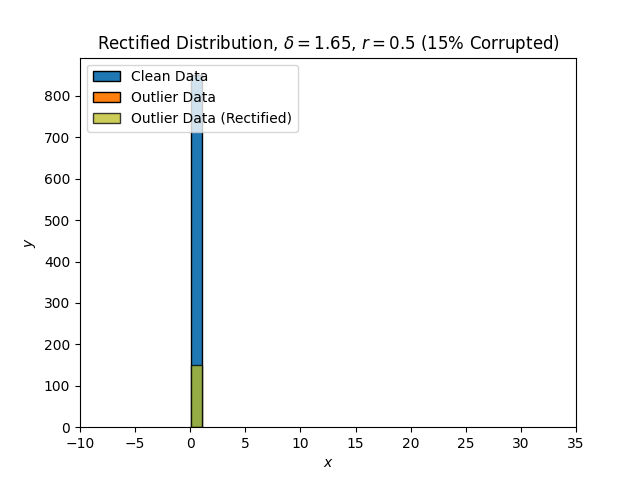}} \\
    \caption{Visualization of the evolution of the rectified distribution.}
\label{fig:simulation-mean-estimation-concave}
\end{figure}

\newpage
\subsection{Convex Cost Mean Estimation Simulation}
\label{app:simulation-mean-estimation-convex}

In the Figure \ref{fig:simulation-mean-estimation-convex} below, we plot the evolution of the rectified distribution produced by our estimator under various values of $\delta$ for the \textbf{convex} cost function with $r=2.0$.

\begin{figure}[!htbp]
    \centering
    \subfigure[]{\includegraphics[width=0.32\textwidth]{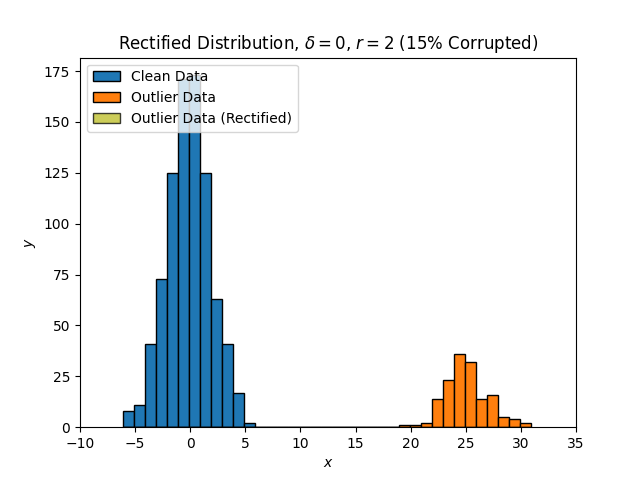}} 
    \subfigure[]{\includegraphics[width=0.32\textwidth]{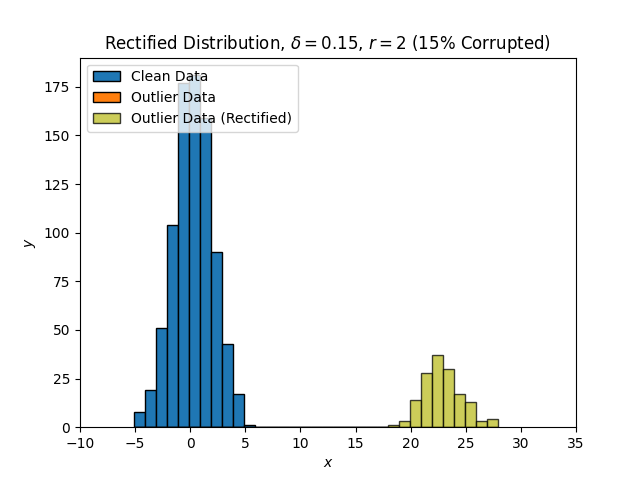}} 
    \subfigure[]{\includegraphics[width=0.32\textwidth]{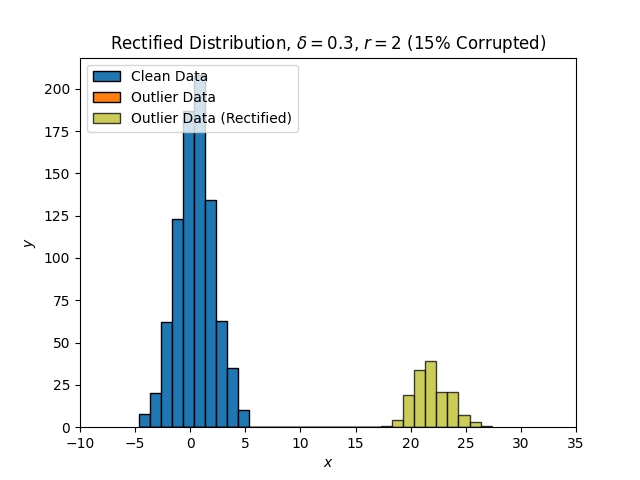}} \\ 
    \subfigure[]{\includegraphics[width=0.32\textwidth]{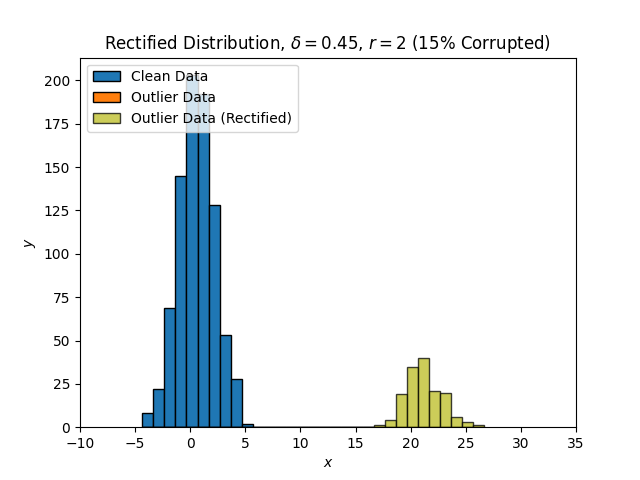}} 
    \subfigure[]{\includegraphics[width=0.32\textwidth]{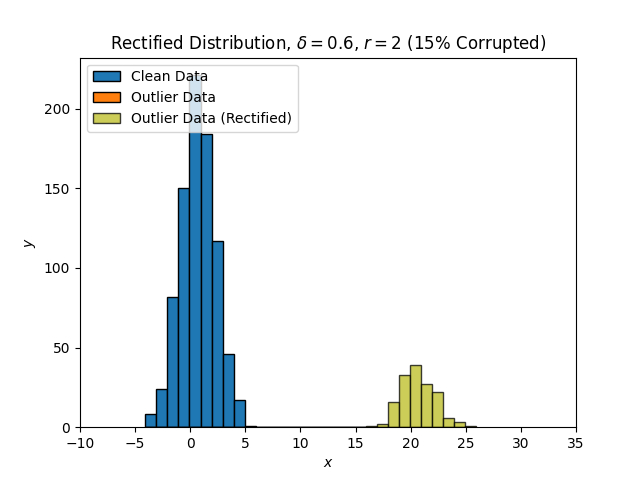}} 
    \subfigure[]{\includegraphics[width=0.32\textwidth]{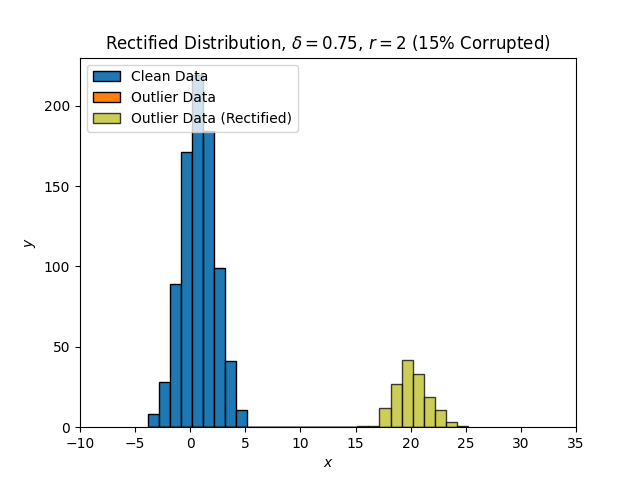}} \\
    \subfigure[]{\includegraphics[width=0.32\textwidth]{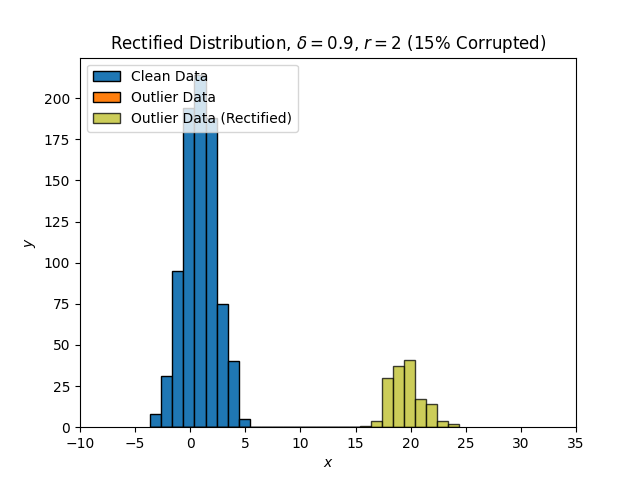}} 
    \subfigure[]{\includegraphics[width=0.32\textwidth]{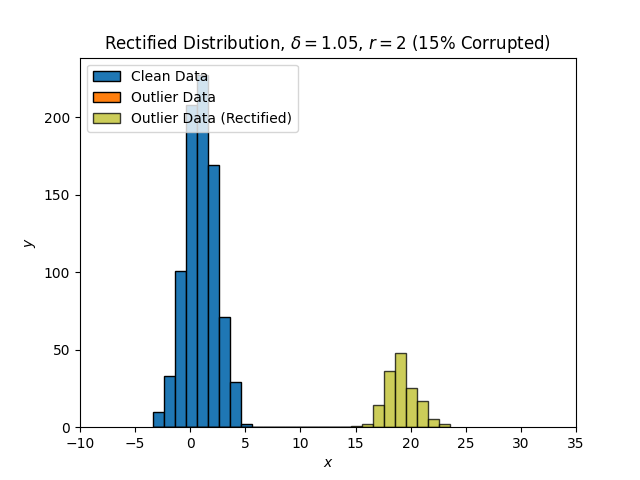}} 
    \subfigure[]{\includegraphics[width=0.32\textwidth]{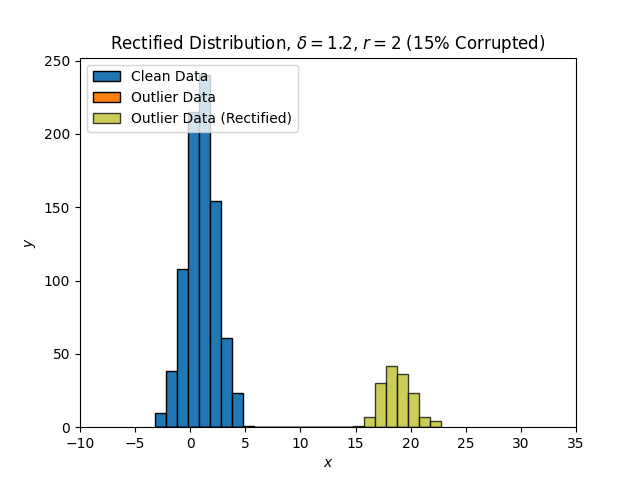}} \\
    \subfigure[]{\includegraphics[width=0.32\textwidth]{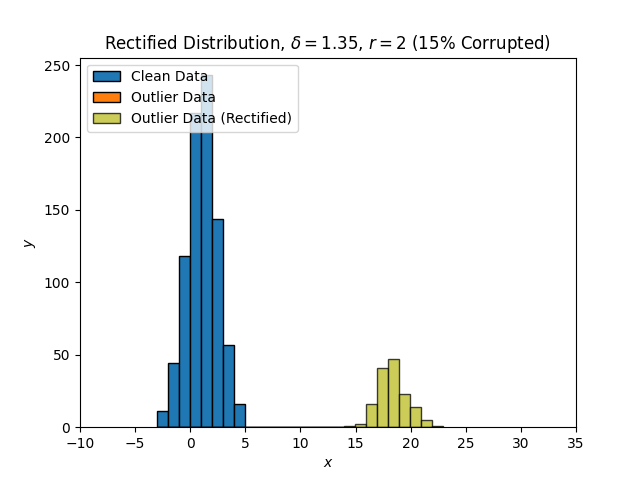}} 
    \subfigure[]{\includegraphics[width=0.32\textwidth]{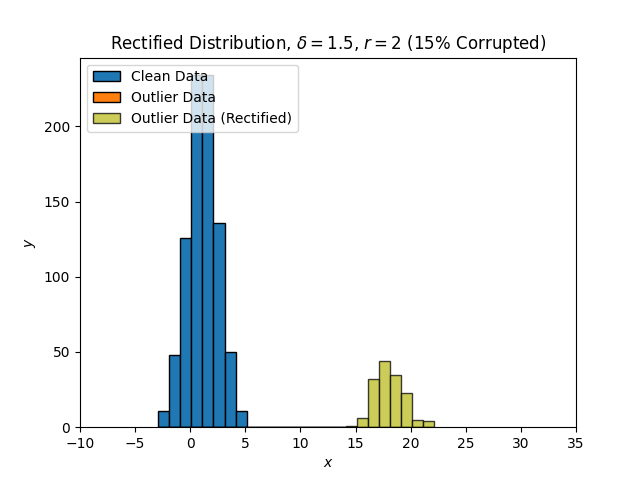}} 
    \subfigure[]{\includegraphics[width=0.32\textwidth]{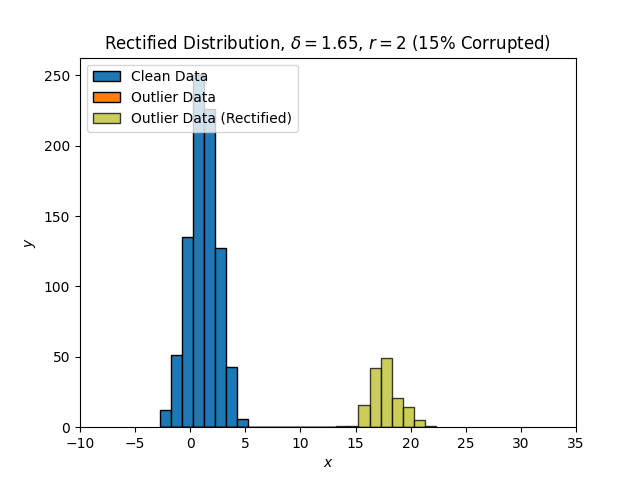}} \\
    \caption{Visualization of the evolution of the rectified distribution.}
\label{fig:simulation-mean-estimation-convex}
\end{figure}

\newpage
\subsection{Least Absolute Regression Details}
\label{app:lad-regression-details}

We provide additional results on the least absolute deviation simulations below. In Table \ref{tab:regression-results} we report the results of our experiment. We take the same experimental setting as in the regression example from the main text.

For each $(\delta, r)$ point in our experiments, we perform 100 random trials at 100 fixed seeds. In each trial, we run our optimization procedure for with a learning rate of $10^{-2}$ starting from 10 randomly initialized points and take the best loss. In each problem, we stop optimization when the number of iterations reaches 1000 or the change in the loss function between successive iterations is below a tolerance of $10^{-6}$. We initialize the slope $\beta$ according to a $N(0,1)$ distribution and the bias to zero.

The results are included in Table \ref{tab:regression-results}. As can be seen, our estimator outperforms all competing estimators, and attains significant outperformance for most corruption levels. 

\begin{table}[h!]
\centering
\caption{
We compared our estimator with several standard regression methods by evaluating the average loss on clean data points across various corruption levels. Mean performance is accompanied by a 95\% confidence interval over 100 random trials. In our evaluation, we set the threshold parameter of Huber regression to 1.5. The hyperparameters for our estimator, namely $\delta = 1.5$ and $r = 0.5$, remained constant across all corrupted levels. Error bars represent two standard deviation confidence intervals we have computed manually assuming normal errors.} 
\vspace{-2pt}
\resizebox{\columnwidth}{!}{
\begin{tabular}{@{}cccccc@{}}
\toprule
Corruption Level & {20\%} & {30\%} & {40\%} & {45\%} & {49\%} \\
\midrule
OLS & 1.569\,$\pm$\,0.041 & 1.702\,$\pm$\,0.043 & 1.773\,$\pm$\,0.049 & 1.803\,$\pm$\,0.052 & 1.822\,$\pm$\,0.054 \\
LAD & 1.808\,$\pm$\,0.131 & 1.875\,$\pm$\,0.055 & 1.892\,$\pm$\,0.059 & 1.903\,$\pm$\,0.061 & 1.908\,$\pm$\,0.062 \\
Huber & 1.642\,$\pm$\,0.042 & 1.776\,$\pm$\,0.045 & 1.842\,$\pm$\,0.053 & 1.868\,$\pm$\,0.056 & 1.882\,$\pm$\,0.059 \\
Ours & \textbf{0.657\,$\pm$\,0.638} & \textbf{0.529\,$\pm$\,0.430} & \textbf{0.680\,$\pm$\,0.467} & \textbf{0.802\,$\pm$\,0.597} & \textbf{0.866\,$\pm$\,0.619} \\
\bottomrule
\end{tabular}}
\label{tab:regression-results}
\end{table}

In Figure \ref{fig:regression-sensitivity-delta} and Figure \ref{fig:regression-sensitivity-r} below, we plot the sensitivity of our estimator across different values of $\delta$ and $r$. We see that there is a reasonable basin of good performance across both values. Each loss curve approaches the error of benchmark estimator or estimator with a traditional cost function as we let $\delta$ and $r$ tend toward values which recover these approaches.

\clearpage
\begin{figure}[!htbp]
    \centering
    \includegraphics[width=0.7\textwidth]{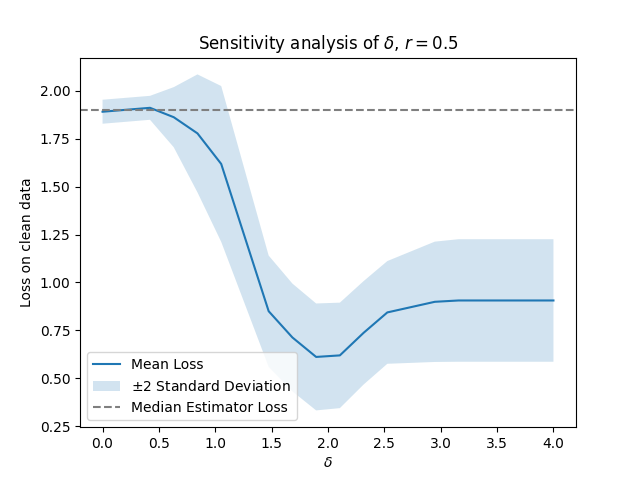}
    \caption{The sensitivity analysis of the loss on clean data with respect to $\delta$ of our estimator at $r=0.5$ on data with a 45\% corruption level on the least absolute regression task. }
    \label{fig:regression-sensitivity-delta}
\end{figure}

\begin{figure}[!htbp]
    \centering
    \includegraphics[width=0.7\textwidth]{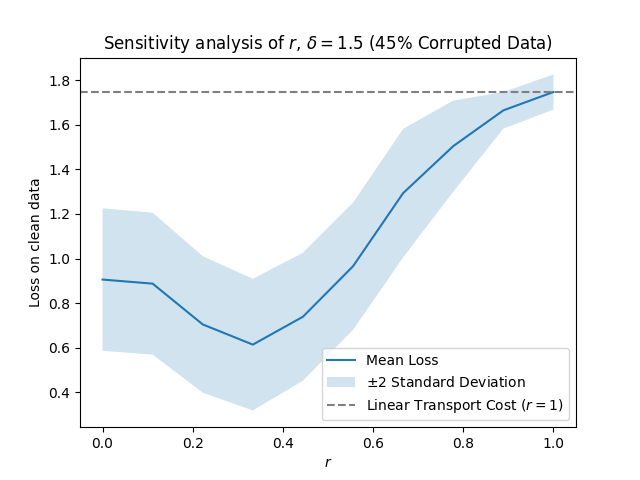}
    \caption{The sensitivity analysis of the loss on clean data with respect to $r$ of our estimator at $\delta=1.5$ on data with a 45\% corruption level on the least absolute regression task. }
    \label{fig:regression-sensitivity-r}
\end{figure}

\clearpage
\newpage
\subsection{Concave Cost Regression Simulation}
\label{app:simulation-regression-concave}

In the Figure \ref{fig:simulation-regression-concave}, we plot the evolution of the rectified distribution and line of best fit produced by our LAD regression estimator under various values of $\delta$ for the \textbf{concave} cost function with $r=0.5$.

\begin{figure}[!htbp]
    \centering
    \subfigure[]{\includegraphics[width=0.32\textwidth]{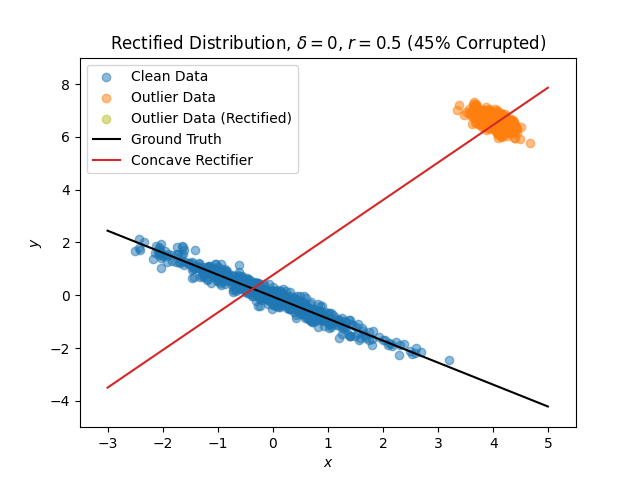}} 
    \subfigure[]{\includegraphics[width=0.32\textwidth]{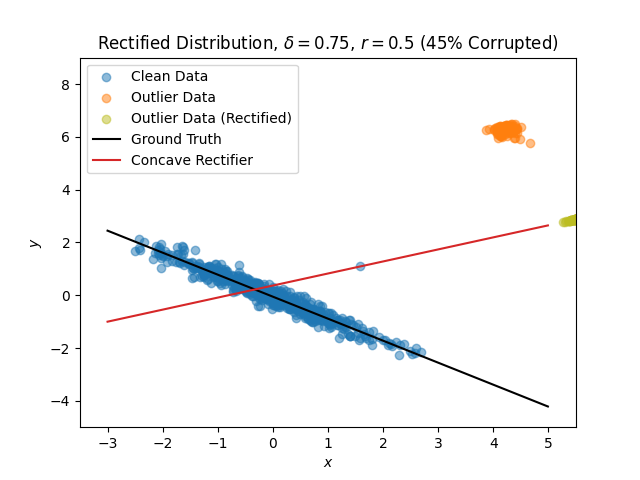}} 
    \subfigure[]{\includegraphics[width=0.32\textwidth]{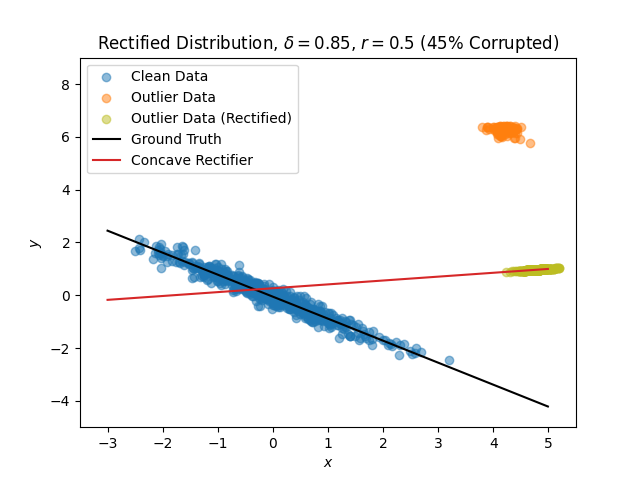}} \\ 
    \subfigure[]{\includegraphics[width=0.32\textwidth]{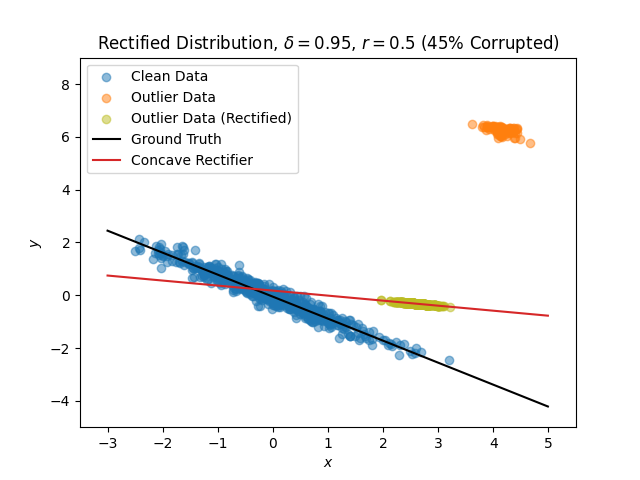}} 
    \subfigure[]{\includegraphics[width=0.32\textwidth]{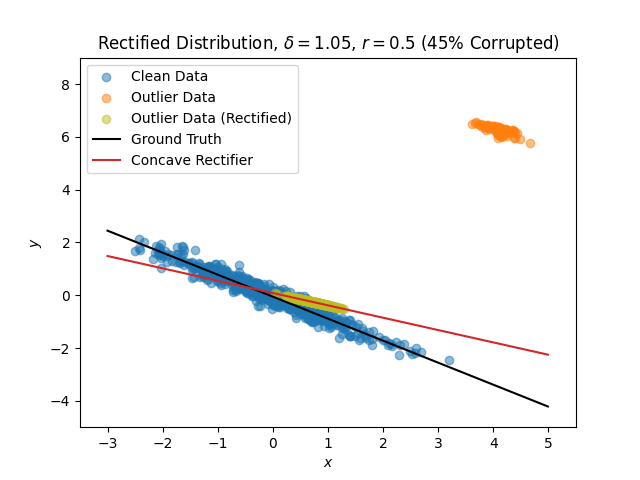}} 
    \subfigure[]{\includegraphics[width=0.32\textwidth]{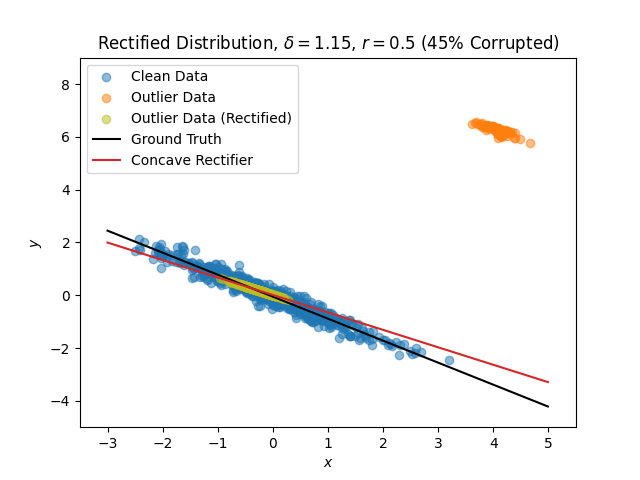}} \\
    \subfigure[]{\includegraphics[width=0.32\textwidth]{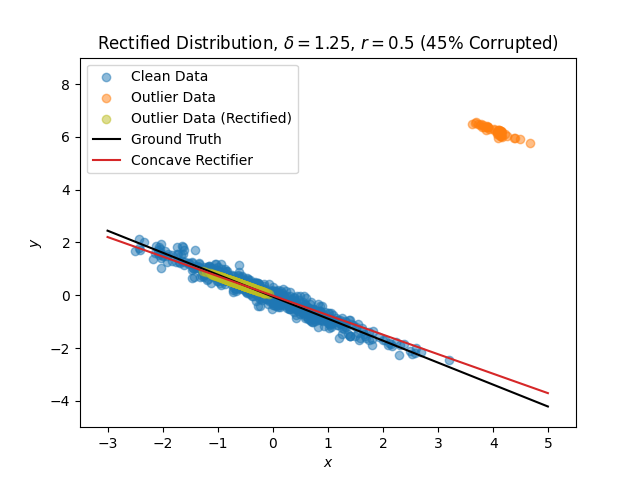}} 
    \subfigure[]{\includegraphics[width=0.32\textwidth]{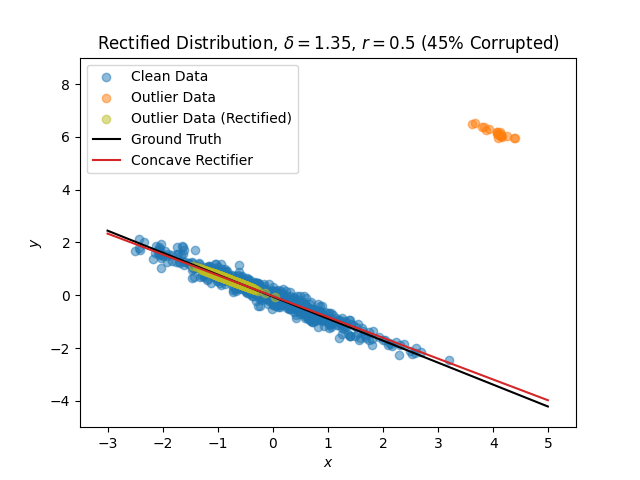}} 
    \subfigure[]{\includegraphics[width=0.32\textwidth]{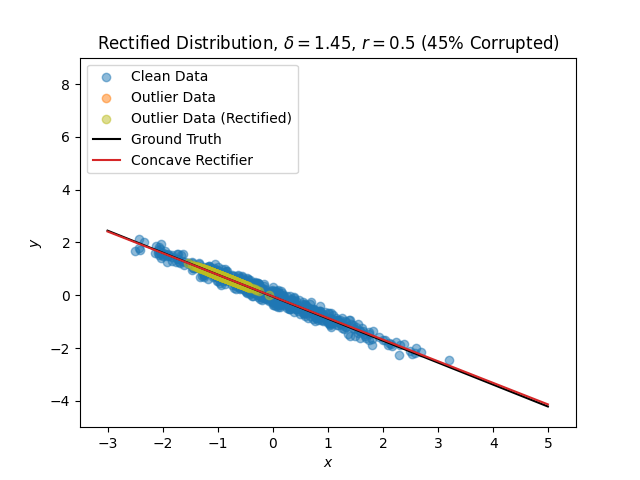}} \\
    \subfigure[]{\includegraphics[width=0.32\textwidth]{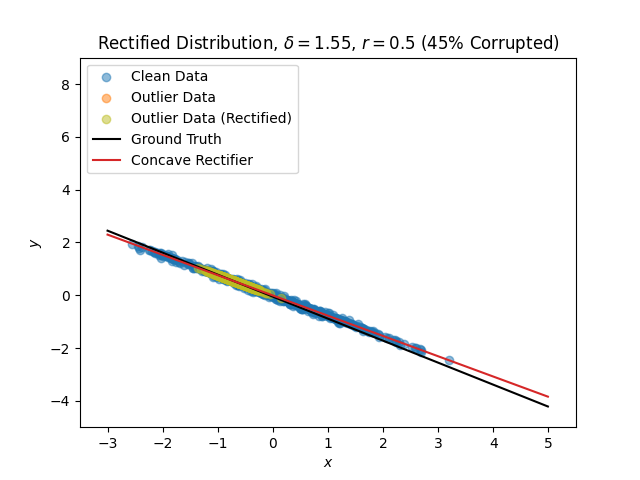}} 
    \subfigure[]{\includegraphics[width=0.32\textwidth]{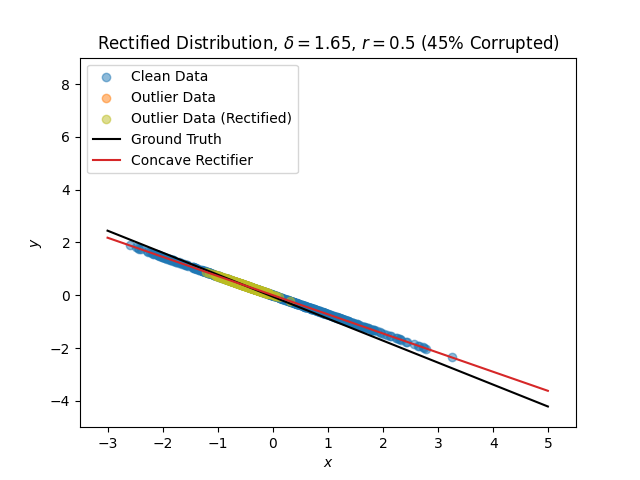}} 
    \subfigure[]{\includegraphics[width=0.32\textwidth]{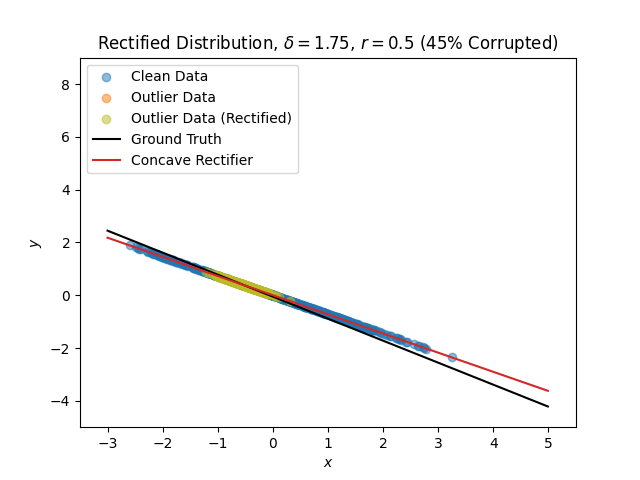}} \\
    \caption{Visualization of the evolution of the rectified distribution.}
\label{fig:simulation-regression-concave}
\end{figure}

\newpage
\subsection{Convex Cost Regression Simulation}
\label{app:simulation-regression-convex}

In the Figure \ref{fig:simulation-regression-convex}, we plot the evolution of the rectified distribution and line of best fit produced by our LAD regression estimator under various values of $\delta$ for the \textbf{convex} cost function with $r=2.0$.

\begin{figure}[!htbp]
    \centering
    \subfigure[]{\includegraphics[width=0.32\textwidth]{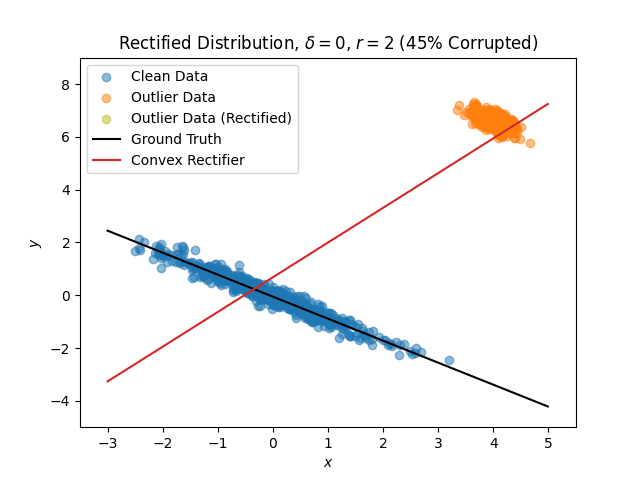}} 
    \subfigure[]{\includegraphics[width=0.32\textwidth]{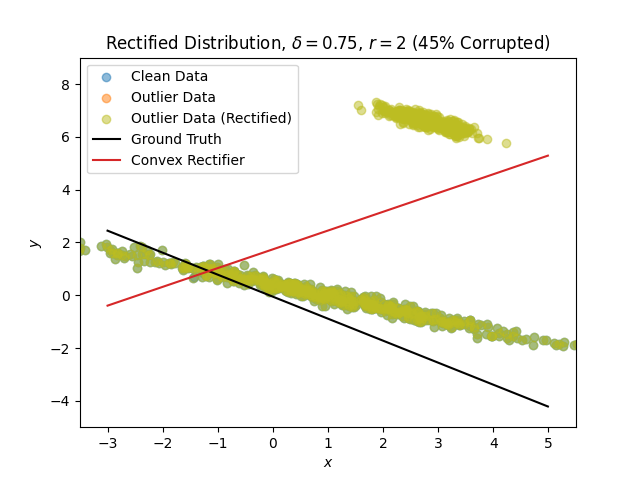}} 
    \subfigure[]{\includegraphics[width=0.32\textwidth]{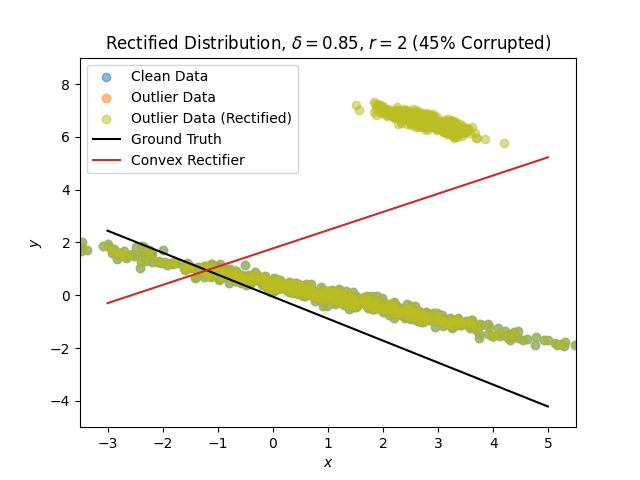}} \\ 
    \subfigure[]{\includegraphics[width=0.32\textwidth]{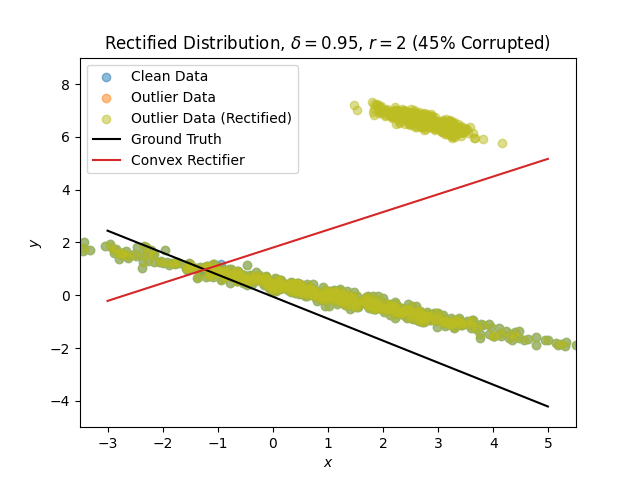}} 
    \subfigure[]{\includegraphics[width=0.32\textwidth]{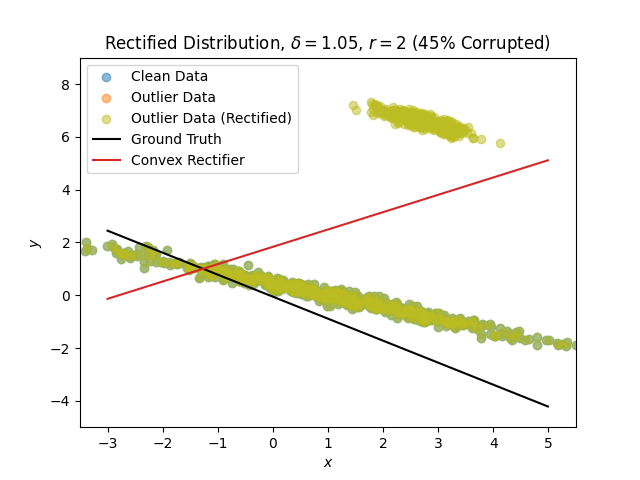}} 
    \subfigure[]{\includegraphics[width=0.32\textwidth]{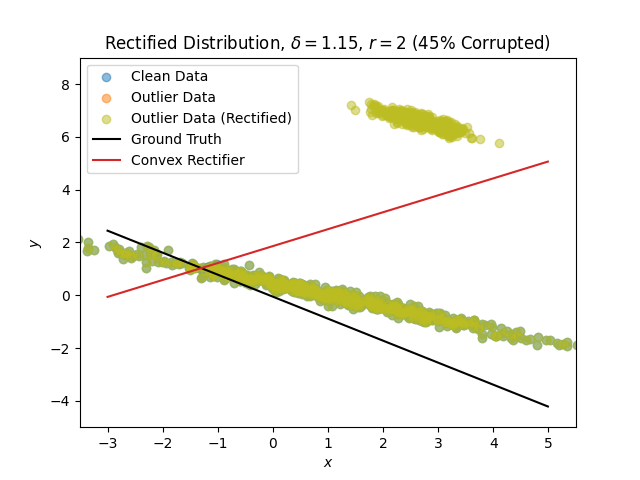}} \\
    \subfigure[]{\includegraphics[width=0.32\textwidth]{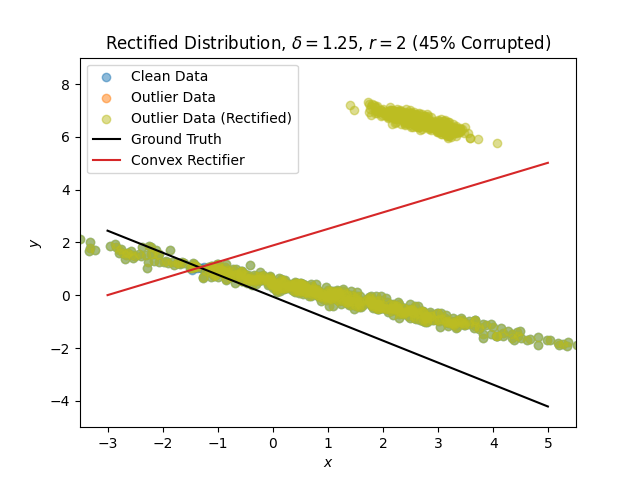}} 
    \subfigure[]{\includegraphics[width=0.32\textwidth]{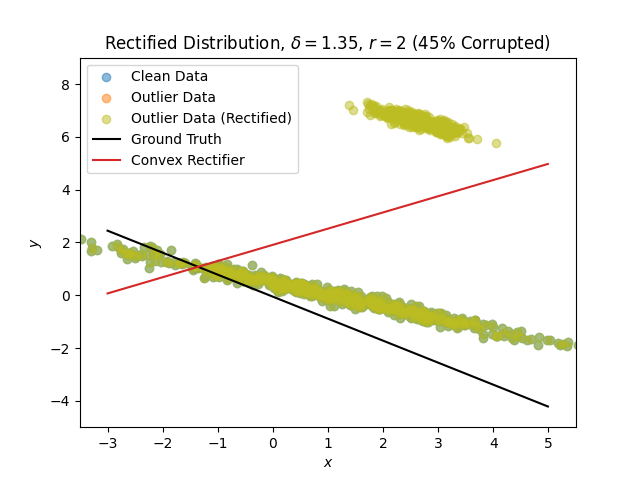}} 
    \subfigure[]{\includegraphics[width=0.32\textwidth]{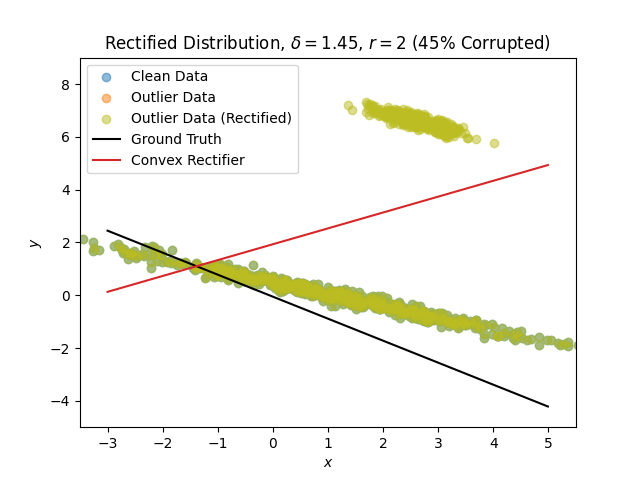}} \\
    \subfigure[]{\includegraphics[width=0.32\textwidth]{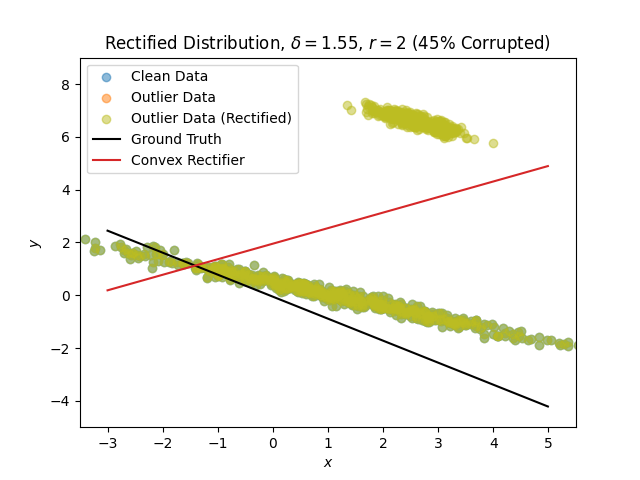}} 
    \subfigure[]{\includegraphics[width=0.32\textwidth]{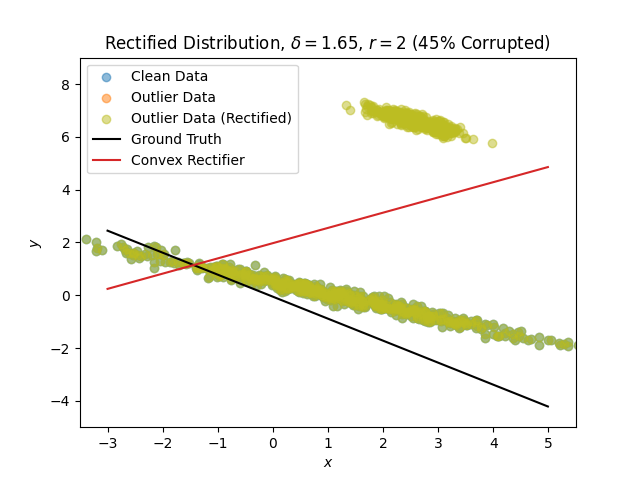}} 
    \subfigure[]{\includegraphics[width=0.32\textwidth]{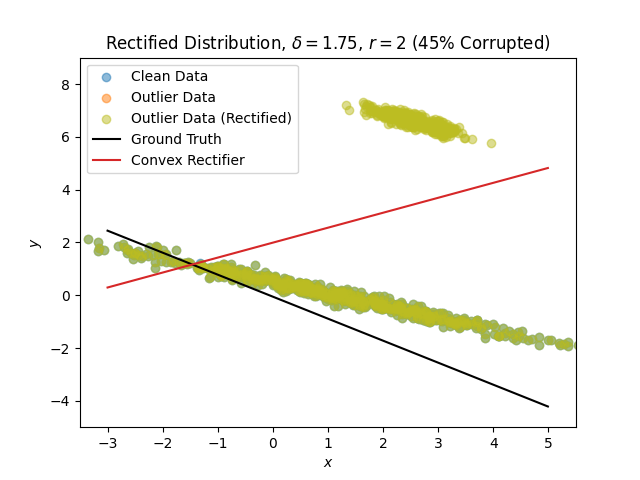}} \\
    \caption{Visualization of the evolution of the rectified distribution.}
\label{fig:simulation-regression-convex}
\end{figure}

\newpage
\section{Additional Volatility Modeling Details \& Experiments}

\subsection{Additional Volatility Surface Experiment}
\label{app:options-additional-experiments}

We have performed an additional experiment to demonstrate the practicality of our method and the selection of the parameter $\delta$ with the realistic options outlier data set of our empirical study. This experiment shows that our method leads to better out-of-sample results on complex data and does not require the availability of clean data. The problem is the estimation of the volatility surface in the presence of outliers for the option data.

Our experiment proceeds as follows. In this experiment, the data set is organized as a list of (train day, test day) tuples. Train and test days are consecutive days for the same underlying security. On the train day, we estimate our model with different methods. On the consecutive trading day (that is, the test day) we use the estimated model to obtain out-of-sample evaluation metrics for the surface (MAPE and $\dsg$). Note that this approach follows closely how volatility surfaces are used in practice by traders. This allows us to obtain the out-of-sample MAPE and $\dsg$ as a function of $\delta$. The prior Appendix section provides the details of our empirical setup.

To perform cross-validation, we split the train day into a training and validation sample, which we use to obtain estimates of MAPE and $\dsg$ as a function of $\delta$. We select the delta that optimizes MAPE and $\dsg$ and fit on the entire train day. Then, on the out-of-sample test day, we evaluate our method for the optimally selected $\delta$. Thus, to be clear, for the estimated optimal $\delta$, we estimate the surface on one day, and evaluate it completely out-of-sample on the consecutive trading day.

The results are collected in the table below. The results of our experiment show that our approach outperforms competing approaches and that cross-validation improves upon the fixed $\delta$ results reported in the main text. The below tables show the out-of-sample MAPE and $\dsg$ averaged over all out-of-sample test days.

\begin{table}[!htbp]
    \centering
    \scalebox{0.9}{
    \begin{tabular}{ccccccc} 
    \toprule
    Model MAPE & $0.5\%$ Quantile & $5\%$ Quantile & Median & Mean & $95\%$ Quantile & $99.5\%$ Quantile \\
    \midrule
    KS & 0.047 & 0.088 & 0.236 & 0.274 & 0.563 & 1.033 \\
    2SKS & 0.036 & 0.065 & 0.172 & 0.216 & 0.504 & 0.999 \\
    Ours ($\delta=10^{-2}$) & 0.037 & 0.065 & 0.170 & 0.203 & 0.440 & 0.841 \\
    Ours (CV) & 0.036 & 0.064 & 0.170 & 0.203 & 0.437 & 0.837 \\
    \midrule
    Model $\nabla\hat{S}$ & $0.5 \%$ Quantile & $5\%$ Quantile & Median & Mean & $95\%$ Quantile & $99.5\%$ Quantile \\
    \midrule
    KS & 0.323 & 1.616 & 14.042 & 19.653 & 59.534 & 105.361 \\
    2SKS & 0.036 & 0.121 & 1.759 & 6.823 & 30.020 & 76.652 \\
    Ours ($\delta=10^{-2}$) & 0.035 & 0.112 & 1.579 & 5.880 & 25.521 & 70.708 \\
    Ours (CV) & 0.033 & 0.104 & 1.248 & 4.522 & 19.645 & 55.261 \\
    \bottomrule
    \end{tabular}
    }
    \caption{Median and quantiles of the distribution of MAPE or $\nabla \hat S$ across all samples for KS, 2SKS, and our methods. Our  estimator achieves significantly lower MAPE and $\nabla \hat S$ than the benchmark KS estimator and improves upon the 2SKS estimator. Our CV procedure improves even further.}
\end{table}

\newpage
\subsection{Kernelized Regression Experiment Details}
\label{app:options-details}

\paragraph{Data set.} Our data set is derived from European-style US equity options implied volatilities from the years 2019--2021. The data come from the OptionMetrics IvyDB US database accessed through Wharton Research Data Services (WRDS). Implied volatilities created by OptionMetrics are calculated using the Black-Scholes formula using interest rates derived from ICE IBA LIBOR rates and settlement prices of CME Eurodollar futures. Option prices are set to the midpoint of the best bid and offer quoted for the option captured at 3:59 PM ET. 

We developed our options surface estimator code in partnership with a major global provider of financial data, indices, and analytical products. This provider had previously identified 1,970 outlier-containing option chains for which they found IVS estimation challenging. These outlier-containing option chains were selected from all listed US equities on days within 2019--2021, and they comprise the data set used in our experiments. Because the code is proprietary, and the data set is a proprietary selection of surfaces, we cannot publicly release them. However, the data can be found in WRDS, and we provide all implementation details in this appendix section.

\paragraph{Implementation.} We implement the KS estimator and our estimator in PyTorch. For the 2SKS method, options in an options chain are removed if their implied volatilities fall outside of the region $[q_{0.25} - 1.5 \cdot IQR, q_{0.75} + 1.5 \cdot IQR]$, where $q_{0.25}$ is the 25\% quantile of the implied volatilities in the options chain, $q_{0.75}$ is defined similarly, and IQR is the interquartile range of the implied volatilities in the options chain. The surface is then estimated upon the remaining options via the KS method. For our estimator, we estimate the surface by subgradient descent with learning rate $\alpha=10^{-1}$ and $r=0.5$, terminating when the relative change in loss reaches $10^{-5}$. We denote the test data set's option's implied volatilities $y'_i$ and the estimated surface's implied volatility for option $i$ as $\hat S(x'_i)$. We begin each iteration by setting $\delta = \ell(x,y) / 2\norm{\theta}^r$, where $\ell$ is the loss function of Theorem \ref{prop:quantile}. We initialize $\theta$ to those of the benchmark estimator. We perform 5 trials per options chain. In each trial, we randomly select a different 80\% train and 20\% test set, estimate the surface on the train set, and record the two losses (MAPE and $\nabla \hat S$) on the test set. We depict all losses gathered in this way via the histograms of Section \ref{sec:options}. Experiments are run on a server with a Xeon E5-2398 v3 processor and 756GB of RAM. 

\paragraph{Cross-validation.} The cross-validation (CV) procedure is standard. We sample 4/5 of the training surface as a CV training set and leave the remaining 1/5 as the CV validation set. Five such splits are made and used to estimate the MAPE of $\delta \in \{10^{-6}, 10^{-5}, 10^{-4}, 10^{-3}, 10^{-2}, 10^{-1}, 1, 2, 5, 10\}$. The MAPEs of each $\delta$ are averaged across the five CV runs and the $\delta$ with the lowest MAPE is then used to fit our estimator on the entire option chain.

\paragraph{Losses.} These losses are chosen for their importance in finance and option trading and valuation. MAPE is a preferable error metric for volatility surfaces, as option chains contain options with implied volatilities which can differ in orders of magnitude, and MAPE weighs equally the contribution to error of options with such volatilities. The discretized surface gradient $\dsg$ measures surface smoothness of the estimated volatility surface by computing a discrete gradient across a grid of points in the (time to exporation, strike price) plane. Smooth surfaces are important for several reasons: (1) smoother surfaces allow more stable interpolation or extrapolation of implied volatility for options with market prices which are not directly observable; (2) smoother surfaces produce smaller adjustments for option hedging positions constructed from measures of the IVS, which ultimately lowers hedging transaction costs; (3) smoother surfaces are less likely to have internal arbitrages between options, which are implied by sharp discontinuities in the IVS, putting the surface more in line with established asset pricing theory; and (4) smoother surfaces produce more stable estimates of financial institutions' derivative exposures, which are required to be consistent for financial regulatory and reporting requirements. These measures are standard in options IVS estimation. We define the option implied volatility surface $\hat{S}(x)$ as the function from \textit{option feature vectors} $x$ to \textit{estimated implied volatilities} $y$. That is, for some given $x$, we have $y:= \hat S(x)$. The accuracy measure MAPE is defined as follows for some test set of options $\{(x'_i, y'_i)\}_{i=1}^n$:
\begin{align*}
    &l_{\mathrm{MAPE}}(\hat S, \{y'_i\}_{i=1}^n) = \frac{1}{n} \sum_{i=1}^n \frac{|\hat S(x'_i) - y'_i|}{|y'_i| + c}
\end{align*}
where $c$ is a small numerical stability factor we set to $0.01$. This is a suitable choice, as $\geq$99\% of implied volatilities are greater than 0.1. 

The smoothness measure $\dsg$ is defined as follows:
\begin{align*}
    &\nabla \hat S = \sum_{i=0}^{p-1} \sum_{j=0}^{s-1} \frac{(\hat S_{\tau_{i+1},\Delta_j} - \hat S_{\tau_i,\Delta_j})^2 + (\hat S_{\tau_i,\Delta_{j+1}} - \hat S_{\tau_i,\Delta_j})^2}{2}
\end{align*}
In the expression above, $\hat S_{\tau_i,\Delta_j} := \hat S(\log \tau_i, u(\Delta_j), z)$ where $z = 1$ if $\Delta_j > 0$ and 0 otherwise. Here, each tuple ($\tau_i$, $\Delta_j$) is a member of the Cartesian product of $\{\tau_j : j \in [p]\}$ and $\{\Delta_k : k \in [s]\}$, where "$[m]$" denotes the set of natural numbers from 0 to $m$, and $p=10$ and $s=39$. By convention, OptionMetrics selects the following 11 discretization points for $\tau$, which we follow:
\begin{align*}
    \tau &\in \{10, 30, 60, 91, 122, 152, 182, 273, 365, 547, 730\}.
\end{align*}
For $\Delta$, we use the following 40 points, which are a superset of the set of discretization points which OptionMetrics selects:
\begin{align*}
    \Delta &\in \{ -1.0 + 0.05m : m \in [40] \} \setminus \{0\}.
\end{align*}

\paragraph{Example implied volatility surfaces.} In the following surface figures, we display the significant outlier rectification effect of our estimator on a sample option chain which contains outliers. These examples serves to illustrate the efficacy of the automatic outlier rectification mechanism and to provide intuition for the surface fitting problem.

\begin{figure}[h!]
    \centering
    \includegraphics[width=1\textwidth]{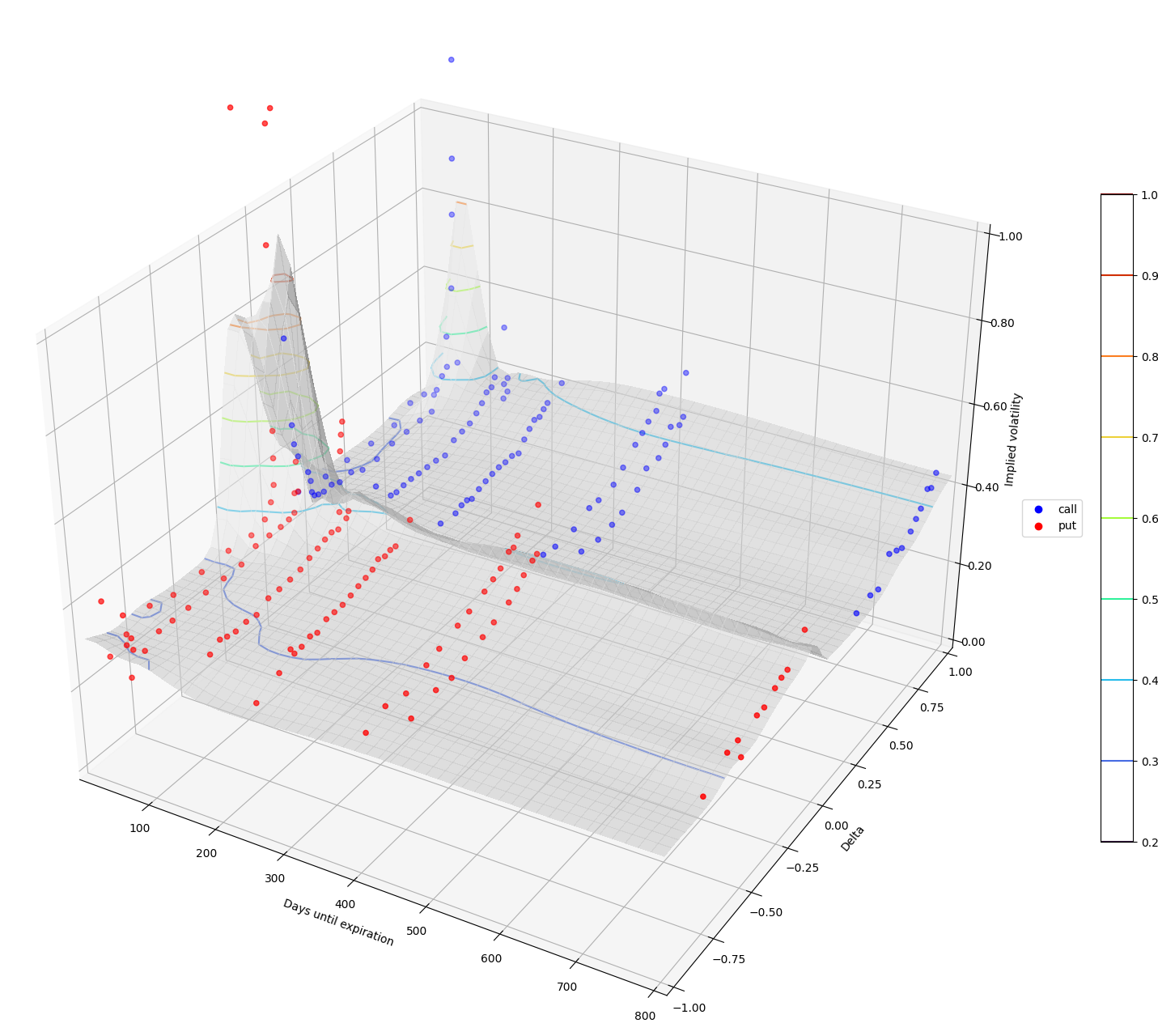}
    \label{fig:app-ks-surface}
    \caption{This plot depicts the option implied volatility surface estimated by the benchmark KS method on the options, which are depicted as blue and red dots. Blue denotes call options and red denotes put options. Outliers are present in this option chain near 0-100 days until expiration for deltas around $\Delta=-0.2$ (approximately four put option outliers and 1 call option outlier) and $\Delta=1.0$ (approximately 3 call option outliers). These outliers heavily corrupt the fitted surface, wildly distorting the values around these deltas and causing a poor fit for surrounding options which are not outliers. The surface reaches values of $~90\%$ annualized implied volatility and has a surface gradient $\dsg$ of 1.48.}
\end{figure}

\begin{figure}[h!]
    \centering
    \includegraphics[width=1\textwidth]{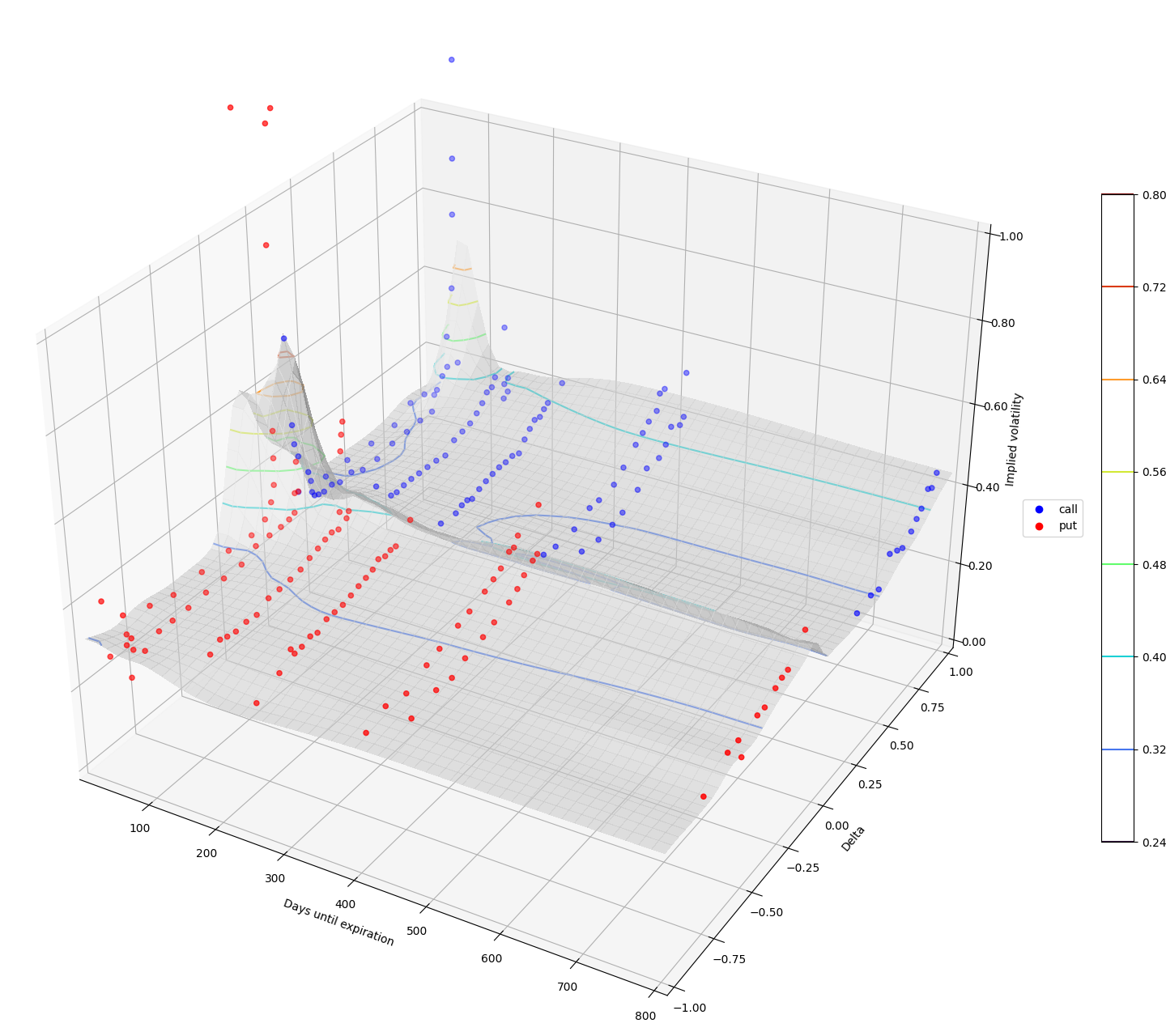}
    \label{fig:app-wks-surface}
    \caption{This plot depicts the option implied volatility surface $\hat S(x)$ estimated by our estimator with $\delta=1$ on the displayed options, which are depicted as blue and red dots. Blue denotes call options and red denotes put options. Outliers are present in this option chain near 0-100 days until expiration for deltas around -0.2 and 1.0. However, these outliers do not corrupt the fitted surface, which fits options nearby the outlying options quite well as compared to the surface fit by the benchmark method displayed in Figure \ref{fig:app-ks-surface}. Additionally, in contrast to the surface fit by the benchmark KS method, the surface fit by our estimator only reaches values of $\sim65$\% annualized implied volatility and has a surface gradient $\dsg$ of 0.69, a smoothness improvement of over 100\%. This example illustrates the capability of our estimator to rectify outliers.}
\end{figure}

\clearpage
\newpage
\subsection{Deep Learning Experiment Details}
\label{app:options-deep-learning}

The full details and description of the deep learning local volatility surface estimation experiment are given in this section.

\paragraph{Background.} IVS estimation provides an approximate surface fit for the volatilities implied by option market prices. However, market participants often require surfaces which fit as closely as possible to the market-implied volatilities.\footnote{For example, banks pricing some kinds of exotic options require such surfaces to avoid arbitrage opportunities in surfaces that their internal trading teams may exploit, as outlined in \cite{Hull_Basu_2022a}. Participants pricing options may moreover simply prefer to accept the assumption that the market prices are usually correct, and thus prefer closely fitting surfaces.} One of the most popular methods used in mathematical finance to estimate surfaces obeying such a constraint under a reasonable stochastic process model for the evolution of asset prices is the \textit{local volatility} or \textit{implied volatility function} model introduced by \cite{rubinstein1994implied}, \cite{dupire1994pricing}, and \cite{derman1996local}. The local volatility model assumptions imply that a function $\sigma(K,T)$ exists which gives an analytic expression for the volatility $\sigma$ at a given option strike price $K$ and option time to expiration $T$. This function is known as Dupire's formula after its originator. Enforcing Dupire's formula in the estimation of a surface results in a local volatility surface, which is of great use to financial market participants. However, estimating such a surface becomes significantly complicated if market data contains corruptions or unrealistic outliers which are unlikely to occur in the future.\footnote{This may occur due to low liquidity, which allows market manipulation or adversarial trading to produce unrealistic prices.} As human intervention for outlier rectification in a large number of IVS fitting tasks is unrealistic, an automatic method for rectifying outliers in models fitting local volatility surfaces is required.

\paragraph{Data.} We utilize the data set of \cite{chataigner2020deep}, which contains 17 (options chain, IVS) pairs taken from the German DAX index in August 2001. Options chains are captured from daily options market data, and IVSes are estimated for a grid of strikes and maturities via a trinomial tree calibrated by the method of \cite{crepey2002calibration}. In addition to \cite{chataigner2020deep}, this data set is also used in \cite{crepey2002calibration} and \cite{crepey2004delta}. In this data set's usage in the literature, the authors train and test using the price surface, which is given by a nonlinear transformation of the IVS. Consequently, we also use the price surface for fitting; however, we report results after transforming back to the IVS. To corrupt the data set, a percentage of the options in each options chain are selected. We refer to this percentage, the corruption level, as $\epsilon$. The prices of these options $x$ are then corrupted by replacing them with the values $10x$. 

\paragraph{Benchmark Method.} As a benchmark, we apply the state-of-the-art deep learning approach from \cite{chataigner2020deep}, the Dupire neural network. In this approach, a neural network estimates the price surface under the local volatility model assumptions encoded in Dupire's formula, and additional no-arbitrage constraints which are useful for empirical applications. These conditions are enforced using different approaches: hard, split, and soft constraints. Hard constraints enforce the conditions by separating the network into subnetworks which have separate input layers for the variables involved in each of the conditions; conditions such as convexity and non-negativity are then enforced by projection steps during optimization and the proper choice of activation functions. Hard constraints thus learn a function which explicitly satisfies the conditions. Soft constraints enforce these conditions instead by penalizing violations of them at each of the training points observed from the market. Hybrid constraints split the network into subnetworks which have separate input layers for each of the constrained variables as in hard constraints, and utilize soft constraints for enforcing the Dupire and no-arbitrage conditions. The function learned only approximately satisfies the conditions in theory, but in practice the approximation is very good (see \cite{chataigner2020deep} for details). Because these approaches estimate the price surface, we also estimate the price surface in this application; however, the local volatility surface is easily recovered from the price surface by a nonlinear transformation.

\paragraph{Robust Neural Network Estimation Problem \& Implementation.} To estimate price and volatility surfaces under the induced corruption, we estimated the benchmark approach using our statistically robust estimator. For each options chain, we tune $\delta$ via cross-validation: for each day, we sample 80\% of the training set without replacement as a cross-validation (CV) training set, and use the remaining 20\% of the training set as a CV validation set. We then train our robust neural network estimator on the CV training set with $\delta \in \{10^{-3}, 10^{-2}, ..., 10^3\}$. For each $\delta$, we then compute the MAPE on the CV validation set. We then select the $\delta$ with the lowest MAPE on CV validation set, and we use this $\delta$ to train our estimator on the entire training set. This produces our final estimated model. We base our implementation on the implementation of \cite{chataigner2020deep} (see commit e03da98 at \href{https://github.com/mChataign/DupireNN}{this url}), which is BSD 3-clause licensed. The experiments were run on a server with a Xeon E5-2398 v3 processor with 756GB of RAM.

\paragraph{Results.} To maintain comparability with the benchmark, we test and report the same metrics reported by \cite{chataigner2020deep}, namely, the RMSE and MAPE for the surface estimation. The metrics for each day and model are repeated for three trials with different random weight initialization seeds. The results are displayed in Table \ref{tab:options-stats-dl}. The first panel displays the RMSE and MAPE for each model using the different constraint techniques, averaged across all trials and corruption levels. The second panel displays the RMSE and MAPE for each corruption level averaged across all models. Our approach outperforms the baseline approach across all averages, and does so more clearly as the corruption level increases, despite the strong regularizing effect of the no-arbitrage constraints and enforcement of Dupire's formula. Our improvement is present across all models, and is especially impactful for the most accurate model utilizing soft arbitrage constraints. For this model, the out-of-sample test error in terms of RMSE and MAPE are reduced by 33\% and 34\%, respectively. This experiment displays the efficacy of our estimator in a state-of-the-art deep learning model for option price surface modeling.

\clearpage

\end{document}